\documentclass[letterpaper]{article} 
\usepackage{aaai2026}  
\usepackage{times}  
\usepackage{helvet}  
\usepackage{courier}  
\usepackage[hyphens]{url}  
\usepackage{graphicx} 
\urlstyle{rm} 
\usepackage{natbib}  
\usepackage{caption} 
\frenchspacing  
\setlength{\pdfpagewidth}{8.5in} 
\setlength{\pdfpageheight}{11in} 
%
\usepackage{algorithm}
\usepackage{algorithmic}


\usepackage{amsmath,amsfonts,bm}


\def \V[#1]{{V_{i,#1}^{\pi, P}}}
\def \Q[#1]{{Q_{i,#1}^{\pi, P}}}
\def \RV[#1]{{V_{i,#1}^{\pi, \sigma}}}
\def \RQ[#1]{{Q_{i,#1}^{\pi, \sigma}}}
\def \BRV[#1]{{V_{i,#1}^{\star, \pi_{-i}, \sigma}}}
\def \BRQ[#1]{{Q_{i,#1}^{\star, \pi_{-i}, \sigma}}}
\def \VK[#1]{{V_{i,#1}^{k}}}
\def \QK[#1]{{Q_{i,#1}^{k}}}
\def \RKV[#1]{{V_{i,#1}^{\pi^k, \sigma}}}
\def \RKQ[#1]{{Q_{i,#1}^{\pi^k, \sigma}}}
\def \BRKV[#1]{{V_{i,#1}^{\star, \pi_{-i}^k, \sigma}}}
\def \BRKQ[#1]{{Q_{i,#1}^{\star, \pi_{-i}^k, \sigma}}}


\def \uw[#1]{{\overline{w}_{i,#1}^k}}
\def \lw[#1]{{\underline{w}_{i,#1}^k}}


\def \sunP{{\mathcal{U}_\rho^{\sigma_i}(P^0_{h,s,a})}}




\newcommand{\eq}[1]{$#1$}






\def\eqref#1{equation~\ref{#1}}



\def\1{\bm{1}}








\def\va{{\bm{a}}}

\def\vl{{\bm{l}}}



\DeclareMathAlphabet{\mathsfit}{\encodingdefault}{\sfdefault}{m}{sl}
\SetMathAlphabet{\mathsfit}{bold}{\encodingdefault}{\sfdefault}{bx}{n}

\def\gA{{\mathcal{A}}}

\def\gS{{\mathcal{S}}}

\def\gU{{\mathcal{U}}}



\def\sI{{\mathbb{I}}}

\def\sR{{\mathbb{R}}}



\newcommand{\E}{\mathbb{E}}



\DeclareMathOperator*{\argmax}{arg\,max}
\DeclareMathOperator*{\argmin}{arg\,min}

\usepackage{newfloat}
\usepackage{listings}
\DeclareCaptionStyle{ruled}{labelfont=normalfont,labelsep=colon,strut=off} 
\lstset{%
	basicstyle={\footnotesize\ttfamily},
	numbers=left,numberstyle=\footnotesize,xleftmargin=2em,
	aboveskip=0pt,belowskip=0pt,%
	showstringspaces=false,tabsize=2,breaklines=true}
\floatstyle{ruled}
\newfloat{listing}{tb}{lst}{}
\floatname{listing}{Listing}
%
\pdfinfo{
/TemplateVersion (2026.1)
}

\usepackage[utf8]{inputenc} 
\usepackage{url}            
\usepackage{booktabs}       
\usepackage{amsfonts}       
\usepackage{nicefrac}       
\usepackage{microtype}      
\usepackage{microtype}
\usepackage{subfigure}
\usepackage{subcaption}
\usepackage{amsmath}
\usepackage{amssymb}
\usepackage{mathtools}
\usepackage{amsthm}
\usepackage{algorithm}
\usepackage{algorithmic}
\usepackage[T1]{fontenc}
\setcounter{secnumdepth}{1}
\theoremstyle{plain}
\newtheorem{theorem}{Theorem}[section]
\newtheorem{proposition}[theorem]{Proposition}
\newtheorem{lemma}[theorem]{Lemma}
\newtheorem{corollary}[theorem]{Corollary}
\theoremstyle{definition}
\newtheorem{definition}[theorem]{Definition}
\newtheorem{assumption}[theorem]{Assumption}
\theoremstyle{remark}

\newtheorem{example}[theorem]{Example}

\newcommand{\alglinelabel}{%
  \addtocounter{ALC@line}{-1}
  \refstepcounter{ALC@line}
  \label
}

\title{Distributionally Robust Online Markov Game with Linear Function Approximation}
\author {
    Zewu Zheng, Yuanyuan Lin \\
}
\affiliations {
    Department of Statistics and Data Science\\
    The Chinese University of Hongkong\\
    zhengzw@link.cuhk.edu.hk, ylin@sta.cuhk.edu.hk 
}

\begin{document}

\maketitle

\begin{abstract}
The sim-to-real gap, where agents trained in a simulator face significant performance degradation during testing, is a fundamental challenge in reinforcement learning. Extensive works adopt the framework of distributionally robust RL, to learn a policy that acts robustly under worst case environment shift. Within this framework, our objective is to devise algorithms that are sample efficient with interactive data collection and large state spaces. By assuming $d$-rectangularity of environment dynamic shift, we identify a fundamental hardness result for learning in online Markov game, and address it by adopting minimum value assumption. Then, a novel least square value iteration type algorithm, DR-CCE-LSI, with exploration bonus devised specifically for multiple agents, is proposed to find an $\varepsilon-$approximate robust Coarse Correlated Equilibrium(CCE). To obtain sample efficient learning, we find that: when the feature mapping function satisfies certain properties, our algorithm, DR-CCE-LSI, is able to achieve $\epsilon-$approximate CCE with a regret bound of $\mathcal{O}\{dH\min\{H,\frac{1}{\min\{\sigma_i\}}\}\sqrt{K}\}$, where $K$ is the number of interacting episodes, $H$ is the horizon length, $d$ is the feature dimension, and $\sigma_i$ represents the uncertainty level of player $i$. Our work introduces the first sample-efficient algorithm for this setting, matches the best result so far in single agent setting, and achieves minimax optimal sample complexity in terms of the feature dimension $d$. Meanwhile, we also conduct simulation study to validate the efficacy of our algorithm in learning a robust equilibrium.

\end{abstract}

\section{Introduction}

Reinforcement learning (RL) has experienced significant advancements across various industrial applications, including video games, autonomous driving, and large language models, owing to its capacity to learn complex decision-making policies directly from data \cite{sutton2018reinforcement, mnih2015human, silver2017mastering, ouyang2022training}. The complexity of RL tasks increases substantially when multiple agents operate in the same environment, a scenario commonly referred to as multi-agent reinforcement learning (MARL) \cite{vinyals2019grandmaster, berner2019dota}. In MARL, the Markov Game framework \cite{littman1994markov, shapley1953stochastic} is widely adopted, where each agent optimizes its individual reward function by learning a Markov policy that serves as the best response to the policies of other agents. The solution to these joint policies is typically framed as a stable equilibrium, such as Nash equilibrium \cite{nash2024non} or coarse correlated equilibrium \cite{aumann1987correlated}. Extensive research has investigated learning equilibria in Markov Games across various settings, proposing algorithms with theoretical guarantees on sample efficiency.

\paragraph{Robustness is essential}A critical challenge in reinforcement learning (RL) is the sim-to-real gap, characterized by discrepancies between simulated training environments and real-world deployment settings, which often result in degraded performance \cite{koos2012transferability, jiang2021simgan}. This issue has motivated extensive research into distributionally robust reinforcement learning, where the objective is to develop policies that maintain robustness against variations in environmental dynamics \cite{iyengar2005robust, nilim2005robust, Shi2023_Curious_prize_generative, liu2022distributionally_q_learning}. The challenge is particularly pronounced in MARL, as highlighted by \cite{shi2024multi-generative-joint}, where the sensitivity of equilibrium solutions to minor environmental perturbations exacerbates the problem.

\paragraph{Large state space with linear function approximation} When state and action space are large, function approximation are applied to mitigate the curse of dimensionality. The linear MDP framework, where the transition kernel and reward function are represented as linear functions of low-dimensional feature vectors, is analytically tractable and extensively studied by \cite{jin2020online_linear, yang2020_linear_bandit, cisneros2023finite, wang2021linearMDP_Generative, he2023online_linear_minimax, li2024linear_mixture} in single agent RL. However, research on linear Markov Games remains limited \cite{chen2022linear_mixture_marl, xie2020two-player-simultaneous-move}, particularly for multi-player general-sum Markov Games \cite{cisneros2023finite}. To date, the study of online robust general-sum Markov Games with linear function approximation remains an open area, representing a critical gap in the existing literature. As a result, it is natural to ask:

    \begin{quote}
        \centering
        \emph{Can we design a provably sample efficient algorithms for online robust general sum Markov Game with linear function approximation?}
    \end{quote}

\paragraph{Our main contributions}
    \begin{itemize}
    \item We investigate the inherent hardness of online learning in robust Markov games by constructing a specific two-player general-sum Markov game and establishing a lower bound that demonstrates the impossibility of learning without additional assumptions in this context. Subsequently, we adopt the minimum value assumption introduced by \cite{lu2024Dr_Interactive_Data_Collection}, providing analysis of its applicability.
    
        \item 
    We address the additional challenges posed by distributionally robust Markov games compared to the single-agent setting \cite{liu2024DRL_online_linear, liu2024upper}. From an algorithmic perspective, we introduce agent-specific bonus terms to ensure adequate exploration and maintain each agent's own risk preference. From a technical standpoint, when applying concentration arguments to establish uniform upper bounds for the CCE, we use the Find-CCE subroutine. This approach handles the issue that the CCEs of general-sum games are unstable (i.e., not Lipschitz continuous) with respect to changes in the payoff matrices, while remaining computationally feasible.
         
        \item We develop an algorithm for interactive data collection, utilizing ridge regression and deriving an instance-dependent upper bound via refined analysis. By exploiting the shrinkage properties of the robust value function and proposing a general assumption on feature mapping and transition kernel properties, our algorithm achieves a regret of order $\mathcal{O}\{dH\min\{H,\frac{1}{\min\{\sigma_i\}}\}\sqrt{K}\}$. This bound exhibits polynomial dependence on all key problem parameters and is minimax optimal with respect to the feature dimension $d$. The algorithm’s efficacy is corroborated through further simulation studies.
    \end{itemize}

\section{Related work}
\subsection{Robust online linear MDPs}
The setting of online linear Markov Decision Processes (MDPs) \cite{yang2019linear_q_learning, yang2020_linear_bandit, zanette2020online_linear, jin2020online_linear, he2021online_linear_pre, he2023online_linear_minimax, zhou2021nearly_mixture} has been extensively studied. The work of \cite{he2023online_linear_minimax} achieved minimax optimality in this setting by incorporating variance-weighted ridge regression into their algorithm. In contrast, the study of online robust linear MDPs has only recently been explored in two works \cite{liu2024DRL_online_linear, liu2024upper}. Specifically, \cite{liu2024upper} introduced a robust variant of variance-weighted ridge regression, achieving a regret bound of order $\mathcal{O}(d H \min \{1 / \sigma, H\} \sqrt{K})$ under full data coverage assumption. However, this result still falls short of the constructed lower bound, which is of order $\Omega(d H^{\frac{1}{2}} \min \{1 / \sigma, H\} \sqrt{K})$, highlighting that the single-agent counterpart of this setting remains insufficiently explored.

\subsection{Robust Markov game} While there has been extensive research on distributionally robust MDPs \cite{liu2022distributionally_q_learning, clavier2023Lp-Bal_Planning, shi2024distributionally_offline, Shi2023_Curious_prize_generative, wang2023finite_dr_q_learning, lu2024Dr_Interactive_Data_Collection}, the study of robust Markov games remains relatively underexplored. Existing works, such as \cite{kardecs2011discounted, zhang2020robust_marl}, primarily focus on proving the existence of equilibria and analyzing convergence properties. In offline setting, a unified framework $P^2M^2PO$ has been proposed by \cite{blanchet2023double_marl}, with sample complexity of $\mathcal{O}(\frac{H^5|S|^2|A|^2}{\epsilon})$. \cite{shi2024breaking, shi2024multi-generative-joint, jiao2024minimax_marl} extended this framework to generative model setting, where samples can be obtained from any state-action pair. In particular, \cite{jiao2024minimax_marl} proposed a Q-FTRL type algorithm, demonstrating that it is minimax optimal and breaking the curse of multi-agency with a sample complexity of $\mathcal{O}(\frac{H^3 |S| \sum_{i=1}^m |A_i|}{\varepsilon^2} \min \left\{H, \frac{1}{\sigma}\right\})$. In the more realistic online setting, the work most relevant to ours is \cite{ma2023decentralized_v_robust}. However, their approach requires the uncertainty level $\sigma_i \leq \max \left\{\frac{\varepsilon}{|S| H^2}, \frac{p_{\min }}{H}\right\}$ for all $i \in[n]$. This constraint limits the robustness of their framework, especially when high accuracy is required ($\varepsilon \rightarrow 0$ ) or the minimum positive transition probabilities ($p_{\min } \rightarrow 0$ ). 

\subsection{Online linear Markov games} The study of sample complexity in online linear Markov games encompasses both centralized learning \cite{xie2020two-player-simultaneous-move, chen2022linear_mixture_marl, cisneros2023finite}, which employs global linear function approximation, and decentralized learning, which relies on independent linear function approximation \cite{cui2023breaking_linear_decentralized, wang2023breaking_linear_decentralized, dai2024refined_decentralized_linear_marl}. While decentralized learning is often more favorable in tabular Markov games due to its ability to alleviate the curse of multi-agency, extending this approach to linear function approximation requires adopting independent linear function approximation. However, this deviates from the linear MDP setting commonly used in single-agent reinforcement learning. Furthermore, addressing robustness in such decentralized settings remains an open and challenging problem.

In the context of centralized learning, \cite{xie2020two-player-simultaneous-move, chen2022linear_mixture_marl} focus on two-player zero-sum games, which are less general compared to the multi-player general-sum games considered in \cite{cisneros2023finite}. The work in \cite{cisneros2023finite} introduced the NQOVI algorithm, achieving a regret bound of $\mathcal{O}\left(\sqrt{d^3 H^5K}\right)$. Notably, the incorporation of robustness into online linear Markov games has not yet been studied, underscoring the significance of our work in addressing this gap.

\paragraph{Notation} Throughout this paper, we adopt the notation $[P V](s, a) = \mathbb{E}_{s^{\prime} \sim P(\cdot | s, a)}[V(s^{\prime})]$ to represent the expected value under the transition dynamics. The set of integers $\{1, 2, \ldots, n\}$ is denoted by $[n]$. For a vector where the $i^{\text{th}}$ element is given by $v_i$, we use the notation $[v_i]_{i \in [d]}$. The eigenvalues of a square matrix $A$ is denoted by $\lambda(A)$. To define norms, we write $||\phi||_A = \sqrt{\phi^\top A \phi}$, where $\phi \in \mathbb{R}^d$ is a vector and $A$ is a positive semi-definite matrix. Moreover, with a set of parameters $\mathcal{X}:=\left\{d, H, \{\sigma_i\}_{i=1}^n, 1 / \delta\right\}$, the expression $f(\mathcal{X}) = \mathcal{O}(g(\mathcal{X}))$ signifies that there exists a constant $C$ such that $f(\mathcal{X}) \leq C g(\mathcal{X})$, and $f(\mathcal{X}) = \Omega(g(\mathcal{X}))$ indicates $f(\mathcal{X}) \geq C g(\mathcal{X})$ for some constant $C$, and both of the expression omit logarithmic factors. The value $[V]_\alpha=\alpha$ if $V \geq \alpha$, otherwise $[V]_\alpha=V$. To further simplify notation, we let $ \mathcal{Q}:=\{(Q_1,Q_2,\cdots,Q_n):\gS \times \gA \rightarrow R^n\}$ be the function class of estimated Q value in our algorithm, and $\mathcal{V}$ be its expectation w.r.t the CCE policy. Finally, we use $\pi$ instead of $\pi_h$ whenever there is no ambiguity.

\section{Preliminaries}
In this section, we begin by presenting the foundational concepts of distributionally robust general-sum Markov game, which serves as the basis for our analysis. 

\subsection{Distributionally robust Markov game}
A distributionally robust Markov game can be represented as the tuples:
\begin{equation*}
    \mathcal{M G}_{rob}=(\mathcal{S},\left\{\mathcal{A}_i\right\}_{1 \leq i \leq n}, \{\gU_\rho^{\sigma_i}(P^0)\}_{1 \leq i \leq n}, r, H)
\end{equation*}

To clarify the notation here, $\gS$ is the state space, either discrete or continuous, $\gA_i=\{1,\cdots, A_i\}$ is the action space for player $i$,  $P^0=\{P^0_h\}_{1 \leq h \leq H}$ is the nominal transition kernel in the simulator, $r=\{r_{i,h}\}_{1 \leq i \leq n, 1 \leq h \leq H} \in [0,1]$ is the reward functions, \eq{H > 0} is the horizon length. The joint action space for all players is defined as $\gA=\gA_1 \times \cdots \times \gA_n$. By taking joint action \eq{\va \in \gA} at state \eq{s \in \gS} and time \eq{h}, each player $i$ receives his own deterministic scalar reward $r_{i,h}(s,\va)$, then the environment transits to the next state \eq{s'} with probability $P_h(s'|s,\va)$. To incorporate robustness, the transition kernel $P=\{P_h\}_{1\leq h \leq H}$ in test environment is within a prescribed uncertainty set $\gU_\rho^{\sigma_i}(P^0)$ for each player $i$, which is centered around a nominal transition kernel  $P^0$ and will be introduced shortly.

\subsection{d-Rectangular robust linear Markov game}
This paper focuses on distributionally robust Markov games with linear function approximation. Accordingly, we introduce the fundamental assumptions of linear Markov game \cite{jin2020online_linear,cisneros2023finite,he2023online_linear_minimax}.

\begin{assumption}(Linear Markov game)
    \label{linear mdp assumption}
    For any step $h$, and player $i$, there exist a known feature map function $\phi: \gS \times \gA \rightarrow \sR^d$, such that the reward $r_{i,h}(s,\va)$ and transition kernel $P_h(s^\prime|s,\va)$ can be represented in following form for any $(s,a,s^\prime,i,h) \in \gS \times \gA \times \gS \times [n] \times [H]$:
    \begin{equation}
        \label{linear parameterizatin of transition}
        \begin{aligned}
            r_{i,h}(s,\va)&= \left\langle \phi_{s\va},\eta_{i,h}\right\rangle \\
            P_{h}(s^\prime|s,\va)&=\left\langle \phi_{s\va}, \mu_h^0(s^\prime)\right\rangle
        \end{aligned}
    \end{equation}
    where $\eta_{i,h} \in \sR^d$ represents a known 
$d$-dimensional vector that characterizes the reward of player $i$, and $\mu_h^0$ is a $d$ dimensional vector with each element being an unknown probability measure over $\gS$. In addition, we assume that $\sum_{j=1}^d \phi_j(s, \va)=1$, where $\phi_j(s, \va) \geq 0$ is the $j^{th}$ element of $\phi$ and we represent $\phi(s,\va)$ by $\phi_{s\va}$ for simplicity.
\end{assumption}

With the linearity structure of the transition kernel, we further adopt the $d$-rectangular uncertainty set structure in \cite{ma2022DRL_offline_linear,liu2024DRL_online_linear,goyal2023rectangular}. 

\begin{definition}($d$-Rectangular robust linear Markov game)
\label{d-rectangular linear MDP assumption}
For a given robust Markov game instance
 $\mathcal{M G}_{rob}=(\mathcal{S}, \left\{\mathcal{A}_i\right\}_{1 \leq i \leq n}, \{\gU_{TV}^{\sigma_i}(P^0)\}_{1 \leq i \leq n}, r, H)$, ${\mathcal{U}_{TV}^{\sigma_i}(P^0)}$ is the $d$-rectangular uncertainty set for player $i$ under total variation distance: \\
 \begin{equation}
 \label{total variance uncertainty set P}
     \mathcal{U}_{TV}^{\sigma_i}(P^0):=\otimes_{[H], \mathcal{S}, \mathcal{A}} \mathcal{U}_{TV}^{\sigma_i}(P_{h, s, \va}^0)
 \end{equation}
 with
 \begin{equation}
 \label{total variance uncertainty set nu}
 \begin{aligned}
   & \mathcal{U}_{TV}^{\sigma_i}(P_{h, s, \va}^0):\left\{\phi(s, \va)^{\top} \mu_h(\cdot): \mu_h \in \mathcal{U}_{TV}^{\sigma_i}(\mu_h^0)\right\}  \\
   &\mathcal{U}_{TV}^{\sigma_i}(\mu_{h}^0):\otimes_{j \in [d]}\left\{\mu_{h, j}: D_{\mathrm{TV}}(\mu_{h, j} \| \mu_{h, j}^0) \leq \sigma_i\right\}
 \end{aligned}
 \end{equation}
where $D_{\mathrm{TV}}(\mu_{h, j}, \mu_{h, j}^0)=\frac{1}{2}\int_{s \in \gS}|\mu_{h, j}(s)-\mu_{h, j}^0(s)|ds.$   
\label{D-rectangularity}
\end{definition}

The $d$-rectangular robust linear Markov game is essential since it ensures that the robust action-value function remains linear, therefore avoiding completeness assumption imposed in the literature of RL with general function approximation, i.e, \cite{jin2022power}. In this paper, we focus on the total variation distance, which is widely used in the distributionally robust reinforcement learning literature \cite{shi2024multi-generative-joint,Shi2023_Curious_prize_generative,liu2024DRL_online_linear,liu2024upper,wang2024offline_linear,lu2024Dr_Interactive_Data_Collection}.

\paragraph{Robust value function and Bellman Equation} 
A joint Markovian policy $\pi=\{\pi_h(\cdot|s): \mathcal{S} \rightarrow \Delta(\mathcal{A})\}_{h=1}^H$ takes any state $s \in \mathcal{S}$ as input, and output a probability simplex over the joint action space of all players. The robust value function $\RV[h]$ and action value function $\RQ[h]$ can be define as, for any $(i,h,s,a) \in [n] \times [H] \times \gS \times \gA$:
\begin{equation}
    \label{rosbut value and q function}
        \RV[h] = \inf_{P \in \mathcal{U}_{TV}^{\sigma_i}(P^0)} \V[h], \quad
        \RQ[h] = \inf_{P \in \mathcal{U}_{TV}^{\sigma_i}(P^0)} \Q[h]
\end{equation}
where 
\begin{equation*}
    \label{standard value function}
    \begin{aligned}
           V_{i,h}^{\pi, P}(s)&=\E_{\pi,P}{[\sum\nolimits_{t=h}^H r_{i,t}(s_t,\va_t)|s_h=s]} 
\\
    \Q[h](s,\va) &= \E_{\pi,P}{[\sum\nolimits_{t=h}^H r_{i,t}(s_t,\va_t)|s_h=s, \va_h=\va}] 
    \end{aligned}
\end{equation*}
Then, $\RV[h]$ and $\RQ[h]$ satisfy the robust Bellman Equation \cite{iyengar2005robust, liu2024DRL_online_linear}:
\begin{equation}
    \label{equation:robust bellman equation}
    \begin{aligned}
        \RQ[h](s,\va) & = r_{i,h}(s,\va)+ \inf_{P \in \mathcal{U}_{TV}^{\sigma_i}(P_{h,s,\va}^0)}P\RV[h+1] \\
        \RV[h](s)&=\E_{\va \sim \pi_h(\cdot|s)}[\RQ[h](s,\va)]
    \end{aligned}
\end{equation}

\paragraph{Solution concept and optimality} In this work, we aim to find the robust CCE, which is defined as below.

\begin{definition}
A joint policy $\pi=\{\pi_h\}_{1 \leq h \leq H}$ is said to be a robust CCE if:
\begin{equation}
    V_{i, 1}^{\pi, \sigma}(s) \geq V_{i, 1}^{\star, \pi_{-i}, \sigma}(s), \quad \forall(s, i) \in \mathcal{S} \times[n] 
\end{equation}
where $\pi_{-i}$ be the joint policy of all players except for the $i^{th}$ player, and
\begin{equation}
    \label{best response value function}
    \BRV[h](s) = V_{i,h}^{br(\pi_{-i})\times\pi_{-i},\sigma}(s)=\max_{\pi_i^{\prime}  \in \Pi} V_{i,h}^{\pi_i^{\prime} \times \pi_{-i},\sigma}
\end{equation}
is the robust best response policy of player $i$ with respect to opponent policy $\pi_{-i}$. A robust CCE ensures that no player can gain by unilaterally deviating from their current strategy under worst-case transition, without the requirement that players should act independently. 
\end{definition}

The existence of a robust best-response policy and robust equilibrium, as introduced earlier, has been confirmed by \cite{blanchet2023double_marl}. In MARL, achieving an 
$\varepsilon$-approximate robust CCE is computationally feasible. Consequently, to find such an equilibrium, our goal is to design sample-efficient algorithms that achieve sublinear regret, where the regret is defined as
\begin{equation}
    \label{suboptimality}
        Regret(K)=\max_{i \in [n]} \sum_{k=1}^K\left[ V_{i,1}^{\star, \pi_{-i}^k, \sigma}(s_1^k)-V_{i,1}^{\pi^k, \sigma}(s_1^k)\right]
\end{equation}

\section{Distributionally robust Markov game with linear function approximation}
\label{Main Assumption and Corollary}

\subsection{Vanishing minimal value}
The incorporation of robustness poses a fundamental challenge in online reinforcement learning and the Markov game framework. As discussed in detail by \cite{lu2024Dr_Interactive_Data_Collection} in the single-agent case, with the existence of support shift problem, finding the optimal robust policy is infeasible without additional assumptions. We present the result in MARL in the theorem below. Relevant proof can be found in the Appendix \ref{appendix: theorem lower bound}.

\begin{theorem}(Online regret lower bound for robust Markov game) 
\label{theorem:lower bound}
There exist two players general sum robust Markov game $\mathcal{MG}_{rob}^\theta, \theta \in [2]$, such that the following lower bound holds:
\begin{equation}
    \inf _{\mathcal{A} \mathcal{L G}} \sup _{\theta \in [2]} 
    \mathbb{E}\left[\operatorname{Regret}_\theta^{\mathcal{A L G}}(K)\right] = \Omega(\sigma \cdot H K)
\end{equation}
where $\operatorname{Regret}_\theta^{\mathcal{A L G}}(K)$ denote the online regret for algorithm $\mathcal{ALG}$ under robust Markov Game instance $\mathcal{MG}_{rob}^\theta$, $\sigma$ is the shared uncertainty level of both players.
\end{theorem}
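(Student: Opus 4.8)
The plan is to transplant the single-agent support-shift hardness of \cite{lu2024Dr_Interactive_Data_Collection} into a two-player general-sum game and then convert statistical indistinguishability into linear regret by a two-point (Le Cam) argument. First I would construct a pair of robust linear Markov games $\mathcal{MG}_{rob}^1,\mathcal{MG}_{rob}^2$ on a common small state space containing an initial decision state $s_0$, two ``visible'' successor states $s_{\mathrm{good}},s_{\mathrm{safe}}$, and a distinguished state $s^\dagger$ that carries zero probability under the nominal kernel $P^0$ from every state, so that $s^\dagger$ is unreachable under the nominal dynamics regardless of the joint policy, yet lies in the support of some kernel in the total-variation ball $\mathcal{U}_{TV}^{\sigma}(P^0_{h,s_0,\va})$. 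I would make the second player a dummy (a single action with no effect on dynamics or on player $1$'s reward), so that the game collapses to a single-agent robust decision problem for player $1$ and the robust CCE regret equals player $1$'s robust suboptimality. The two instances agree on the nominal kernel at every reachable state and on all known reward parameters $\eta_{i,h}$, and differ only in the transition issued from $s^\dagger$, so that the downstream robust value at $s^\dagger$ is large under $\theta=1$ and small under $\theta=2$, with a gap $\Theta(H)$.

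Next I would verify that the two instances are statistically indistinguishable under interactive data collection. Because $s^\dagger$ has zero probability under $P^0$ from every state, no trajectory generated by executing any policy in the nominal environment ever visits $s^\dagger$; since the instances coincide on the nominal kernel at all reachable states and share the known rewards, the law of the observed data (states, actions, rewards) at every episode is identical under $\theta=1$ and $\theta=2$. Consequently the sequence of policies $(\pi^k)_{k\in[K]}$ produced by any algorithm $\mathcal{ALG}$ has the same distribution under both instances, i.e.\ the KL divergence between the two data-generating processes is zero. I would also check that this respects Assumption~\ref{linear mdp assumption} and Definition~\ref{D-rectangularity}: both nominal kernels and the adversarial shift onto $s^\dagger$ admit a valid $d$-rectangular linear representation with nonnegative features summing to one, for which a one-hot / low-$d$ feature design suffices.

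Then I would quantify the robust best-response gap via the robust Bellman recursion in \eqref{equation:robust bellman equation}. Since $P^0(\cdot|s_0,\va)$ is a point mass at $s_{\mathrm{good}}$ and the worst-case kernel in the TV ball shifts $\sigma$ mass onto $s^\dagger$, the robust value of the risky action at $s_0$ equals $r+(1-\sigma)V(s_{\mathrm{good}})+\sigma V(s^\dagger)$, which differs across the two instances by $\sigma\bigl(V^1(s^\dagger)-V^2(s^\dagger)\bigr)=\Theta(\sigma H)$. I would calibrate the safe action so that the robustly optimal action at $s_0$ flips between the two instances, making the robust suboptimality of any fixed (possibly randomized) action at $s_0$, summed over the two instances, at least the crossing gap $\Omega(\sigma H)$. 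Combining with indistinguishability, for every episode $k$ the sum of the two instances' per-episode regrets is $\Omega(\sigma H)$ in expectation, so $\sup_{\theta\in[2]}\mathbb{E}\bigl[\mathrm{Regret}_\theta^{\mathcal{ALG}}(K)\bigr]\ge \tfrac12\sum_{\theta\in[2]}\mathbb{E}\bigl[\mathrm{Regret}_\theta^{\mathcal{ALG}}(K)\bigr]=\Omega(\sigma H K)$.

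The main obstacle I anticipate is the construction itself rather than the probabilistic argument: I must simultaneously (i) keep $s^\dagger$ unreachable under every nominal policy while keeping it in the TV uncertainty support, (ii) make the robust value at $s^\dagger$ differ by $\Theta(H)$ across instances using only an \emph{unlearnable} transition out of $s^\dagger$, since the rewards are known, and (iii) embed all of this in a valid $d$-rectangular linear Markov game with nonnegative features summing to one. Ensuring that the $\sigma$-budget of the total-variation ball produces a clean $\Theta(\sigma H)$ crossing of the two actions' robust values, without the safe action or the horizon smuggling in extra $\sigma$- or $H$-factors, is the delicate bookkeeping step; the two-point reduction and the summation to $\Omega(\sigma H K)$ are then routine.
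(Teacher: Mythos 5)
Your proposal shares the paper's central mechanism --- a state that is unreachable under the nominal kernel yet lies in the support of the total-variation ball, so that two instances differing only there are statistically indistinguishable to any interactive algorithm while their robust values differ --- but it departs from the paper's route in two substantive ways. First, you collapse the game to a single-agent problem by making player 2 a dummy; the paper instead builds a genuinely interactive gadget (the two players' step-3 rewards conflict at $s_{\mathrm{good}}^1$ versus $s_{\mathrm{good}}^2$) and must therefore invoke a robust performance-difference lemma (Lemma~\ref{lemma: robust value difference lemma}) to control the best-response benchmark $V_{i,1}^{\star,\pi_{-i},\sigma}$, which depends on the opponent's policy. Your reduction is logically sufficient for the theorem as stated (a dummy player still yields a two-player general-sum game), and it buys a simpler regret decomposition; the paper's version buys the stronger message that the hardness survives genuine strategic interaction. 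Second, you extract the factor $H$ from a $\Theta(H)$ value gap at the hidden state, whereas the paper obtains only a $\Theta(\sigma)$ per-episode gap in a 3-step gadget and then concatenates $H$ copies. Both are legitimate, and the two-point (Le Cam) conversion to $\Omega(\sigma HK)$ is the same in spirit as the paper's averaging over $\theta\in[2]$.

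The one step you should not dismiss as bookkeeping is the identity $\inf_{P\in\mathcal{U}_{TV}^{\sigma}}\mathbb{E}_P[V]=r+(1-\sigma)V(s_{\mathrm{good}})+\sigma V(s^\dagger)$. The TV adversary routes its $\sigma$ mass to the state of \emph{globally minimal} continuation value, not to $s^\dagger$ by fiat. If your construction contains any other low-value state (for instance an absorbing fail state, which is natural to include given Assumption~\ref{assumption:vanishing state}), then under the instance where $V^1(s^\dagger)=\Theta(H)$ the adversary shifts mass to that other state instead, both instances' risky actions acquire the \emph{same} robust value, and the crossing gap vanishes. To make your argument go through you need simultaneously that $V^{\theta}(s^\dagger)$ is the minimum continuation value under \emph{both} $\theta$, and that $V^1(s^\dagger)-V^2(s^\dagger)=\Theta(H)$; this forces every other state reachable from $s_0$ to have continuation value at least $\Theta(H)$, which is achievable (give all visible states reward close to $1$ per step) but must be stated. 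The paper sidesteps this entirely because its hidden-state values lie in $\{0,1\}$ and the adversarial shift is computed explicitly in Equation~(\ref{equation: robust value function at step 3}). With that repair, your proof is sound.
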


The support shift problem arises since the collected samples may not cover all possible trajectories related to the worst-case environment, leading to a lack of relevant information. Consequently, algorithms operating in such an environment may perform no better than random guessing in unobserved states, making it impossible to effectively learn a reliable equilibrium. This limitation may be less relevant in other settings, such as generative models or offline scenarios \cite{Shi2023_Curious_prize_generative, wang2024offline_linear, blanchet2023double_marl, shi2024multi-generative-joint}. In generative models, algorithms have access to simulators and can query information starting from any state. While in offline setting, additional assumptions about dataset coverage are often imposed to mitigate support shift problems.

To address these challenges, we adopt the vanishing minimal value assumption introduced by \cite{lu2024Dr_Interactive_Data_Collection} and adapt it to our framework, as stated below.

\begin{assumption}(Vanishing minimal value for robust Markov game)
\label{assumption:vanishing state}
    We assume that the robust Markov game adheres to the following conditions:
    \begin{equation*}
        \min_{s \in \gS}V_{i,h}^{\pi,\sigma}(s)=0
    \end{equation*}
    for $\forall (i,h,\pi) \in [n] \times [H] \times \Pi$. In addition, the initial state $s_0 \notin \argmin_{s \in \gS}V_{i,h}^{\pi,\sigma}(s)$.
\end{assumption}

We derive an intuitive implication of imposing the Vanishing minimal value assumption in our setting by the following proposition.

\begin{proposition}[Equivalence of Optimization under Minimal Value Assumption] For any function with $V: \mathcal{S} \rightarrow [0,H]$, with $\min_{s \in \mathcal{S}}V(s)=0$, we have $\forall (i,h,s,\va) \in [n] \times [H] \times \mathcal{S} \times \gA$, 
\begin{equation}
    \inf_{P \in  \mathcal{U}_{TV}^{\sigma_i}(P_{h,s,\va}^0)}\mathbb{E}_{P}[V] = \sigma_i \mathbb{E}_{\tilde{P}_{h,s,\va}}[V]
\end{equation}
The transition kernel can be represented as $\tilde{P}_{h,s,\va}=\langle \phi(s,\va), \tilde{\mu}_h\rangle$ where $\tilde{\mu}_{h}=\arg\inf_{\mu \in B^{\sigma_i}_h} \mathbb{E}_{\mu}[V]$ and
\begin{equation}
    B^{\sigma_i}_h=\big\{\mu: \sup_{s^\prime \in \mathcal{S}, j \in [d]} \{\frac{\mu_j(s^\prime)}{\mu_{h,j}^0(s^\prime)}\} \leq \frac{1}{\sigma_i}\big\}
\end{equation}
Additionally, we have:
\begin{equation}
    \sup_{s^\prime \in \mathcal{S}}\{\frac{\tilde{P}_h(s^\prime|s,\va)}{P_h^0(s^\prime|s,\va)}\} \leq \frac{1}{\sigma_i}
\end{equation}
\label{proposition: equivalence of minimal value assumptions}
\end{proposition}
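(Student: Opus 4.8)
The plan is to exploit the $d$-rectangular product form of $\mathcal{U}_{TV}^{\sigma_i}(P^0_{h,s,\va})$ to split the robust expectation into $d$ scalar problems, solve each scalar problem exactly with the help of the minimal value assumption, and then recombine.

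\textbf{Reducing to one coordinate.} First I would use the linear--rectangular structure $P_h(s'|s,\va)=\langle\phi(s,\va),\mu_h(s')\rangle=\sum_{j=1}^d\phi_j(s,\va)\,\mu_{h,j}(s')$ together with $\phi_j(s,\va)\ge 0$ and the fact that the uncertainty set factorizes coordinatewise, $\mathcal{U}_{TV}^{\sigma_i}(\mu_h^0)=\otimes_{j\in[d]}\{\mu_{h,j}:D_{\mathrm{TV}}(\mu_{h,j}\|\mu_{h,j}^0)\le\sigma_i\}$. Because each coordinate $\mu_{h,j}$ may be chosen independently and enters the objective with a nonnegative weight, the infimum distributes over the sum:
\begin{equation*}
\inf_{P\in\mathcal{U}_{TV}^{\sigma_i}(P^0_{h,s,\va})}\mathbb{E}_P[V]=\sum_{j=1}^d\phi_j(s,\va)\inf_{\mu_{h,j}}\int_{\mathcal{S}}V(s')\,\mu_{h,j}(s')\,ds'.
\end{equation*}
Crucially the inner minimizer depends only on $V$ and $\mu_{h,j}^0$, not on $(s,\va)$, so a single worst-case vector $\tilde\mu_h=(\tilde\mu_{h,j})_{j\in[d]}$ serves every state--action pair; this is exactly what lets one write $\tilde P_{h,s,\va}=\langle\phi(s,\va),\tilde\mu_h\rangle$.

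\textbf{The scalar robust problem (main step).} The crux is to evaluate, for a single probability measure $\mu^0$, the quantity $\inf\{\int V\,d\mu:D_{\mathrm{TV}}(\mu\|\mu^0)\le\sigma_i\}$ under $V\ge 0$ and $\min_s V(s)=0$. I would rewrite the constraint in its overlap form $D_{\mathrm{TV}}(\mu,\mu^0)=1-\int\min(\mu,\mu^0)\,ds$, so that feasibility is equivalent to a lower bound on the mass $\mu$ shares with $\mu^0$. Decomposing $\mu=\nu+\rho$ into its overlap part $\nu=\min(\mu,\mu^0)\le\mu^0$ and its excess part $\rho$, the minimizing adversary should (i) place all excess mass $\rho$ at a minimizer $s_\star\in\arg\min_s V(s)$, where $V(s_\star)=0$ contributes nothing, and (ii) keep the retained mass $\nu$ as small as the overlap constraint permits and supported on the lowest values of $V$. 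This transport argument --- which I would make rigorous by an exchange/coupling argument, or equivalently by writing out the Lagrangian dual of the total-variation constraint and using $\min_s V=0$ to pin down the dual optimum --- identifies the worst-case retained measure as a truncation of $\mu^0$ to its lowest-value region, rescaled so that $\tilde\mu_{h,j}=\sigma_i^{-1}\nu\le\sigma_i^{-1}\mu_{h,j}^0$ is a probability measure. Consequently $\inf_{\mu_{h,j}}\int V\,d\mu_{h,j}=\sigma_i\int V\,d\tilde\mu_{h,j}$ with $\tilde\mu_{h,j}=\arg\inf_{\mu\in B^{\sigma_i}_h}\int V\,d\mu$, exactly the characterization in the statement.

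\textbf{Recombining and the likelihood-ratio bound.} Substituting the scalar solution back, pulling the common factor $\sigma_i$ out of the sum, and using $\tilde P_{h,s,\va}=\langle\phi(s,\va),\tilde\mu_h\rangle$ yields $\inf_P\mathbb{E}_P[V]=\sigma_i\sum_j\phi_j(s,\va)\int V\,d\tilde\mu_{h,j}=\sigma_i\,\mathbb{E}_{\tilde P_{h,s,\va}}[V]$, the desired identity. For the final inequality I would combine the coordinatewise bound $\tilde\mu_{h,j}(s')\le\sigma_i^{-1}\mu_{h,j}^0(s')$ with $\phi_j(s,\va)\ge 0$ and $\sum_j\phi_j(s,\va)=1$ to get, for every $s'$, $\tilde P_h(s'|s,\va)=\sum_j\phi_j(s,\va)\tilde\mu_{h,j}(s')\le\sigma_i^{-1}\sum_j\phi_j(s,\va)\mu_{h,j}^0(s')=\sigma_i^{-1}P_h^0(s'|s,\va)$, hence $\sup_{s'}\tilde P_h(s'|s,\va)/P_h^0(s'|s,\va)\le\sigma_i^{-1}$.

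\textbf{Anticipated difficulty.} The genuine work lies entirely in the scalar step: proving that the greedy ``dump the excess mass at the zero-value state, retain the smallest values'' measure is in fact optimal, and that the optimal retained mass equals the overlap threshold so that the clean factor $\sigma_i$ appears rather than a data-dependent quantile of $V$. The minimal value assumption $\min_s V=0$ is precisely what makes the relocated excess mass free and collapses the usual quantile/CVaR expression for total-variation DRO into the stated rescaled form; without it one recovers only the weaker dual expression $\max_{\alpha}\{\mathbb{E}_{\mu^0}[\min(V,\alpha)]-(\text{radius})\cdot\alpha\}$, which does not factor as $\sigma_i\,\mathbb{E}_{\tilde P}[V]$.
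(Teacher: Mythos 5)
Your overall architecture coincides with the paper's: both reduce the robust expectation coordinatewise via the $d$-rectangular product structure and the nonnegativity of $\phi_j$, both observe that the scalar minimizers depend only on $V$ and $\mu^0_{h,j}$ so that a single $\tilde\mu_h$ serves every $(s,\va)$, and both obtain the final likelihood-ratio bound by the same termwise comparison. The one genuine difference is the scalar step: the paper simply invokes the quoted result of Lu et al.\ (the auxiliary ``equivalent expression of the TV robust set'' lemma), whereas you propose to prove it from scratch by an overlap/transport argument. That is a legitimate, more self-contained route, and your qualitative description of the optimal adversary (dump the excess mass at a zero-value state, retain the overlap on the low-$V$ region) is correct.

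However, the step does not close with the constants you wrote, and this is a genuine gap. With the paper's convention $D_{\mathrm{TV}}(\mu\|\mu^0)=\tfrac12\int|\mu-\mu^0|=1-\int\min(\mu,\mu^0)$, feasibility forces the retained overlap measure $\nu=\min(\mu,\mu^0)$ to have total mass at least $1-\sigma_i$, not $\sigma_i$; hence $\sigma_i^{-1}\nu$ is not a probability measure (except when $\sigma_i=\tfrac12$), and the factor that actually emerges from your own decomposition is $1-\sigma_i$, with the ratio bound $\sup_{s'}\tilde\mu_{h,j}(s')/\mu^0_{h,j}(s')\le(1-\sigma_i)^{-1}$. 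A two-point check makes this concrete: for $V=(0,1)$, $\mu^0=(\tfrac12,\tfrac12)$, $\sigma_i=\tfrac14$, the robust value is $\tfrac14$; the stated formula gives $\sigma_i\cdot\inf_{B}=\tfrac14\cdot 0=0$ because the $\sigma_i^{-1}$-ball contains the point mass $(1,0)$, whereas the $(1-\sigma_i)$-version gives $\tfrac34\cdot\tfrac13=\tfrac14$. This is exactly consistent with the auxiliary lemma the paper itself quotes, where the factor is $\rho'=1-\rho/2$ for an $L_1$-radius $\rho$. So if you carry your transport argument out rigorously you will not arrive at the identity as printed; you will instead surface a constant discrepancy in the statement. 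You should either derive and state the corrected constant $1-\sigma_i$ (and the corresponding ball $B$), or explicitly flag the mismatch, rather than asserting that rescaling the retained mass by $\sigma_i^{-1}$ yields a probability measure.
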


Proposition \ref{proposition: equivalence of minimal value assumptions} ensures that the transition kernels under the d-rectangular robust linear Markov game assumption are restricted to stay within the support of the nominal transition kernel, thereby avoiding the support shift problem. Additionally, the assumption can be fulfilled by augmenting the Markov game with an isolated absorbing fail state $s_f$ in each time step, i.e, $P_h^0(s_f|s_f,\va)=1, r_{i,h}(s_f,\va)=0, \forall (i,h,\va) \in [n] \times [H] \times \gA$. Adding a fail state does not affect the optimal value function and policy in the nominal environment, and therefore, it 
extends non-robust Markov game without loss of generality. 

This assumption is reasonable, as it applies to many large-scale,
real-world scenarios. For instance, in warfare, soldiers face
daily risks to their lives; in hospitals, patients undergoing
treatment may not survive each day; and in round-based
computer games, players can fail in each round. When the
”game” ends in every possible step, the associated minimal value is zero. Similar assumption has been made by \cite{panaganti2022robust_offline_fail_state,liu2024DRL_online_linear}, to avoid solving for computationally inefficient optimization problem in both online and offline RL problems.

\subsection{Distributionally robust CCE least square iteration}
\paragraph{Training process}
In each interacting episode $k\in[K]$, players constructs their own robust action-value function estimate, 
$\{Q_{i,h}^k\}_{h=1}^H$, using the data from previous episodes, $\{(s_h^1,\va_h^1),\cdots,(s_h^{k-1},\va_h^{k-1})\}_{h=1}^H$. Subsequently, the players update their joint policy 
$\pi_h^k$ by computing a CCE of a 
n-player matrix game(Line~\ref{alg:line:matrix game} of Algorithm~\ref{alg:DRW-CCE-LSI}). The updated policy $\pi_h^k$
is then used to interact with the environment, generating new data until the end of the current episode. This process is repeated until the maximum number of episodes, 
$K$, is reached. Under this algorithmic framework, we present and discuss on the key steps of our algorithm. In this framework, the robust Bellman Equation (\ref{equation:robust bellman equation}) can be expressed as:$\forall(i,h,s,a) \in [n]\times[H]\times \gS \times \gA$:
\begin{equation*}
\begin{aligned}
    \RQ[h](s,\va)  &= r_{i,h}(s,\va)+ \inf_{P \in \mathcal{U}_{TV}^{\sigma_i}(P_{h,s,\va}^0)}P\RV[h+1] \\
    &= r_{i,h}(s,\va)+ \big\langle\phi_{s\va}, \inf_{\mu \in \mathcal{U}_{TV}^{\sigma_i}(\mu_h^0)}\E_{\mu}[\RV[h+1]]\big\rangle
\end{aligned}
\end{equation*}
then, by strong duality \cite{iyengar2005robust,wang2024offline_linear} and Assumption \ref{assumption:vanishing state}, one has,
\begin{equation*}
\begin{aligned}
    & \inf _{\mu \in \mathcal{U}^{\sigma_i}_{TV}(\mu_{h,j}^0)} \E_\mu[\RV[h+1]]= \\ & \max _{\alpha \in \big[\min(\RV[h+1]),\max(\RV[h+1])\big]}
\big\{\mathbb{E}_{\mu_{h,j}^0 }[[\RV[h+1]]_\alpha] -\sigma_i\alpha\big\}
\end{aligned}
\end{equation*}
Therefore, the robust Bellman Equation can be further expressed as,
\begin{equation*}
\begin{aligned}
         \RQ[h](s,\va) & =r_{i,h}(s,\va)+ \\ & \big\langle \phi_{s\va}, [\max_{\alpha}\{\nu_{i,h,j}(\alpha)-\sigma_i\alpha\}]_{j\in[d]}\big\rangle
\end{aligned}
\end{equation*}
where $\nu_{i,h,j}(\alpha)=\E_{\mu_{h,j}^0}[\RV[h+1]]_{\alpha}$. Given the collected data, we then apply ridge regression to estimate $\nu_{i,h}(\alpha)=(\nu_{i,h,1}(\alpha),\cdots,\nu_{i,h,d}(\alpha))$. The estimator is constructed by
\begin{equation}
    \label{equation:ridge estimator}
    \begin{aligned}
        \hat{\nu}_{i,h}^k(\alpha)&=\mathop{\arg\min}_{\nu\in R^d}\mathop{\sum}_{\tau =1}^{k-1}(\nu^T\phi_h^\tau-[V_{i,h+1}^k]_{\alpha})^2  + \lambda \|\nu\|_2^2 \\
        &= (\Lambda_h^k)^{-1}\sum_{\tau=1}^{k-1}\phi_h^\tau [V_{i,h+1}^k]_\alpha(s_{h+1}^\tau)
    \end{aligned}
\end{equation}
where $\Lambda_h^k=\lambda I+ \sum_{\tau=1}^{k-1}\phi_h^\tau(\phi_h^\tau)^T$ and $\phi_h^\tau=\phi(s_h^\tau,\va_h^\tau)$. To obtain an robust estimator, we further let $\hat{w}_{i,h,j}^k=\max_{\alpha \in [0,H]}\{\hat{\nu}_{i,h,j}^k(\alpha)-\sigma_i\alpha\}$, where $\hat{\nu}_{i,h,j}^k(\alpha)$ is the $j^{th}$ element of vector $\hat{\nu}_{i,h}^k(\alpha)$. By incorporating the optimistic bonus term $\Gamma_{h,k}(s,\va)$, we estimate $Q_{i,h}^{\pi^k,\sigma}$ by:
\begin{equation}
\begin{aligned}
\label{equation: alg: optimistic Q estimate}
    Q_{i,h}^k(s,\va)& =\min\big\{r_{i,h}(s,\va)+\langle \phi_{s\va},\hat{w}_{i,h}^k\rangle + \\ & {\Gamma}^i_{h,k}(s,a), \min\{H, \frac{1}{\sigma_i}\}\big\}  
\end{aligned}
\end{equation}

\begin{algorithm}[t]
   \caption{DR-CCE-LSI}
   \label{alg:DRW-CCE-LSI}
\begin{algorithmic}[1]
\STATE {\bfseries Require:} Parameter $\beta$,$\lambda>0$
   \STATE {\bfseries Initialize:} $\Lambda_h^1=\lambda I$ for each step $h\in [H]$
    \FOR{$k=1$ {\bfseries to} $K$} 
    \STATE Receive initial state $s_1^k$
    \STATE Set $V_{i,H+1}^{k}=0 \quad \forall i\in[n]$ 
    \FOR{step $h=H, \cdots ,1$}
        \FOR{$i =1$ {\bfseries to} $n$}
            \STATE Update $Q_{i,h}^k$ according to Equation (\ref{equation: alg: optimistic Q estimate})
        \ENDFOR
        \STATE $\pi_h^k\left(s\right) \leftarrow$ Apply Find-CCE for the $n$-player
    game $\left(Q_{1,h}^k(s,\cdot),\cdots,Q_{n,h}^k(s,\cdot)\right)$ \alglinelabel{alg:line:matrix game}
        \FOR{$i \in [n]$}
            \STATE $V_{i,h}^k(s)=\E_{\pi_h^k}[Q_{i,h}^k(s,\va)]$
        \ENDFOR
    \ENDFOR
    \FOR{Step $h=1,\cdots,H$}
      \STATE take action $\va_h^k \leftarrow \pi_h^k(s_h^k)$
        \STATE Receive next state $s_{h+1}^k$
        \STATE Update $\Lambda_h^{k+1}=\Lambda_h^k+\phi(s_h^k,\va_h^k)\phi(s_h^k,\va_h^k)^T$
    \ENDFOR
    \ENDFOR
\end{algorithmic}
\end{algorithm}

For more details about the training process, please refer to Algorithm \ref{alg:DRW-CCE-LSI}.

\begin{algorithm}[t]
   \caption{Find-CCE}
   \label{alg:Find-Nash(CCE)}
\begin{algorithmic}[1]
\STATE {\bfseries Input:} Matrix game $\left(Q_{1,h}^k(s,\cdot),\cdots,Q_{n,h}^k(s,\cdot)\right)$ 
\STATE Discretization parameter $\epsilon>0$
\STATE Pick a tuple ($\tilde{Q}_1(s,\cdot),\tilde{Q}_2(s,\cdot),\cdots,\tilde{Q}_n(s,\cdot)$) from the
$\varepsilon-cover$ of function class $\mathcal{Q}$ such that
$\sup _{i \in[n], \va \in \gA}\left|Q_{i,h}^k(s,\va)-\tilde{Q}_{i}(s,\va)\right| \leq \epsilon$
\STATE Let $\pi_h^k(s)$ be the CCE of the matrix game with payoff ($\tilde{Q}_1(s,\cdot),\tilde{Q}_2(s,\cdot),\cdots,\tilde{Q}_n(s,\cdot)$)
\STATE {\bfseries Return:} $\pi_h^k(s)$
\end{algorithmic}
\end{algorithm}

\paragraph{About the bonus term}
In the estimation of $Q_{i,h}^{\pi^k,\sigma}$, we add a bonus term $\Gamma^i_{h,k}(s,\va)$ to guide the exploration of the policy. It has the form ${\Gamma}^i_{h,k}(s,\va)=\beta_i\sum_{j=1}^d\sqrt{\phi_j(s,\va) \mathbf{1}_j^T (\Lambda_h^k)^{-1}\mathbf{1}_j\phi_j(s,\va)}$. The design of the bonus term arises from the need to address the optimization problem $\alpha_j=\arg\max_{\alpha \in [0,H]}\{\hat{\nu}_{i,h,j}^k(\alpha)-\sigma_i\alpha\}$, which depends on the $j^{th}$ coordinate of $\hat{\nu}_{i,h}^k(\alpha)$. Solving this requires performing $d$ ridge regression tasks for each agent $i$, with the objective being $[V_{i,h+1}^k(s_{h+1}^\tau)]_{\alpha_j}, j \in [d]$. Thus, the bonus term consists of $d$ upper confidence bonuses, which is different from the non-robust counterpart of linear MDP.  

\paragraph{Technical Consideration of Find-CCE}
In our algorithms, at each iteration $k$, the estimated value $V_{i,h}^k(s) = \mathbb{E}_{\pi_h^k}[Q_{i,h}^k(s, \va)]$, is derived from the sample tuples $(s_h^\tau, \va_h^\tau)$ for all $\tau \leq k-1$. Consequently, $V_{i,h}^k$ exhibits complex statistical dependencies on the previously collected samples. It is then unavoidable for us to employ a covering-number argument to decouple the dependency. Since $Q_{i,h}^k(s, \cdot)$ depends on problems parameters $\hat{w}^k_{i,h}$ and $\Lambda_h^k$ and the norm of both parameters is bounded, we can find an $\epsilon-$covering set for $Q_{i,h}^k(s, \cdot)$. However, if the policy $\pi_h^k(s)$ is obtained by solving for CCE for the matrix game $\left(Q_{1,h}^k(s,\cdot),\cdots,Q_{n,h}^k(s,\cdot)\right)$ , then constructing a covering set for the function class $\mathcal{V}$ is challenging due to the inherent instability of Coarse Correlated Equilibria. As highlighted in \cite{xie2020two-player-simultaneous-move}, a counterexample is provided, with details in Lemma \ref{lemma: xie conter example} below. Their lemma demonstrates that there exist two matrix games, although the difference of their payoff is smaller than any constant $\epsilon$, the expected return of their Coarse Correlated Equilibrium can still deviate.

 \begin{lemma}(\cite{xie2020two-player-simultaneous-move}, Lemma 19.(Modified))
\label{lemma: xie conter example}
For any $\epsilon>0$, there exists a pair of games $Q(s,\cdot)$ and $Q^\prime(s,\cdot)$, each with a unique CCE $\pi$ and $\pi^\prime$, such that
$$
\left\|Q(s,\cdot)-Q^{\prime}(s,\cdot)\right\|_{\infty} \leq 2 \epsilon \quad \text { and } \quad\left\|V-V^\prime\right\|_{\infty} \geq 1
$$
where $V=\mathbb{E}_{\pi}[Q(s,\cdot)]$ and $V^\prime=\mathbb{E}_{\pi^\prime}[Q^\prime(s,\cdot)]$
\end{lemma}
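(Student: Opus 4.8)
The plan is to exhibit an explicit pair of matrix games realizing the two stated inequalities, exploiting the fact that the CCE correspondence is upper hemicontinuous but \emph{not} Lipschitz: an arbitrarily small perturbation of the payoffs can change which vertex of the equilibrium polytope is selected as the unique CCE. Since the statement is attributed to \cite{xie2020two-player-simultaneous-move}, I would adapt their two-player, two-action construction and re-verify the \emph{coarse} correlated equilibrium inequalities (rather than the Nash inequalities), rescaling so that the payoff gap is at most $2\epsilon$ while the value gap is at least $1$.

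First I would fix a degenerate base game $Q^0=(Q_1^0,Q_2^0)$ on actions $\{1,2\}\times\{1,2\}$ in which two distinct pure profiles, say $(1,1)$ and $(2,2)$, are \emph{simultaneously} CCE and yield value vectors that differ by at least $1$ in one coordinate; the exact tie between these two profiles is the engineered source of the discontinuity. Concretely, I would place the dominant payoff mass on the diagonal so that $(1,1)$ and $(2,2)$ are the only candidate equilibria, with the player-$1$ payoff under $(1,1)$ exceeding that under $(2,2)$ by a constant margin strictly larger than $1$. I would then perturb in opposite directions, setting $Q=Q^0+\epsilon M$ and $Q'=Q^0-\epsilon M$ for a direction $M$ with $\|M\|_\infty\le 1$, so that $\|Q-Q'\|_\infty=2\epsilon\|M\|_\infty\le 2\epsilon$ holds automatically. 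The tilt $M$ is chosen so that the $+\epsilon$ perturbation strictly breaks the tie in favor of $(1,1)$, making it the unique CCE $\pi$ of $Q$, while the $-\epsilon$ perturbation breaks it toward $(2,2)$, making that the unique CCE $\pi'$ of $Q'$.

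The delicate step is verifying that each perturbed game has a \emph{unique} CCE, i.e. that the equilibrium polytope collapses to a single vertex. Because the CCE set is defined by the linear coarse-deviation inequalities $\E_{a\sim\pi}[Q_i(a)]\ge \E_{a\sim\pi}[Q_i(a_i',a_{-i})]$ for every player $i$ and every fixed deviation $a_i'$, it is generically a polytope rather than a point. The work is to show that after the $\epsilon$-tilt the selected diagonal profile satisfies all of these inequalities \emph{strictly}, whereas any distribution placing positive mass on a different profile violates at least one of them; this forces $\pi$ and $\pi'$ to be exactly the point masses on $(1,1)$ and $(2,2)$ respectively. Since the game is finite, I would carry this out by a direct case check over the finitely many vertices of the small game.

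Finally I would compute $V=\E_\pi[Q]$ and $V'=\E_{\pi'}[Q']$. As $\pi$ and $\pi'$ concentrate on the two profiles engineered to differ by strictly more than $1$, and the $O(\epsilon)$ payoff perturbation shifts each value by only $O(\epsilon)$, the margin guarantees $\|V-V'\|_\infty\ge 1$ for all sufficiently small $\epsilon$. The main obstacle is precisely the uniqueness argument: establishing that the CCE polytope is a singleton (not merely that the two diagonal profiles are equilibria) requires designing the off-diagonal entries and the tilt $M$ so that every competing distribution is strictly excluded by some coarse-deviation constraint, which is exactly the point where the instability of general-sum CCE — in contrast to the value stability one would get in a zero-sum game — is made quantitative.
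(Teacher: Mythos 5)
The paper itself does not prove this lemma; it imports it verbatim from \cite{xie2020two-player-simultaneous-move} (their Lemma 19), so there is no in-paper proof to compare against. Judged on its own terms, your proposal has a genuine gap in its central construction. You propose a base game in which two diagonal profiles $(1,1)$ and $(2,2)$ are \emph{simultaneously} CCEs, with ``dominant payoff mass on the diagonal,'' and then an $\epsilon$-tilt that selects one or the other as the \emph{unique} CCE. But if $(1,1)$ and $(2,2)$ are both equilibria of a coordination-type base game with dominant diagonal mass, they are \emph{strict} equilibria: every coarse-deviation inequality at each of them holds with a positive margin $\delta$ independent of $\epsilon$. An $O(\epsilon)$ perturbation therefore cannot eliminate either of them --- both point masses remain CCEs of both tilted games, and so does every convex combination $\lambda\delta_{(1,1)}+(1-\lambda)\delta_{(2,2)}$ (one checks directly that such mixtures satisfy all coarse-deviation constraints in a $2\times 2$ coordination game, tilted or not). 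So the CCE polytope of the perturbed games is not a singleton, and the uniqueness requirement of the lemma fails. The only way a small tilt can strictly exclude a pure profile that was a CCE of the base game is if some deviation inequality at that profile held with \emph{equality} in the base game, i.e., the relevant player was exactly indifferent. Your write-up never engineers this indifference; the ``dominant diagonal mass'' framing points in precisely the opposite direction.

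There is a second, related problem: you place the $>1$ value margin in the payoff of player $1$, the player whose incentives are supposed to select the equilibrium. If player $1$'s payoffs at $(1,1)$ and $(2,2)$ differ by more than $1$ while both profiles are (weak) equilibria, the required indifference conditions force player $1$'s payoff to be independent of their own action, and then consistency with the unit gap fails (you cannot have $Q_1(2,1)=Q_1(1,1)$, $Q_1(1,2)=Q_1(2,2)$, and a large gap between the diagonal entries while keeping player $2$ pinned to a unique column). The construction that actually works is simpler and structurally different: make the \emph{selecting} player exactly indifferent in the base game (payoff differing only by the $\pm\epsilon$ tilt between their two actions, so the tilt creates strict dominance and forces every CCE onto one action), and put the unit value gap entirely in the \emph{other} player's payoff, e.g.\ $Q_2(\cdot)=1$ on the selected profile and $0$ on the other, with that player having a single action or a strictly dominant one. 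Then uniqueness follows immediately from strict dominance (no CCE can place mass on a strictly dominated action), $\|Q-Q'\|_\infty\le 2\epsilon$ by construction, and $\|V-V'\|_\infty\ge 1$ is carried by player $2$'s coordinate. Your plan can be repaired only by converging to this dominance-based construction, which is not what your sketch describes.
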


The intuition of the Find-CCE subroutine for solving this problem is straightforward. We observe that $\pi$ in Lemma \ref{lemma: xie conter example} is actually $2-\epsilon$ approximate CCE for the matrix game $Q^\prime(s,\cdot)$. Therefore, for any $Q(s,\cdot)$ and $Q^\prime(s,\cdot)$ with $\left\|Q(s,\cdot)-Q^{\prime}(s,\cdot)\right\|_{\infty} \leq 2 \epsilon$, we can find a fix $2-\epsilon$ approximate CCE $\bar{\pi}$ for both games, while the infinity norm of the difference between $V=\mathbb{E}_{\bar{\pi}}[Q(s,\cdot)]$ and $V^\prime=\mathbb{E}_{\bar{\pi}}[Q^\prime(s,\cdot)]$ can be bounded by $2\epsilon$. In this way, we successfully construct $2-\epsilon$ covering set of $V$. As highlighted by \cite{xie2020two-player-simultaneous-move}, Find-CCE can be implemented efficiently with computational feasibility. 

\section{Theoretical results}
\label{section: theoretical results}
In this subsection, we are ready to provide our theoretical results for Algorithm \ref{alg:DRW-CCE-LSI}. Proof can be found in the Appendix \ref{appendix: instance dependent upper bound}.

\begin{theorem}[Instance Dependent Upper Bound]
    \label{theorem: Upper Bound}
    If we let $\lambda=1, \beta_i=\min\{H, \frac{1}{\sigma_i}\}\sqrt{c_\beta nd\log(\frac{ndHK}{\delta})}, \epsilon=\frac{1}{KH}$ in Algorithm \ref{alg:DRW-CCE-LSI}. Then, under Assumption \ref{linear mdp assumption}, and \ref{assumption:vanishing state}, the suboptimality of DR-CCE-LSI satisfies below upper bound  with probability at least $1-\delta$
    \begin{equation}
\label{equation: optimality upper bound}
\begin{aligned}
    Regret(K)  & \leq 8\min\{H, \frac{1}{\min\{\sigma_i\}}\}\sqrt{2HK\log(\frac{3n}{\delta})} + \\ & 4 \max_i\{\beta_i\} \sum_{k=1}^K\sum_{h=1}^{H}\sum_{j=1}^d
             \sqrt{\phi_{h,j}^k \mathbf{1}_j^T (\Lambda_h^k)^{-1}\mathbf{1}_j\phi_{h,j}^k}
\end{aligned}
    \end{equation}
    Where $\phi_{h,j}^k=\phi_j(s_h^k,a_h^k)$ be the $j^{th}$ coordinate of $\phi_h^k$, and $\delta$ is any fixed constant in $(0,1)$.
\end{theorem}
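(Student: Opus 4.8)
The plan is to follow the optimistic least-squares value-iteration template, specialized to the robust, multi-agent, linear setting. The backbone is to prove that, on a high-probability event, the estimates $V_{i,h}^k$ are \emph{optimistic}, i.e. they dominate the best-response value $V_{i,h}^{\star,\pi_{-i}^k,\sigma}$ uniformly in $s$, and then to control the gap $V_{i,h}^k-V_{i,h}^{\pi^k,\sigma}$ by a telescoping argument that accumulates the bonus $\Gamma_{h,k}^i$ and martingale noise. Optimism lets me replace the regret summand $V_{i,1}^{\star,\pi_{-i}^k,\sigma}(s_1^k)-V_{i,1}^{\pi^k,\sigma}(s_1^k)$ by $V_{i,1}^k(s_1^k)-V_{i,1}^{\pi^k,\sigma}(s_1^k)$, after which only a self-bounding gap remains.

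First I would establish the good event on which the ridge estimate is accurate: for every $(i,h,k,s,\va)$,
$$\Big|\langle\phi_{s\va},\hat w_{i,h}^k\rangle-\inf_{\mu\in\mathcal U_{TV}^{\sigma_i}(\mu_h^0)}\langle\phi_{s\va},\E_\mu[V_{i,h+1}^k]\rangle\Big|\le\Gamma_{h,k}^i(s,\va).$$
Since $\hat w_{i,h,j}^k=\max_\alpha\{\hat\nu_{i,h,j}^k(\alpha)-\sigma_i\alpha\}$ solves $d$ separate clipped regressions against the targets $[V_{i,h+1}^k]_\alpha$, I would apply a self-normalized (elliptical-potential) bound to each coordinate $j$, producing the per-coordinate width $\sqrt{\mathbf 1_j^\top(\Lambda_h^k)^{-1}\mathbf 1_j}$ and hence the $d$-term bonus $\Gamma_{h,k}^i$. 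The subtlety is that $V_{i,h+1}^k$ depends on the very data used in the regression, so I must union-bound over an $\epsilon$-cover of the value class $\mathcal V$ and over the one-dimensional dual variable $\alpha\in[0,H]$. Covering $\mathcal V$ is exactly where CCE instability (Lemma~\ref{lemma: xie conter example}) bites; I would invoke Find-CCE, which pins a single $2\epsilon$-approximate CCE common to all payoff matrices within an $\epsilon$-ball, so that $V=\E_{\bar\pi}[Q]$ inherits a cover from the cover of the $Q$-class (controlled through the bounded norms of $\hat w_{i,h}^k$ and $\Lambda_h^k$). Calibrating $\beta_i=\min\{H,1/\sigma_i\}\sqrt{c_\beta nd\log(ndHK/\delta)}$ and $\epsilon=1/(KH)$ then makes the bonus dominate the resulting deviation with probability $1-\delta$.

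Next, on the good event I would prove optimism $V_{i,h}^k(s)\ge V_{i,h}^{\star,\pi_{-i}^k,\sigma}(s)$ by backward induction on $h$, the base case $h=H+1$ being trivial. For the inductive step I use the robust Bellman equation (\ref{equation:robust bellman equation}) with monotonicity of $\inf_P P(\cdot)$: the hypothesis $V_{i,h+1}^k\ge V_{i,h+1}^{\star,\pi_{-i}^k,\sigma}$ and the concentration bound give $\langle\phi_{s\va},\hat w_{i,h}^k\rangle+\Gamma_{h,k}^i(s,\va)\ge\inf_P P V_{i,h+1}^{\star,\pi_{-i}^k,\sigma}$, hence $Q_{i,h}^k\ge Q_{i,h}^{\star,\pi_{-i}^k,\sigma}$ (the clip to $\min\{H,1/\sigma_i\}$ is harmless because the true robust value is bounded by $\min\{H,1/\sigma_i\}$ via reward boundedness and the shrinkage of Proposition~\ref{proposition: equivalence of minimal value assumptions}). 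The multi-agent twist is the passage from $Q$ to $V$: as $\pi_h^k$ is a CCE of $(Q_{1,h}^k,\dots,Q_{n,h}^k)$, no unilateral deviation improves player $i$, so $V_{i,h}^k=\E_{\pi_h^k}[Q_{i,h}^k]\ge\max_{\pi_i'}\E_{\pi_i'\times\pi_{-i,h}^k}[Q_{i,h}^k]\ge V_{i,h}^{\star,\pi_{-i}^k,\sigma}$, closing the induction.

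Finally I would bound $\delta_{i,h}^k:=V_{i,h}^k(s_h^k)-V_{i,h}^{\pi^k,\sigma}(s_h^k)$. The crucial shrinkage observation is that, although the backup uses a worst-case kernel, the dual representation (Proposition~\ref{proposition: equivalence of minimal value assumptions}) together with the $1$-Lipschitz, monotone clipping $[\cdot]_\alpha$ lets me bound the difference of robust $Q$-values by the \emph{nominal}-kernel expectation of the value gap plus $2\Gamma_{h,k}^i$, namely $\delta_{i,h}^k\le 2\,\E_{\va\sim\pi_h^k}[\Gamma_{h,k}^i(s_h^k,\va)]+\E_{\va\sim\pi_h^k}\E_{P^0_{h,s_h^k,\va}}[V_{i,h+1}^k-V_{i,h+1}^{\pi^k,\sigma}]$; this avoids the $1/\sigma_i^H$ blow-up a worst-case-kernel recursion would incur. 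Replacing the conditional expectations by the realized $(\va_h^k,s_{h+1}^k)$ introduces martingale differences and leaves $\delta_{i,h+1}^k$, so telescoping over $h$ gives $\delta_{i,1}^k\le\sum_h 2\Gamma_{h,k}^i(s_h^k,\va_h^k)+\sum_h(\text{noise})$. Summing over $k$, taking $\max_i$, and bounding the noise by Azuma--Hoeffding (range $\min\{H,1/\sigma_i\}$, $HK$ terms, a $\log(3n/\delta)$ union over players) yields the first term $8\min\{H,1/\min_i\sigma_i\}\sqrt{2HK\log(3n/\delta)}$, while the accumulated bonuses give the second term $4\max_i\beta_i\sum_{k,h,j}\sqrt{\phi_{h,j}^k\mathbf 1_j^\top(\Lambda_h^k)^{-1}\mathbf 1_j\phi_{h,j}^k}$. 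The main obstacle, I expect, is the concentration step: simultaneously handling the data-dependence of $V_{i,h+1}^k$ through a cover that survives CCE instability via Find-CCE and the extra union over the dual variable $\alpha$, while keeping the confidence width aligned with the $d$-coordinate bonus, is the technically delicate heart of the argument.
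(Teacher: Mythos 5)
Your proposal is correct and follows essentially the same route as the paper's proof: a self-normalized concentration bound with a covering argument over the value class (stabilized by Find-CCE) and the dual variable $\alpha$, optimism by backward induction through the CCE property, reduction of the robust backup difference to the nominal-kernel expectation of the value gap via the clipping $[\cdot]_\alpha$ and $V_{i,h+1}^k \geq V_{i,h+1}^{\pi^k,\sigma}$, and a telescoping plus Azuma--Hoeffding argument with a union bound over players. The only cosmetic discrepancy is that $\pi_h^k$ is an exact CCE only of the covered game, hence a $2\epsilon$-approximate CCE of $(Q_{1,h}^k,\dots,Q_{n,h}^k)$, so optimism holds up to an additive $O(\epsilon H)$ slack that your choice $\epsilon = 1/(KH)$ absorbs, exactly as in the paper.
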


\paragraph{An interesting observation} In our algorithm, each player may have different risk preferences by choosing various uncertainty levels $\sigma_i$. A player with a smaller uncertainty level is aggressive and willing to take risks, since he tries to find a policy that is less robust. The bound above inevitably depends on $\max_i\{\beta_i\}$, which may be large if at least one of the players is risk seeking. Our theoretical result implies that, given finite interacting episodes, players learning in the game should have a common sense of risk preference level to achieve the best sample efficiency.

\paragraph{Necessity for further assumption} The term $\sum_{k=1}^K\sum_{h=1}^{H}\sum_{j=1}^d \sqrt{\phi_{h,j}^k \mathbf{1}_j^T (\Lambda_h^k)^{-1}\mathbf{1}_j\phi_{h,j}^k}$ is similar to \cite{liu2024upper, liu2024DRL_online_linear, ma2022DRL_offline_linear,  wang2024offline_linear} in single agent RL. Since \cite{ma2022DRL_offline_linear, wang2024offline_linear} consider offline setting, the term is further upper-bounded under certain data coverage assumption. While in online learning, \cite{liu2024upper,liu2024DRL_online_linear} proposed assumptions that $\E_\pi^{P^0}[\phi_{s\va}\phi_{s\va}^T]\geq c/d \cdot \mathbf{I}$ for some constant $c$. This assumption implicitly requires that the environment is exploratory enough, under certain structural properties of feature mapping $\phi$ and arbitrary policy $\pi$. In this paper, we are the first to show that such a bonus term has a learnability issue, followed by a fine-grained analysis on how the structure of feature mapping $\phi$ and properties of the transition kernel affect the sample complexity of our algorithm.

\begin{theorem}
\label{theorem: Inlearnability Issue}
    There exist an MDP instance, with three states, two actions, and horizon length 2, such that the term $\sum_{k=1}^K\sum_{j=1}^d \sqrt{\phi_{j}^k \mathbf{1}_j^T (\Lambda^k)^{-1}\mathbf{1}_j\phi_{j}^k}$ has an order of $\Omega(K)$.
\end{theorem}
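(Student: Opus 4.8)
The plan is to exhibit a linear MDP on which the algorithm is \emph{forced}, at some fixed step, to repeatedly observe one and the same non--axis-aligned feature vector, so that the design matrix $\Lambda^k$ equals $\lambda I$ plus a growing rank-one term. The key observation is that the bonus term in question aggregates the \emph{diagonal} entries $[(\Lambda^k)^{-1}]_{jj}=\mathbf{1}_j^\top(\Lambda^k)^{-1}\mathbf{1}_j$, i.e.\ it measures uncertainty along the fixed coordinate axes $e_j$, whereas the elliptical-potential lemma only controls the quadratic form $\sqrt{(\phi^k)^\top(\Lambda^k)^{-1}\phi^k}$ along the \emph{explored} direction. When every visited feature points in a single direction $\phi^\star$ that is not axis-aligned, the quadratic form shrinks like $1/k$ (so the standard bonus would still sum to $\mathcal{O}(\sqrt{K})$), but the orthogonal axis directions are never explored, so the diagonal entries of $(\Lambda^k)^{-1}$ stay bounded away from zero, and the coordinate-wise bonus accumulates linearly. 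This contrast is exactly what I want to isolate.

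Concretely, I would take a single-agent linear MDP (the $n=1$ special case covered by the theorem) with horizon $H=2$ and three states: a deterministic initial state $s_1$, a successor state $s_2$, and an isolated absorbing fail state $s_f$ with $P_h^0(s_f\mid s_f,\va)=1$ and zero reward, which enforces Assumption~\ref{assumption:vanishing state} (vanishing minimal value) exactly as described after Proposition~\ref{proposition: equivalence of minimal value assumptions}. The crucial design choice is to make the step-$1$ feature \emph{action-independent}: set $\phi(s_1,a_1)=\phi(s_1,a_2)=\phi^\star$ where $\phi^\star$ has two equal positive entries summing to one, e.g.\ $\phi^\star=(\tfrac12,\tfrac12,0,\dots,0)$, so that the simplex constraints $\sum_j\phi_j=1$ and $\phi_j\ge 0$ of Assumption~\ref{linear mdp assumption} hold, while $\phi^\star$ is not a standard basis vector. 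The transition $P_1(\cdot\mid s_1,a)=\langle\phi^\star,\mu_1(\cdot)\rangle$ and the step-$2$ features/rewards are then chosen to be any valid completion consistent with linearity and the fail-state construction; their exact form is irrelevant to the step-$1$ information matrix. Because $s_1$ is the deterministic initial state, the state at $h=1$ is $s_1$ in every episode and, by action-independence, $\phi_1^k=\phi^\star$ for all $k$ regardless of the policy $\pi_1^k$ produced by Find-CCE. Hence $\Lambda_1^k=\lambda I+(k-1)\,\phi^\star(\phi^\star)^\top$.

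With $\Lambda_1^k$ in closed form, Sherman--Morrison gives
\[
[(\Lambda_1^k)^{-1}]_{jj}=\frac{1}{\lambda}\left(1-\frac{(k-1)(\phi_j^\star)^2}{\lambda+(k-1)\|\phi^\star\|_2^2}\right),
\]
which is monotonically decreasing in $k$ and therefore bounded below, for every $k$, by its limit $\tfrac1\lambda\bigl(1-(\phi_j^\star)^2/\|\phi^\star\|_2^2\bigr)$. For $\phi^\star=(\tfrac12,\tfrac12,0,\dots)$ this limit equals $\tfrac{1}{2\lambda}>0$ on the two active coordinates, so the per-episode step-$1$ contribution satisfies $\sum_{j}\phi_j^\star\sqrt{[(\Lambda_1^k)^{-1}]_{jj}}\ge 2\cdot\tfrac12\sqrt{\tfrac{1}{2\lambda}}=\sqrt{\tfrac{1}{2\lambda}}$ uniformly in $k$. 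Restricting the full double sum to the step $h=1$ (which only lower-bounds it, since all summands are nonnegative) then yields $\sum_{k=1}^K\sum_{j=1}^d\sqrt{\phi_{j}^k\,\mathbf{1}_j^\top(\Lambda^k)^{-1}\mathbf{1}_j\,\phi_{j}^k}\ \ge\ K\sqrt{\tfrac{1}{2\lambda}}=\Omega(K)$, proving the claim.

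I expect the main obstacle to be not the linear-algebra computation, which is a clean rank-one Sherman--Morrison argument yielding a term-by-term constant lower bound, but rather the bookkeeping of assembling a fully valid instance: one must simultaneously respect the linear-MDP parameterization of both $P$ and $r$, the simplex/nonnegativity constraints on $\phi$, and the vanishing-minimal-value assumption, while guaranteeing that the degenerate, rank-deficient exploration at step $1$ is genuinely unavoidable for \emph{any} policy the algorithm can output. Making the step-$1$ feature action-independent is what removes all dependence on the (otherwise hard-to-control) CCE policy and makes the lower bound hold pathwise; the remaining care is in verifying that this degeneracy is compatible with all structural assumptions rather than an artifact that the assumptions rule out.
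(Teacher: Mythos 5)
Your proposal is correct and matches the paper's proof in its essential mechanism: both force the learner to observe the single non--axis-aligned feature $(\tfrac12,\tfrac12)$ at the relevant step in every episode, so that $\Lambda^k=\lambda I$ plus a growing rank-one term whose inverse has diagonal entries bounded below by a positive constant (the paper computes the $2\times 2$ inverse explicitly where you invoke Sherman--Morrison, arriving at the same per-episode contribution of about $\sqrt{1/(2\lambda)}$ and hence the $\Omega(K)$ total). The one point where you are actually tighter than the paper is the justification that the same feature is collected every episode: the paper sets $\phi(s_0,a_1)\neq\phi(s_0,a_2)$ and argues the samples are gathered ``under the optimal policy,'' which the learning algorithm need not follow during training, whereas your action-independent feature at the deterministic initial state makes the rank-one degeneracy hold pathwise for every policy the algorithm could output.
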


The Proof of Theorem \ref{theorem: Inlearnability Issue} can be found at
Appendix B. The theorem implies that we can't upper bound $\sum_{k=1}^K\sum_{h=1}^{H}\sum_{j=1}^d \sqrt{\phi_{h,j}^k \mathbf{1}_j^T (\Lambda_h^k)^{-1}\mathbf{1}_j\phi_{h,j}^k}$ without imposing further assumptions, which serves as a unique challenges for online robust reinforcement learning with linear function approximation. We address this issue by providing a sufficient condition to ensure effective learning of our algorithm in the Corollary below, with the Proof postponed to the Appendix.

\begin{corollary}
\label{corollary: upper bound with assumption}
 Without loss of generality, let $\gS \times \gA \subset R^m$ and $\phi: R^m\rightarrow R^d$ with $m \geq d$, if we assume:
 \begin{itemize}
     \item $\phi$ is non-degenerated. i.e. it does not map any $m$ dimensional subset of $R^m$ into $d^\prime$ dimensional subset in $R^d$, where $d^\prime < d$.
     \item For any state-action pairs $\{s^j,\va^j\}_{j=1}^d$, if they satisfy the condition $\sum_{j=1}^d\phi(s^j,\va^j)\phi(s^j,\va^j)^T > \mathbf{0}$, then there exist a constant $c >0$, such that $\sum_{j=1}^d\phi(s^j,\va^j)\phi(s^j,\va^j)^T \geq c \cdot \mathbf{I}$.
     \item $P_h^0(s^\prime|s,\va)$ is absolutely continuous w.r.t Lebesgue measure for all $(h,s,\va,s^\prime) \in [H]\times\gS\times\gA\times\gS$
 \end{itemize} Then, following the setting in Algorithm 1, the regret of DR-CCE-LSI is of order $\Omega\big(dH\min\{H, \frac{1}{\min\{\sigma_i\}}\}\sqrt{K}\big)$  with probability at least $1-\delta$.   
\end{corollary}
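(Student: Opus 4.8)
The plan is to start from the instance-dependent bound~\eqref{equation: optimality upper bound} of Theorem~\ref{theorem: Upper Bound} and reduce the corollary to a single deterministic estimate: that the three structural assumptions force the minimum eigenvalue of every Gram matrix $\Lambda_h^k$ to grow linearly in $k$. Writing $\mathbf 1_j^\top(\Lambda_h^k)^{-1}\mathbf 1_j=[(\Lambda_h^k)^{-1}]_{jj}$ and using $\phi_{h,j}^k\ge0$, the first summand of~\eqref{equation: optimality upper bound} is already $\mathcal{O}(\min\{H,1/\min\{\sigma_i\}\}\sqrt{HK\log(n/\delta)})$, dominated by the target order since $\sqrt H\le dH$, so everything reduces to controlling
\[
S:=\sum_{k=1}^K\sum_{h=1}^{H}\sum_{j=1}^d \phi_{h,j}^k\sqrt{[(\Lambda_h^k)^{-1}]_{jj}}.
\]
Theorem~\ref{theorem: Inlearnability Issue} shows $S$ can be $\Omega(K)$ in general, so the assumptions must be spent precisely here; note also that the coordinate-wise sum of square roots is genuinely larger than the quadratic form $\sqrt{(\phi_h^k)^\top(\Lambda_h^k)^{-1}\phi_h^k}$ controlled by the usual elliptical-potential lemma, which is exactly why eigenvalue growth, rather than a log-determinant bound, is what we need.

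The key lemma I would establish is that, for every $h$ and with probability one, $\lambda_{\min}(\Lambda_h^k)\ge \frac{c(k-1)}{2d}$ once $k\ge k_0=\Theta(d)$. To prove it, partition the $k-1$ past samples at step $h$ into $\lfloor(k-1)/d\rfloor$ consecutive blocks of $d$ samples and process each block one feature at a time. Conditioned on the history, the visited state $s_h^\tau$ is drawn from the transition $P_{h-1}^0(\cdot\,|\,s_{h-1}^\tau,\va_{h-1}^\tau)$, which has a Lebesgue density by the absolute-continuity assumption; since $\phi$ is non-degenerate, the pushforward of this density under $\phi(\cdot,\va_h^\tau)$ assigns zero mass to any proper subspace of $\R^d$. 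Hence each newly added feature leaves the span of the earlier ones in its block almost surely, so the $d$ features of every block are linearly independent, i.e. $\sum_{j=1}^d\phi(s^j,\va^j)\phi(s^j,\va^j)^\top\succ\mathbf0$. The second assumption then upgrades this to $\succeq c\,\mathbf I$, so each block contributes at least $c\,\mathbf I$ and $\Lambda_h^k\succeq\lambda\mathbf I+c\lfloor(k-1)/d\rfloor\mathbf I$, giving the claimed linear growth.

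Given the key lemma the remainder is routine. For $k\ge k_0$, $[(\Lambda_h^k)^{-1}]_{jj}\le 1/\lambda_{\min}(\Lambda_h^k)\le 2d/(c(k-1))$ uniformly in $j$, and collapsing the inner sum with $\sum_{j=1}^d\phi_{h,j}^k=1$ (Assumption~\ref{linear mdp assumption}) gives $\sum_j\phi_{h,j}^k\sqrt{[(\Lambda_h^k)^{-1}]_{jj}}\le\sqrt{2d/(c(k-1))}$; the $k<k_0$ burn-in contributes at most $1$ per episode since $\Lambda_h^k\succeq\lambda\mathbf I$, i.e. $\mathcal{O}(dH)$ in total. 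Summing $\sum_k 1/\sqrt{k-1}=\mathcal{O}(\sqrt K)$ and then over $h$ yields $S=\mathcal{O}(H\sqrt{dK/c}+dH)$. Multiplying by $\max_i\beta_i=\min\{H,1/\min\{\sigma_i\}\}\sqrt{c_\beta nd\log(ndHK/\delta)}$ produces the $\sqrt d\cdot\sqrt d=d$ combination and delivers $Regret(K)=\mathcal{O}(dH\min\{H,1/\min\{\sigma_i\}\}\sqrt K)$ up to logarithmic and $\sqrt n$ factors, the stated order (the displayed $\Omega$ should read $\mathcal O$, since this is an upper bound on the regret).

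The main obstacle is the key lemma, and specifically making the ``features span $\R^d$ almost surely'' step rigorous under adaptive data collection: the policy $\pi_h^\tau$ returned by Find-CCE depends on the entire past, so the features are not i.i.d. The clean route is to condition on the $\sigma$-field generated by the history up to the current feature and invoke absolute continuity at that conditional level, so that the bad event of landing in a fixed proper subspace has conditional probability zero regardless of how the policy was selected. Two secondary points need care: actions are discrete while states are continuous, so non-degeneracy must be applied slice-wise in the state variable for the realized action; and the first layer $h=1$ does not inherit a transition density, so the argument requires the initial-state distribution to be absolutely continuous as well (or a separate treatment of this single layer) --- indeed it is precisely a degenerate, discrete first layer that drives the $\Omega(K)$ phenomenon of Theorem~\ref{theorem: Inlearnability Issue}. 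Finally, I would union-bound the $H$ probability-one spanning events together with the high-probability event of Theorem~\ref{theorem: Upper Bound} to preserve the overall $1-\delta$ guarantee.
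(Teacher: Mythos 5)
Your proposal follows essentially the same route as the paper's own proof: reduce to bounding $\lambda_{\min}(\Lambda_h^k)$ from below, partition the past samples into blocks of size $d$, use non-degeneracy of $\phi$ together with absolute continuity of $P_h^0$ to get almost-sure linear independence within each block, upgrade to $\succeq c\,\mathbf{I}$ via the second assumption, and conclude $\lambda_{\min}(\Lambda_h^k)=\Omega(ck/d)$ and hence the $\mathcal{O}\big(dH\min\{H,1/\min\{\sigma_i\}\}\sqrt{K}\big)$ regret. You additionally flag two legitimate points the paper glosses over --- the first layer $h=1$ lacks a transition density and so needs either an absolutely continuous initial distribution or separate treatment, and the stated $\Omega$ in the corollary should read $\mathcal{O}$ --- but these are refinements of, not departures from, the paper's argument.
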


To clarify, any $d^\prime$ dimensional subset $\mathcal{B}$ in $R^d$ with $d^\prime \leq d$ means that, there should be exactly $d^\prime$ linearly independent vectors in $R^d$, to represent any vectors in $\mathcal{B}$ with linear combinations.

In Corollary \ref{corollary: upper bound with assumption}, the structural assumption on the feature mapping $\phi$ carries significant intuitive implications. A non-degenerate feature mapping ensures that, when projecting from a high-dimensional space $R^m$ to $R^d$, the representation fully utilizes the information in $R^d$ without collapsing into a lower-dimensional subspace. Moreover, the condition $\sum_{j=1}^d\phi(s^j,\va^j)\phi(s^j,\va^j)^T \geq c \cdot \mathbf{I}$ guarantees that the feature mapping effectively explores the state-action space. This assumption is independent of the transition kernel $P$ and policy $\pi$, which is not controllable during online learning process. In addition, we can reconstruct any tabular Markov game with linear MDP assumption, i.e. let $\phi(s,\va)=\mathbf{e}_{s\va} \in R^{|\gS| \times |\gA|}$ and the $(s,\va)^{th}$ element of $\mu_h(s^\prime) \in R^{|\gS| \times |\gA|}$ be $P_h^0(s^\prime|s,\va)$. And verify that the feature mapping $\phi$ in the tabular case satisfies both assumptions in Corollary \ref{corollary: upper bound with assumption}, implying applicability of our assumption to a broader range of Markov game instances.

\paragraph{Discussion on upper bound} A recent study by \cite{liu2024upper} applies a variance-weighted ridge regression approach in the single-agent setting, obtaining an upper bound of $\mathcal{O}\big(dH \min\{H, \frac{1}{\sigma}\} \sqrt{K}\big)$. This is complemented by an information-theoretical lower bound of order $\Omega\big(d H^{1/2}\min \{H, \frac{1}{\sigma}\} \sqrt{K}\big).$ Our upper bound in the general-sum Markov game matches that of the single-agent case and is minimax optimal in terms of the feature dimension $d$. Notably, in online setting with linear function approximation, our algorithm is the first to achieve minimaxity in terms of feature dimension. For example, in non-robust counterpart, the minimax rate is $\sqrt{d^2H^3K}$, while the best result in online linear Markov game has an upper of $\sqrt{d^3H^4K}$ for two two-player zero sum game and $\sqrt{d^3H^5K}$ for multi-player general sum game.  For more details, please refer to the discussion in the Appendix A.

\section{Simulation study}
We conduct numerical experiments to illustrate the effectiveness of our proposed algorithm, DR-CCE-LSI, to achieve robust equilibrium under model uncertainty. And compare it with the state-of-the-art algorithm NQOVI, which aims at solving non-robust online linear Markov game \cite{cisneros2023finite}. The simulated linear Markov game consists of 5 states, two players, with a horizon length $H=3$. All the states are designed specifically to highlight the conflict of interest and environmental perturbation, such that a robust, stable equilibrium is crucial for good performance. For example, when considering personal interest only, player one will seek to reach state $s_1$ for reward maximization, while player two prefers reaching state $s_2$. The state $s_f$, is a self-absorbing fail state satisfying the minimum value assumption. The parameter $0 \leq \rho \leq 1$ affects the transition probability of entering the fail state from $s_1$ and $s_2$. Therefore, when our nominal transition kernel has $\rho = 0$, a larger value of uncertainty level $\rho$ in the target Markov game implies a higher chance of transitioning to the fail state. As a result, the performance of any policy learned from an algorithm that does not consider robustness will degrade with the increase of uncertainty level, while any robust algorithms will present mild performance degradation. For more details of the experiment setup, please refer to the Appendix A. 

\begin{figure}[ht]
    \centering
    \includegraphics[width=0.45\textwidth]{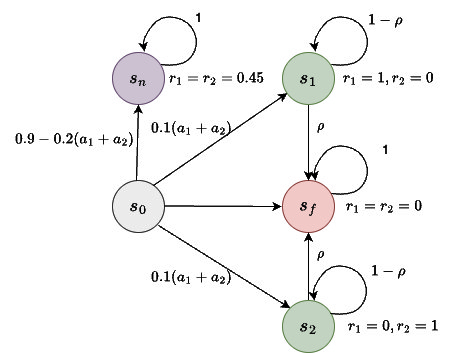}
    \caption{Constructed simulated Markov game}
    \label{fig:Simulation MDP}
\end{figure}

\begin{figure}[ht]
        \centering
         \includegraphics[width=0.45\textwidth]{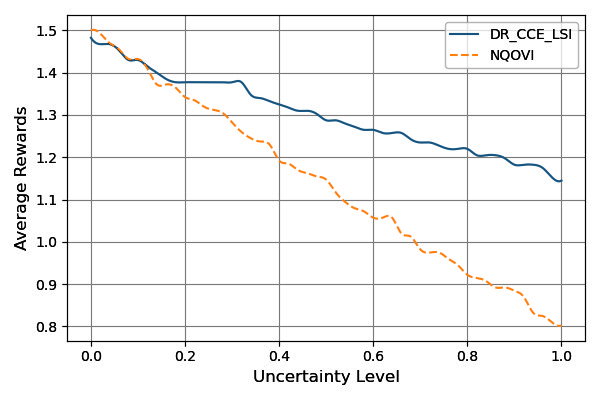}
        \caption{Comparison of the average reward under different uncertainty level.}
        \label{fig:Simulation Result}
 \end{figure}

We can see from Figure \ref{fig:Simulation Result} that, with increasing uncertainty level between the nominal Markov game and target Markov game, our algorithm's performance is significantly better than NQOVI, the state-of-the-art in online linear Markov game that does not account for robustness. This experiment's results validate the success of our algorithm in addressing the sim-to-real gap in our simulated linear Markov game example.

\section{Conclusion}
In this paper, we identify the hardness result for online learning in Markov game, with theoretical validation. To address this issue, we adopt the minimal value assumption, and provide an intuitive explanation of this assumption by Proposition \ref{proposition: equivalence of minimal value assumptions}. Then, we propose a novel algorithm, DR-CCE-LSI, with an instance-dependent upper bound. By further illustration of impossibility of learning and studying an explicit relation between the structure properties of feature mapping $\phi$, transition kernel and the instance dependent upper bound, our algorithm is provably sample-efficient in finding an $\epsilon-$approximate robust CCE (by online to batch conversion) with the regret of order $\mathcal{O}\big(dH\min\{H, \frac{1}{\min\{\sigma_i\}}\}\sqrt{K}\big)$. Simulation studies are conducted to validate the effectiveness of our algorithm in dealing with the sim-to-real gap. 

Building on our findings, future work will focus on refining our upper bound, particularly w.r.t the horizon length $H$, to align with the information-theoretical lower bound. One possible approach is to incorporate variance-weighted ridge regression into our algorithm. However, applying a variance weighted analysis framework to Markov game setting is highly non-trivial, both theoretically and algorithmically, since the fundamental requirement of learning a monotonic value function in this framework is not accessible in multi agent case. More fine-grained analysis and algorithm design are required for further improvement in the upper bound.  

\paragraph{Broader Impacts}
This paper presents work whose goal is to advance the field of Reinforcement Learning. There are many potential societal consequences of our work, none of which we feel must be specifically highlighted here.

\bibliography{refs}


\newpage
\appendix
\onecolumn
\section{Preliminaries}

\subsection{Discussion on theoretical upper bounds}
\label{appendix: discussion on upper bounds}
In this section, we provide the theoretical upper bounds and lower bounds in various setting of both single agent RL and Markov game.

\begin{table}[h!]
\centering
\begin{tabular}{|c|c|c|c|c|}
\hline
\textbf{Area} & \textbf{Robustness}& \textbf{Best result paper} & \textbf{Upper Bound}  & \textbf{Lower Bound} \\ \hline
Single agent   & No            &   \cite{he2023online_linear_minimax}            & $\sqrt{d^2H^3K}$        &    $\sqrt{d^2H^3K}$    \\ \hline
Two player   & No         & \cite{xie2020two-player-simultaneous-move}              & 
$\sqrt{d^3H^4K}$&    $\sqrt{d^2H^3K}$  \\ \hline
Multi player    & No           & \cite{cisneros2023finite}              & $\sqrt{d^3H^5K}$        &    $\sqrt{d^2H^3K}$  \\ \hline
Single agent   & Yes          & \cite{liu2024upper}              & $dH\min\{H,\frac{1}{\sigma}\}\sqrt{K}$        &   $dH^{\frac{1}{2}}\min\{H,\frac{1}{\sigma}\}\sqrt{K}$    \\ \hline
Multi player    & Yes       & Ours             & $dH\min\{H,\frac{1}{\sigma}\}\sqrt{K}$       &     $dH^{\frac{1}{2}}\min\{H,\frac{1}{\sigma}\}\sqrt{K}$  \\ \hline
\end{tabular}
\caption{Result in valued based episodic online single agent and centralized Multi-agent RL with global linear function approximation}
\label{tab:example}
\end{table}

Our upper bound matches that of \cite{liu2024upper} in the single-agent setting, and is nearly minimax optimal in terms of feature dimension d.

\begin{figure}[H]
    \centering
    \includegraphics[width=1\linewidth]{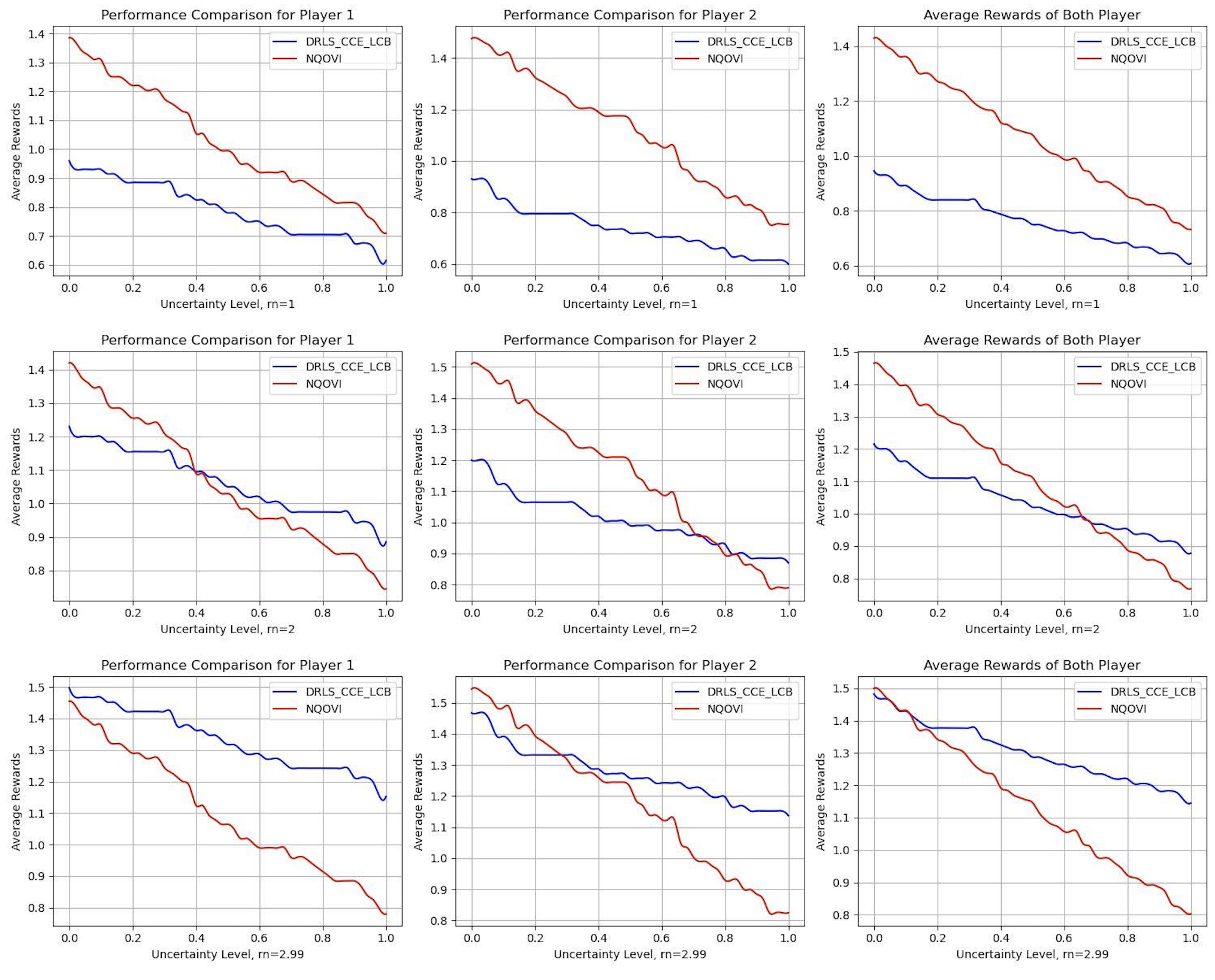}
    \caption{Performance comparison across various players and perturbed sensitivity level.}
    \label{fig:RMG-Example}
\end{figure}

\subsection{Experiment results}
\label{appendix: experiment result}
We construct a simulated two player general sum Markov game , with state space $\mathcal{S}=\{s_0,s_1,s_2,s_f,s_n\}$, action space $\gA = \{(a_1,a_2)\}$, and horizon length H=3. The feature mapping is defined as
$\phi(s_0,a_1,a_2)^T=(0.1(a_1+a_2), 0.1(a_1+a_2), 0.1, 0.9-0.2(a_1+a_2)), \phi(s_1,a_1,a_2)^T=(1,0,0,0),  \phi(s_2,a_1,a_2)^T=(0,1,0,0), \phi(s_f,a_1,a_2)^T=(0,0,1,0), \phi(s_n,a_1,a_2)^T=(0,0,0,1)$, where the action $a_1,a_2$ take integer value 0 or 1. $\mu_1^0=(\delta_{s_1},\delta_{s_2},\delta_{s_f},\delta_{s_n}), \mu_2^0=((1-\rho)\delta_{s_1} + \rho\delta_{s_f},  (1-\rho)\delta_{s_2} + \rho\delta_{s_f}, \delta_{s_f}, \delta_{s_n})$, where $\delta_s$ is the Dirac delta function. For reward function, $r_1(s_1)=1,r_1(s_2=0),r_2(s_1)=0,r_2(s_2)=1, r_1(s_n)=r_2(s_n)=0.45,r_1(s_f)=r_2(s_f)=0$. 

\paragraph{Result interpretation} The experiment result is an average over 100 seeds. In Figure \ref{fig:RMG-Example}, the first and second columns depict the performance of players 1 and 2, respectively, while the third column reflects their average performance. Each row varies the parameter $r_n$, which governs the sensitivity of the simulated Markov game to perturbations in the environment’s transition dynamics. Higher values of $r_n$ increase sensitivity to transition perturbations. Our findings indicate that, in environments with low sensitivity to transition changes, our algorithm underperforms compared to NQOVI, as it prioritizes robust policies over optimal ones. Conversely, in perturbation-sensitive environments, our algorithm significantly outperforms NQOVI, highlighting its efficacy in addressing the sim-to-real gap in online linear Markov games.

\section{Proof of Main Theorem}
\subsection{Proof of Theorem 4.1}
\label{appendix: theorem lower bound}
We begin by constructing an instance of Markov Game in below example, for an illustration of the example, please refer to Figure \ref{image:RMG-Example}.
\\
\begin{example}(An instance of robust Markov Game) 
\label{example:robust markov game lower bound}
We establish two robust Markov Game $\mathcal{MG}_{rob}^\theta$ with $\theta \in [2]$, horizon H=3, the number of players n=2, let $\gS=\{s_{good},s_{bad},s_{good}^1,s_{good}^2\}$ be the state space and $\gA=\{\va_{11}=(1,1),\va_{12}=(1,2),\va_{21}=(2,1),\va_{22}=(2,2)\}$ be the action space. The reward function $R_{i,h}$ for each player i is the same for $
\theta \in \{1,2\}$. Specifically, $\forall(i,\va, h) \in [2] \times \mathcal{A} \times [2]$:
\begin{equation*}
        R_{i,h}(s, \va)=\left\{\begin{array}{ll}1, & s=s_{\mathrm{good}} \\ 0, & s=s_{\text {bad }}\end{array}\right.
\end{equation*}
And for $h=3$,
\begin{equation*}
    R_{i,3}(s,\va)=\sI(s==s_{good}^i)
\end{equation*}
The transition kernel for the good state is defined as $\forall (i,\theta) \in[2] \times
[2]$:
\begin{equation*}
    P^\theta_1(s_{good}| s_{good})=1,P^\theta_2(s_{good}^i | s_{good})=\frac{1}{2}
\end{equation*}
The transition kernel for the bad state is contingent upon the specific robust Markov game that is active.:
\begin{equation*}
    P^\theta_2(s_{good}^1 | s_{\mathrm{bad}}, \va)=\left\{\begin{array}{ll}p, & \va = \va_{\theta\theta} \\ q, & \va = \va_{(3-\theta)(3-\theta)} \\
    \frac{1}{2}, & \text{else} \end{array}, \right.
\end{equation*}
Here, p,q are two constant and $0 < q < p < 1$. To aligned with our algorithm, we additionally assume that the uncertainty set in this example satisfies the $(s,a)$-rectangularity assumption  under the total variation (TV) distance metric and $\sigma_0=\sigma_1=\sigma$. 
\end{example}

\begin{figure}[H]
\vskip 0.2in
\begin{center}
\centerline{\includegraphics[width=0.6\columnwidth]{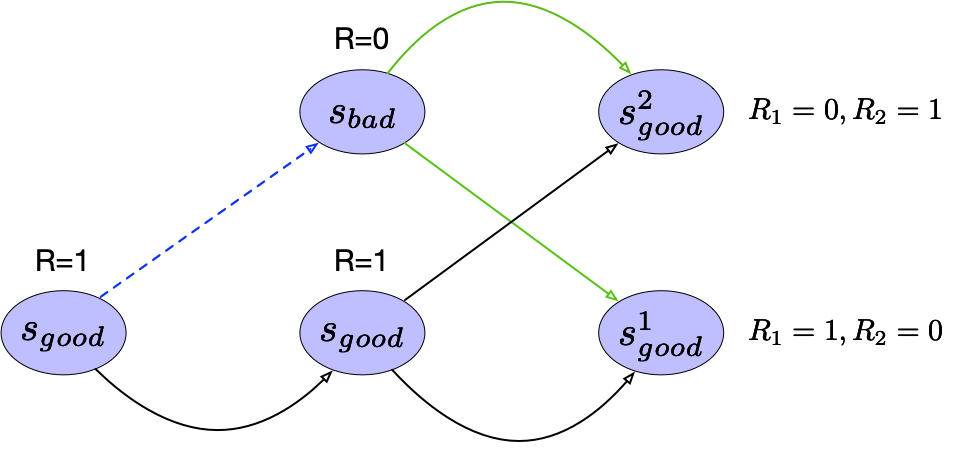}}
\caption{The Flow of robust Markov Game in Example \ref{example:robust markov game lower bound}.
For player i, $s_{good}^i$ represents the favorable state in which the player receives a reward of 1, otherwise 0. The solid line indicates transitions of the nominal transition kernel $P^\theta$ for $\mathcal{MG}_{rob}^\theta$, while dashed line is involved when we consider robustness. For green line, it indicates difference in the transition probability of the Markov game instance.
}
\label{image:RMG-Example}
\end{center}
\vskip -0.2in
\end{figure}

Before the formal prove of our theorem, we introduce an auxiliary lemma by \cite{lu2024Dr_Interactive_Data_Collection}, which measures the performance difference for robust value function.

\begin{lemma}(Performance difference for robust value function, \cite{lu2024Dr_Interactive_Data_Collection}, modified) 
\label{lemma: robust value difference lemma}
For any robust Markov Game satisfying conditions illustrated in Example \ref{example:robust markov game lower bound}, and any fixed policy $\pi$, the following inequality holds for any $(i,s) \in [2]\times \gS$:
\begin{equation}
    \label{equation: robust value difference}
    V_{i,1}^{\star,\pi_{-i},\sigma}(s) -V_{i,1}^{\pi,\sigma}(s) \geq \mathbb{E}_{(P^{\pi^{\star,i}}, \pi^{\star,i})}\left[\sum_{h=1}^H \sum_{\va \in \mathcal{A}}(\pi_h^{\star,i}(\va | s_h)-\pi_h(\va | s_h)) \cdot Q_{i,h}^{\pi,\sigma}(s_h, \va) | s_1=s\right]
\end{equation}
where the expectation is taken with respect to the trajectoreis induced by best response policy $\pi^{\star,i}$, and transition kernel $P^{\pi^{\star,i}}$.Here, the transition kernel $P^{\pi^{\star,i}}$ is defined as:
\begin{equation}
    P_h^{\pi^{\star,i}}(\cdot|s,\va) = \underset{P \in \gU_{TV}^{\sigma_i}(P^0_{h,s,\va})}{\operatorname{arginf}}\E_P[V_{i,h+1}^{\star,\pi_{-i},\sigma}]
\end{equation}
\end{lemma}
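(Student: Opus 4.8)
The plan is to prove a robust analogue of the classical performance-difference lemma by a one-step decomposition followed by unrolling a recursion in $h$. Throughout I write $\pi^{\star,i}$ for the best-response joint policy (player $i$ best-responds while the rest play $\pi_{-i}$), so that $\BRV[h]=V_{i,h}^{\pi^{\star,i},\sigma}$ by (\ref{best response value function}), and I keep $P^{\pi^{\star,i}}$ for the worst-case kernel attaining the robust value of the best response, exactly as defined in the statement.

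First I would fix $(h,s)$ and apply the robust Bellman equation (\ref{equation:robust bellman equation}) to expand both $\BRV[h](s)$ and $\RV[h](s)$ as $\pi_h$-weighted sums of the corresponding $Q$-values. Adding and subtracting the cross term $\sum_{\va}\pi_h^{\star,i}(\va|s)\RQ[h](s,\va)$ splits the gap $\BRV[h](s)-\RV[h](s)$ into an \emph{advantage term} $(B)=\sum_{\va}\bigl(\pi_h^{\star,i}(\va|s)-\pi_h(\va|s)\bigr)\RQ[h](s,\va)$, which is precisely the per-step summand appearing in the claimed bound, and a \emph{continuation term} $(A)=\sum_{\va}\pi_h^{\star,i}(\va|s)\bigl(\BRQ[h](s,\va)-\RQ[h](s,\va)\bigr)$.

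The crucial step is to lower-bound $(A)$. Since the reward cancels, $\BRQ[h](s,\va)-\RQ[h](s,\va)=\inf_{P}\E_P[\BRV[h+1]]-\inf_{P}\E_P[\RV[h+1]]$, both infima over the common set $\mathcal{U}_{TV}^{\sigma_i}(P^0_{h,s,\va})$. Because $P^{\pi^{\star,i}}_h(\cdot|s,\va)$ is the minimizer for the first infimum but merely a \emph{feasible} kernel for the second, I may replace the second infimum by its value at $P^{\pi^{\star,i}}_h$ at the cost of one inequality, giving $\BRQ[h](s,\va)-\RQ[h](s,\va)\geq\E_{P^{\pi^{\star,i}}_h(\cdot|s,\va)}[\BRV[h+1]-\RV[h+1]]$. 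This sub-optimality comparison -- the adversary's worst case tuned to the strong policy can only penalize the weak policy further -- is the single place where robustness turns the classical equality into a one-sided bound, and is the main obstacle to state cleanly.

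Finally I would assemble the recursion $\BRV[h](s)-\RV[h](s)\geq (B)+\E_{\va\sim\pi_h^{\star,i},\,s'\sim P^{\pi^{\star,i}}_h(\cdot|s,\va)}[\BRV[h+1](s')-\RV[h+1](s')]$ and unroll it from $h=1$ to $H$ using the terminal condition $V_{i,H+1}=0$. As every continuation expectation is taken under $(\pi^{\star,i},P^{\pi^{\star,i}})$, the telescoped sum of the advantage terms is exactly the expectation over the trajectory distribution induced by $(\pi^{\star,i},P^{\pi^{\star,i}})$ that appears on the right-hand side of (\ref{equation: robust value difference}), which completes the argument. Everything past the infimum-comparison step is routine bookkeeping.
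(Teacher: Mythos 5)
Your proposal is correct and follows essentially the same route as the paper: the same add-and-subtract decomposition into an advantage term plus a continuation term, the same key inequality obtained by evaluating the second infimum at the feasible kernel $P_h^{\pi^{\star,i}}$ that minimizes the first, and the same unrolling by induction. The only nit is the phrase ``$\pi_h$-weighted sums'' for both value functions ($\BRV[h]$ is weighted by $\pi_h^{\star,i}$), but your explicit decomposition makes clear this is just loose wording.
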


\begin{proof}
 \begin{equation}
     \begin{aligned}
          \BRV[1](s)& - \RV[1](s)  = \E_{\pi_1^{\star,i}}[\BRQ[1](s,\va)]-\E_{\pi_1}[\RQ[1](s,\va)] \\
          & = \E_{\pi_1^{\star,i}}[\RQ[1](s,\va)] - \E_{\pi_1}[\RQ[1](s,\va)] + 
          \E_{\pi_1^{\star,i}}[\BRQ[1](s,\va)] - \E_{\pi_1^{\star,i}}[\RQ[1](s,\va)] \\
          & \overset{(i)}{\geq}  \sum_{\va \in \mathcal{A}}(\pi_1^{\star,i}(\va | s)-\pi_1(\va | s)) \cdot Q_{i,1}^{\pi,\sigma}(s, \va) + \E_{\pi_1^{\star,i}, P_1^{\pi^{\star,i}}}[\BRV[2]- \RV[2]|s_1=s] \\
          & \geq \mathbb{E}_{(P^{\pi^{\star,i}}, \pi^{\star,i})}\left[\sum_{h=1}^H \sum_{\va \in \mathcal{A}}(\pi_h^{\star,i}(\va | s_h)-\pi_h(\va | s_h)) \cdot Q_{i,h}^{\pi,\sigma}(s_h, \va) | s_1=s\right]
     \end{aligned}
 \end{equation}
 where (i) follows from the robust Bellman Equation $\RQ[h](s,\va)=r_{i,h}(s,\va) + \underset{P \in \gU_{TV}^{\sigma_i}(P^0_{h,s,\va})}{\operatorname{inf}}P\RV[h+1]$ for any policy $\pi$, and the last inquality is derived by induction.
\end{proof}
This lemma suggests that, within the robust Markov game framework, the performance gap between two policies is bounded below by a weighted sum of action values for any step 
h and action 
$\va$, where the weight is determined by the difference between the two policies. This observation is crucial for establishing the lower bound.

\begin{proof} To apply Lemma \ref{lemma: robust value difference lemma} for a lower bound, we need to compute the robust action value for each step and joint actions in Example  \ref{example:robust markov game lower bound}. However, we observed that when the reward function and transition kernel is not affected by actions, for an example, they can be written as, $r_{i,h}(s,\va)= r_{i,h}(s), P_h^0(\cdot |s,\va)=P_h^0(\cdot|s)$, then 
\begin{equation}
    \sum_{\va \in \mathcal{A}}(\pi_h^1(\va | s)-\pi_h(\va | s)) \cdot Q_{i,h}^{\pi,\sigma}(s, \va) =0
\end{equation}
Since $Q_{i,h}^{\pi,\sigma}(s, \va)$ will also be constant w.r.t $\va$ and the sum of probability simplex be 1 for any two policy $\pi^1$ and $\pi$. Through the careful construction of the robust Markov game in Example  \ref{example:robust markov game lower bound}, the only state that fails to satisfy the above observation is the state $s_{bad}$. Therefore, equation (\ref{equation: robust value difference})
can be written as below:
\begin{equation}
    \label{equation: simplified robust value difference}
    \begin{aligned}
           &V_{i,1}^{\star,\pi_{-i},\sigma}(s) -V_{i,1}^{\pi,\sigma}(s) \\ & \geq \mathbb{E}_{(P^{\pi^{\star,i}}, \pi^{\star,i})}\left[\sum_{\va \in \mathcal{A}}(\pi_2^{\star,i}(\va | s_{bad})-\pi_2(\va | s_{bad})) \cdot Q_{i,2}^{\pi,\sigma}(s_{bad}, \va) | s_1=s_{good}\right] 
    \end{aligned}
\end{equation}

For convenience, we first compute the value difference of robust Markov Game $\mathcal{MG}_{rob}^\theta$ with $\theta=1$, and omit the subscript $\theta$ in below computation. When h=3, we have $\forall (i,\pi) \in [2] \times \Pi$:
    \begin{equation}
        V_{i,3}^{\pi,\sigma}(s_{good}^i) = Q_{i,3}^{\pi,\sigma}(s_{good}^i,\va)=1,
        V_{i,3}^{\pi,\sigma}(s_{good}^{(1-i)}) = Q_{i,3}^{\pi,\sigma}(s_{good}^{(1-i)},\va)=0 \quad \forall \va \in \gA
    \end{equation}
When $h=2$ and $s=s_{bad}$, by robust Bellman equation, we have:
    \begin{equation}
    \label{equation: robust value function at step 3}
        Q_{i,2}^{\pi,\sigma}(s_{\text {bad}}, \va)=0+\inf _{P \in \gU_{TV}^\sigma(P^0_{2, s_{\text {bad }},\va})} P V_{i, 3}^{\pi,\sigma} = \begin{cases}p-\sigma, & \text { if } \va =\va_{ii} \\ q-\sigma , & \text { if } \va = \va_{(3-i)(3-i)} \\
        \frac{1}{2} - \sigma, & \text{else}\end{cases}
    \end{equation}
Because for each agent i, taking joint action $\va $ will transit to the player's corresponding good state $s_{good}^i$ with probability $P(s_{good}^i|s_{bad},\va)$. For an example, when $\va = \va_{ii}$,
\begin{equation}
    \inf _{P \in \gU_{TV}^\sigma(P^0_{2, s_{\text {bad }},\va_{ii}})} P V_{i, 3}^\sigma = (p-\sigma)V_{i,3}^{\pi,\sigma}(s_{good}^i)+ (q+\sigma)V_{i,3}^{\pi,\sigma}(s_{good}^{(1-i)})
\end{equation}

We then need to compute the best response policy $\pi_2^{\star,i}$ for different action $\va$. We define the marginal fixed opponent policy at step 2 as $\pi_{-i}$ with action set $a_{-i} \in \{1,2\}$:
    \begin{equation}
        \pi_{-i}(a_{-i}|s_{bad}) = \begin{cases}\alpha_i, & \text { if } a_{-i} = i \\ \beta_i , & \text { if } a_{-i} = 3-i \end{cases}
    \end{equation}
Meanwhile, the marginal self policy at step 2 is defined as:
    \begin{equation}
        \pi_{i}^\prime(a_{i}|s_{bad}) = \begin{cases}c_i, & \text { if } a_{i} = i \\ d_i , & \text { if } a_{i} = 3-i \end{cases}
    \end{equation}
Their joint policy can then be written as:
\begin{equation}
            (\pi_{i}^\prime \times \pi_{-i})(\va|s_{bad}) = \begin{cases} \alpha_ic_i, & \text { if } \va = \va_{ii} \\ \beta_id_i , & \text { if } \va = \va_{(3-i)(3-i)} \\
            \beta_ic_i+\alpha_id_i , & \text{else}
            \end{cases}
\end{equation}
The robust value of such joint policy at state $s_{bad}$ is $V_{i,2}^{\pi_i^\prime \times \pi_{-i},\sigma}(s_{bad})=c_i(\alpha_ip+\frac{1}{2}\beta_i)+d_i(\beta_i q + \frac{1}{2}\alpha_i) - \sigma$. Since $\alpha_ip+\frac{1}{2}\beta_i \geq \beta_i q + \frac{1}{2}\alpha_i$, $V_{i,2}^{\pi_i^\prime \times \pi_{-i},\sigma}(s_{bad})$ is maximized by setting $c_i=1$ and $d_i=0$. The best response policy is then characterized by:
\begin{equation}
            \pi_2^{\star,i}(\va|s_{bad}) = \begin{cases} \alpha_i, & \text { if } \va = \va_{ii} \\ 
            \beta_i , & \text{if } \va\in\{\va_{12},\va_{21}\}
            \end{cases}
\end{equation}

Armed with above computation, we can further simplify equation (\ref{equation: simplified robust value difference}) as:
\begin{equation}
\label{Equation: simplified robust value difference with best response policy}
\begin{aligned}
       & \sum_{i=1}^2(V_{i,1}^{\star,\pi_{-i},\sigma}(s_{good}) -V_{i,1}^{\pi,\sigma}(s_{good})) \\ &\overset{\mathrm{(i)}} 
       {\geq} \sum_{i=1}^2(\sigma(\sum_{\va \in \mathcal{A}}(\pi_2^{\star,i}(\va | s_{bad})-\pi_2(\va | s_{bad})) \cdot Q_{i,2}^{\pi,\sigma}(s_{bad}, \va))) \\
        &=  \sigma((\alpha_1-\pi(\va_{11}|s_{bad}))p - \pi(\va_{22}|s_{bad})q + \frac{1}{2}(\beta_1-\pi(\va_{12}|s_{bad})-\pi(\va_{21}|s_{bad})) + \\
        & (\alpha_2-\pi(\va_{22}|s_{bad}))p - \pi(\va_{11}|s_{bad})q + \frac{1}{2}(\beta_2-\pi(\va_{12}|s_{bad})-\pi(\va_{21}|s_{bad}))
        ) \\
 & \overset{\mathrm{(ii)}}{=}  \sigma((2p-1)\pi(\va_{21}|s_{bad}) + (\frac{1}{2}-q)(\pi(\va_{11}|s_{bad}) + \pi(\va_{22}|s_{bad})))
\end{aligned}
\end{equation}
where $\mathrm{(i)}$ is due the fact that the probability of transferring from initial state $s_{good}$ to the state $s_{bad}$ in this robust Markov Game is greater than uncertainty level $\sigma$ regardless of the policy $\pi$. And $\mathrm{(ii)}$ comes from the fact that marginal policy of opponent $\pi_{-i}$ can be obtained from integral of the joint policy $\pi$ and thus:
\begin{equation}
    \begin{aligned}
        & \alpha_1=\pi(\va_{11}|s_{bad})+\pi(\va_{21}|S_{bad}) \quad 
        \beta_1=\pi(\va_{12}|s_{bad})+\pi(\va_{22}|S_{bad}) \\
        & \alpha_2=\pi(\va_{21}|s_{bad})+\pi(\va_{22}|S_{bad}) \quad 
        \beta_2=\pi(\va_{12}|s_{bad})+\pi(\va_{11}|S_{bad})
    \end{aligned}
\end{equation}

When $\theta=2$, The robust value of such joint policy at state $s_{bad}$ is $V_{i,2}^{\pi_i^\prime \times \pi_{-i},\sigma}(s_{bad})=c_i(\alpha_iq+\frac{1}{2}\beta_i)+d_i(\beta_i p + \frac{1}{2}\alpha_i) - \sigma$. Since $\alpha_iq+\frac{1}{2}\beta_i \leq \beta_i p + \frac{1}{2}\alpha_i$, $V_{i,2}^{\pi_i^\prime \times \pi_{-i},\sigma}(s_{bad})$ is maximized by setting $c_i=0$ and $d_i=1$. The best response policy is then characterized by:
\begin{equation}
            \pi_2^{\star,i}(\va|s_{bad}) = \begin{cases} \beta_i, & \text { if } \va = \va_{(3-i)(3-i)} \\ 
            \alpha_i , & \text{if } \va\in\{\va_{12},\va_{21}\}
            \end{cases}
\end{equation}
Then, under $\theta=2$,
\begin{equation}
\label{Equation: simplified robust value difference with best response policy 2}
\begin{aligned}
& \sum_{i=1}^2(V_{i,1}^{\star,\pi_{-i},\sigma}(s_{good}) -V_{i,1}^{\pi,\sigma}(s_{good})) \\ & \overset{\mathrm{(i)}}{\geq} \sum_{i=1}^2(\sigma(\sum_{\va \in \mathcal{A}}(\pi_2^{\star,i}(\va | s_{bad})-\pi_2(\va | s_{bad})) \cdot Q_{i,2}^{\pi,\sigma}(s_{bad}, \va))) \\
        & =  \sigma((\beta_1-\pi(\va_{22}|s_{bad}))p - \pi(\va_{11}|s_{bad})q + \frac{1}{2}(\alpha_1-\pi(\va_{12}|s_{bad})-\pi(\va_{21}|s_{bad})) + \\
        & (\beta_2-\pi(\va_{11}|s_{bad}))p - \pi(\va_{22}|s_{bad})q + \frac{1}{2}(\alpha_2-\pi(\va_{12}|s_{bad})-\pi(\va_{21}|s_{bad}))
        ) \\
 & \overset{\mathrm{(ii)}}{=}  \sigma((2p-1)\pi(\va_{12}|s_{bad}) + (\frac{1}{2}-q)(\pi(\va_{11}|s_{bad}) + \pi(\va_{22}|s_{bad})))
\end{aligned}
\end{equation}

Combining Equation (\ref{Equation: simplified robust value difference with best response policy}) and (\ref{Equation: simplified robust value difference with best response policy 2}), If we denote $V_{i,1,\theta}^{\pi,\sigma}$ be the robust value function of player i under policy $\pi$ and robust Markov Game $\mathcal{MG}_{rob}^\theta$.Then:
\begin{equation}
\begin{aligned}
     \sum_{\theta \in [2]} \sum_{i \in [2]}(V_{i,1,\theta}^{\star,\pi_{-i},\sigma}(s_{good}) &-V_{i,1,\theta}^{\pi,\sigma}(s_{good}))  \geq   \sigma((2p-1)(\pi(\va_{21}|s_{bad})+\pi(\va_{12}|s_{bad})) + \\
     & (1-2q)(\pi(\va_{11}|s_{bad}) + \pi(\va_{22}|s_{bad}))) \\
     & \geq \sigma(\min\{2p-1,1-2q\})
\end{aligned}
\end{equation}

Note that in online setting, the nominal transition kernel does not contain trajectories starting from $s_{bad}$, and thus for any algorithms, the learnt policy $\pi$ is independent of model parameter $\theta$, which form the inequalities above. We are ready to compute the final regret bound. For a sequence of policy $\pi^1, \pi^2 \cdots \pi^K$ obtained from any algorithm $\mathcal{ALG}$, we derive the lower bound for example \ref{example:robust markov game lower bound}:
\begin{equation}
\begin{aligned}
   & \inf _{\mathcal{A} \mathcal{L G}} \sup _{\theta \in [2]} 
    \mathbb{E}\left[\operatorname{Regret}_\theta^{\mathcal{A L G}}(K)\right] \\ & \overset{\mathrm{(i)}}{\geq}
    \inf _{\mathcal{A} \mathcal{L G}} \frac{1}{4 }\E[\sum_{\theta \in [2]}\sum_{i \in [2]}\sum_{k=1}^K[ V_{i,1,\theta}^{\star, \pi_{-i}^k, \sigma}(s_{good})-V_{i,1,\theta}^{\pi^k, \sigma}(s_{good})]] \\
    & \geq \frac{1}{4}\inf _{\mathcal{A} \mathcal{L G}} \sum_{k=1}^K \sigma(\min\{2p-1,1-2q\}) \\
    & = \Omega(\sigma \cdot K)
\end{aligned}
\end{equation}
where inequality $\mathrm{(i)}$ due to the fact that maximum value is always no less than the average.

We then concatenate H robust Markov Game in example \ref{example:robust markov game lower bound} with horizon length 3H to get our final result.
\end{proof}

\subsection{Proof of Proposition 4.3}
Given any value function $V$ with $\min_{s \in \mathcal{S}} V(s)=0$, under $d$-rectangular linear Markov game assumption, let $B_{h,j}^{\sigma_i}=\{\mu_{h,j} \in \Delta(\mathcal{S}): \sup_{s \in \mathcal{S}}\frac{\mu_{h,j}(s)}{\mu_{h,j}^0(s)}\leq \frac{1}{\sigma_i}\}$, we have:
\begin{equation}
\begin{aligned}
        \inf_{P \in  \mathcal{U}_{TV}^{\sigma_i}(P_{h,s,\va}^0)}\mathbb{E}_{P}[V] & = \langle \phi(s,\va), \inf_{\mu_h \in \mathcal{U}_{TV}^{\sigma_i}(\mu_h^0)}\mathbb{E}_\mu[V]\rangle \\
        & = \langle \phi(s,\va), [\inf_{\{\mu_{h,j} \in \Delta(\mathcal{S}): D_{TV}(\mu_{h,j} || \mu_{h,j}^0) \leq \sigma_i\}}\mathbb{E}_{\mu_j}[V]]_{j \in [d]}\rangle \\
        & = \sigma_i \langle \phi(s,\va), [\inf_{\mu_{h,j} \in B_{h,j}^{\sigma_i}}\mathbb{E}_{\mu_{h,j}}[V]]_{j \in [d]}\rangle
\end{aligned}
\end{equation}
The third equality it due to Lemma \ref{lemma: Auxiliary: Equivalent expression of TV robust set}. Let $\tilde{\mu}_{h,j}=\arg\inf_{\mu_{h,j} \in B_{h,j}^{\sigma_i}}\mathbb{E}_{\mu_{h,j}}[V]$ and $\tilde{\mu}_h = [\tilde{\mu}_{h,j}]_{j \in d}$, 
\begin{equation}
    \begin{aligned}
        \inf_{P \in  \mathcal{U}_{TV}^{\sigma_i}(P_{h,s,\va}^0)}\mathbb{E}_{P}[V] & = \sigma_i \langle \phi(s,\va), \mathbb{E}_{\tilde{\mu}_{h}}[V]\rangle \\
        & = \sigma_i \mathbb{E}_{\tilde{P}_{h,s,\va}}[V]
    \end{aligned}
\end{equation}
where $\tilde{P}_{h,s,\va} = \langle \phi(s,\va), \tilde{\mu}_h\rangle$. Then, for any $s^\prime \in \mathcal{S}$:
\begin{equation}
\begin{aligned}
        \frac{\tilde{P}_{h}(s^\prime|s,\va)}{P_h^0(s^\prime|s,\va)} & = \frac{ \langle \phi(s,\va), \tilde{\mu}_h(s^\prime)\rangle}{\langle \phi(s,\va), \mu^0_h(s^\prime)\rangle} \\
        & \leq \frac{1}{\sigma_i}\cdot \frac{\langle \phi(s,\va), \mu^0_h(s^\prime)\rangle}{\langle \phi(s,\va), \mu^0_h(s^\prime)\rangle} \\ 
        & \leq \frac{1}{\sigma_i}
\end{aligned}
\end{equation}

\subsection{Proof of Theorem 5.1}
\label{appendix: instance dependent upper bound}
The regret is expressed as $Regret(K)=\max_{i \in [n]}\{\sum_{k=1}^K[ V_{i,1}^{\star, \pi_{-i}^k, \sigma}(s_1^k)-V_{i,1}^{\pi^k, \sigma}(s_1^k)]\}$, where we need to assess the gap for each player $i$. However, since the best response policy $\pi_k^{\star,i}$ is unknown (i.e., the joint policy where all other players follow $\pi_{-i}^k$ and player $i$ adopts a best response one), it is not feasible to directly derive a valid upper bound for this gap. Fortunately, leveraging the technical lemmas presented below, we instead bound a more tractable expression $\max_{i \in [n]}\{\sum_{k=1}^K[ V_{i,1}^k(s_1^k)-V_{i,1}^{\pi^k, \sigma}(s_1^k)]\}$. This reformulation aligns with the capabilities of our proposed Algorithm 1 and enables a practical analysis of the regret bound.

We prove the Theorem by four technical lemmas. The proofs of these lemmas can be found at Section \ref{Section: Proof of technical lemmas}.

\begin{lemma}(Shrinkage of Robust Value Function)
\label{lemma: Shrinkage of Robust Value Function}
    Under the setting of Theorem 5.1 and Algorithm 1, the robust value function is upper bounded by
    \begin{equation*}
        |\RV[h]|\leq \min\{\frac{1}{\sigma_i},H\}
    \end{equation*}
\end{lemma}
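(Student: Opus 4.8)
The plan is to establish the two bounds in $\min\{1/\sigma_i,H\}$ separately and then combine them. The bounds $0\le V_{i,h}^{\pi,\sigma}(s)\le H$ are immediate: the robust value is an infimum over transition kernels of a sum of at most $H$ rewards, each lying in $[0,1]$, and the worst-case kernel cannot turn a nonnegative sum negative. Hence $|V_{i,h}^{\pi,\sigma}(s)|=V_{i,h}^{\pi,\sigma}(s)\le H$, and the real content is the bound $V_{i,h}^{\pi,\sigma}(s)\le 1/\sigma_i$, which I would obtain by backward induction on $h$ driven by a per-step contraction coming from the vanishing minimal value assumption.

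The key step is the following per-step inequality: for any $W:\gS\to[0,H]$ with $\min_{s'}W(s')=0$,
$$\inf_{P\in\mathcal{U}_{TV}^{\sigma_i}(P^0_{h,s,\va})}\E_P[W]\ \le\ (1-\sigma_i)\max_{s'}W(s').$$
To prove this I would fix a state $s_0$ with $W(s_0)=0$ (guaranteed by Assumption \ref{assumption:vanishing state}) and, in each coordinate $j\in[d]$, use the feasible perturbation $\mu_{h,j}=(1-\sigma_i)\mu_{h,j}^0+\sigma_i\delta_{s_0}$. It is admissible because $D_{\mathrm{TV}}(\mu_{h,j}\|\mu_{h,j}^0)=\tfrac{\sigma_i}{2}\|\delta_{s_0}-\mu_{h,j}^0\|_1\le\sigma_i$, so it lies in the $d$-rectangular TV ball. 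Each coordinate then contributes $\E_{\mu_{h,j}}[W]=(1-\sigma_i)\E_{\mu_{h,j}^0}[W]+\sigma_i W(s_0)\le(1-\sigma_i)\max_{s'}W(s')$, and recombining through $\phi$ with $\sum_{j}\phi_j(s,\va)=1$ (Assumption \ref{linear mdp assumption}) yields the claim. The same shrinkage can be read off from Proposition \ref{proposition: equivalence of minimal value assumptions}.

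With this inequality in hand I would run the induction, with base case $V_{i,H+1}^{\pi,\sigma}\equiv 0$. For the inductive step, apply the robust Bellman equation and the per-step inequality to $W=V_{i,h+1}^{\pi,\sigma}$, which has minimal value $0$ by Assumption \ref{assumption:vanishing state}, giving
$$Q_{i,h}^{\pi,\sigma}(s,\va)=r_{i,h}(s,\va)+\inf_{P}\E_P[V_{i,h+1}^{\pi,\sigma}]\le 1+(1-\sigma_i)\max_{s'}V_{i,h+1}^{\pi,\sigma}(s').$$
Writing $M_h:=\max_s V_{i,h}^{\pi,\sigma}(s)$ and using $V_{i,h}^{\pi,\sigma}(s)=\E_{\va\sim\pi_h}[Q_{i,h}^{\pi,\sigma}(s,\va)]\le\max_\va Q_{i,h}^{\pi,\sigma}(s,\va)$, this yields the scalar recursion $M_h\le 1+(1-\sigma_i)M_{h+1}$ with $M_{H+1}=0$. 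Unrolling the geometric recursion gives $M_h\le\sum_{t=0}^{H-h}(1-\sigma_i)^t\le 1/\sigma_i$, and since each summand is at most $1$ we also recover $M_h\le H-h+1\le H$. Combining with nonnegativity gives $|V_{i,h}^{\pi,\sigma}(s)|\le\min\{1/\sigma_i,H\}$.

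I expect the main obstacle to be the per-step contraction rather than the bookkeeping: one must check that the mass-shifted kernel is genuinely admissible in the $d$-rectangular TV set (feasibility of the coordinatewise perturbation plus correct recombination via $\sum_j\phi_j=1$), and that the minimal-value-zero property propagates to every $V_{i,h+1}^{\pi,\sigma}$ so that Assumption \ref{assumption:vanishing state} can be invoked at each level of the induction. Once the $(1-\sigma_i)$ factor is secured, the remaining steps are a routine geometric-series estimate.
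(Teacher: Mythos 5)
Your proposal is correct and follows essentially the same route as the paper: both exhibit an admissible TV-ball perturbation that shifts $\sigma_i$ mass onto a minimizing state where the value vanishes (the paper uses $\mu_h'+\sigma_i\delta_{s^\star}$ with $\mu_h'\le\mu_h^0$ of total mass $1-\sigma_i$; your $(1-\sigma_i)\mu_{h,j}^0+\sigma_i\delta_{s_0}$ is a clean explicit instance of this), deduce the per-step contraction $M_h\le 1+(1-\sigma_i)M_{h+1}$, and unroll the geometric series to get $1/\sigma_i$, combined with the trivial bound $H$.
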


\begin{lemma}(Upper Bound for Self-Normalizing Process)
\label{lemma: Self-normal upper bound}
    Let $\beta_i=\sqrt{c_\beta nd}\min\{\frac{1}{\sigma_i},H\}\sqrt{\log(\frac{ndHK}{\delta})}$,$\alpha \in [0,\min\{\frac{1}{\sigma_i},H\}]  $, and $\delta \in (0,1)$ be any fixed constant. We define the event $\mathcal{E}_i$ be the event that for any fixed $i \in [n]$, and for every $(k,h) \in [K] \times [H]$,
    \begin{equation}
        \left\|\sum_{\tau=1}^{k-1} \phi_h^\tau\left[[V_{i,h+1}^{k}]_\alpha\left(s_{h+1}^\tau\right)-P_h^0 [V_{i,h+1}^{k}]_\alpha\left(s_h^\tau, \va_h^\tau\right)\right]\right\|_{\left(\Lambda_h^k\right)^{-1}} \leq \beta_i
    \end{equation}
Where C is a constant large enough of our choice. Then, $P(\cap_{i=1}^n\mathcal{E}_i) \geq 1-\frac{\delta}{3}$.
\end{lemma}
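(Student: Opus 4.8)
The plan is to establish a uniform self-normalized concentration bound by the standard three-step route: (i) martingale concentration for a \emph{fixed} value function, (ii) a covering-number argument to remove the data-dependence of $V_{i,h+1}^k$, and (iii) a union bound calibrated so that the stated $\beta_i$ absorbs every logarithmic factor.

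First I would fix a deterministic function $V:\mathcal{S}\to[0,\min\{1/\sigma_i,H\}]$ and a truncation level $\alpha$, and note that since trajectories are collected under the nominal kernel $P^0$, the vectors $\phi_h^\tau\big([V]_\alpha(s_{h+1}^\tau)-P_h^0[V]_\alpha(s_h^\tau,\va_h^\tau)\big)$ form a martingale difference sequence with respect to the filtration generated by the first $\tau$ episodes: the conditional mean vanishes because $s_{h+1}^\tau\sim P_h^0(\cdot|s_h^\tau,\va_h^\tau)$. The increments are bounded, since $\|\phi_h^\tau\|_2\le\|\phi_h^\tau\|_1=1$ by Assumption~\ref{linear mdp assumption} and $[V]_\alpha\in[0,\min\{1/\sigma_i,H\}]$. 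Applying the self-normalized tail bound for vector-valued martingales (Abbasi-Yadkori et al.; \cite{jin2020online_linear}, Lemma~D.4) then controls the $(\Lambda_h^k)^{-1}$-norm for this single $V$ by a term of order $\min\{1/\sigma_i,H\}\sqrt{d\log((k+\lambda)/(\lambda\delta'))}$ with probability at least $1-\delta'$.

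Second, because $V_{i,h+1}^k$ is a function of all previously collected samples, I would instead bound the self-normalized norm uniformly over the function class $\mathcal{V}$ of realizable value functions. Each $Q_{i,h}^k(s,\cdot)$ is determined by the bounded parameters $\hat{w}_{i,h}^k$ and $\Lambda_h^k$, so $\mathcal{Q}$ admits an $\epsilon$-cover of log-cardinality $\mathcal{O}(d\log(1+1/\epsilon))$ (up to the usual matrix-covering term for $\Lambda_h^k$); covering all $n$ players' $Q$-functions jointly multiplies this by $n$, giving log-cardinality $\mathcal{O}(nd\log(\cdots))$, which is the source of the $\sqrt{nd}$ in $\beta_i$. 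The hard part is passing from this $Q$-cover to a cover of $\mathcal{V}=\{\mathbb{E}_\pi[Q]\}$: by Lemma~\ref{lemma: xie conter example} the exact CCE map is discontinuous in the payoff, so a naive argument fails. This is precisely where the Find-CCE subroutine intervenes: for two payoff tuples within $2\epsilon$ in $\ell_\infty$, one can select a common fixed $2\epsilon$-approximate CCE $\bar{\pi}$, whence $\|\mathbb{E}_{\bar\pi}[Q]-\mathbb{E}_{\bar\pi}[Q']\|_\infty\le 2\epsilon$; thus the $\epsilon$-cover of $\mathcal{Q}$ induces an $\mathcal{O}(\epsilon)$-cover of $\mathcal{V}$ of the same log-cardinality, restoring the Lipschitz-type relationship needed for the union bound.

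Finally I would union-bound the fixed-function tail bound over this cover, over the truncation levels $\alpha$ discretized to an $\epsilon$-grid on $[0,\min\{1/\sigma_i,H\}]$, over all $(k,h)\in[K]\times[H]$, and over all $i\in[n]$ (taking per-player failure probability $\delta/(3n)$, which yields the intersection $\cap_{i=1}^n\mathcal{E}_i$). The mismatch between $V_{i,h+1}^k$ and its nearest cover element adds a term of order $\sqrt{k}\,\epsilon/\sqrt{\lambda}$ to the self-normalized norm, negligible once $\epsilon=1/(KH)$. Collecting the $nd$ covering term, the $\log(ndHK/\delta)$ union-bound factors, and the $\min\{1/\sigma_i,H\}$ range, all contributions are absorbed into $\beta_i=\sqrt{c_\beta nd}\min\{1/\sigma_i,H\}\sqrt{\log(ndHK/\delta)}$ for a large enough absolute constant $c_\beta$, giving $P(\cap_{i=1}^n\mathcal{E}_i)\ge 1-\delta/3$. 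The main obstacle throughout is step two—the CCE-induced value class is not coverable by standard means, and the entire technical content lies in showing that the Find-CCE approximate equilibrium makes nearby $Q$-tuples produce nearby value functions.
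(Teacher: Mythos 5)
Your proposal is correct and follows essentially the same route as the paper: a fixed-function self-normalized martingale bound (Abbasi-Yadkori), an $\epsilon$-cover of the parameterized class $\mathcal{Q}$ with log-cardinality $\mathcal{O}(nd\log(\cdot))$, the Find-CCE device to transfer that cover to the value class $\mathcal{V}$ (circumventing the CCE instability of Lemma~\ref{lemma: xie conter example}), and union bounds over the cover, the $\alpha$-grid, $(k,h)$, and the $n$ players with $\epsilon=1/(KH)$ and a sufficiently large $c_\beta$. You correctly identify the Find-CCE-induced covering of $\mathcal{V}$ as the crux, which is exactly how the paper handles it.
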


\begin{lemma}(Bound for Martingales Difference Sequence)
\label{lemma: Maringales difference sequence}
    Let $\xi_{i,h}^k=P_h^0[\VK[h+1]-\RKV[h+1]](s_h^k,\va_h^k)-\delta_{i,h+1}^k$, $\zeta
    _{i,h}^k=\E_{\va \sim \pi_h^k}[\QK[h](s_h^k,\va)-\RKQ[h](s_h^k,\va)]-[\QK[h](s_h^k,\va_h^k)-\RKQ[h](s_h^k,\va_h^k)]$. Then, with probability at least $1-\frac{\delta}{3n}$, we have the following bound:
    \begin{equation}
        \sum_{k=1}^K\sum_{h=1}^{H}[\zeta_{i,h}^k+ \xi_{i,h}^k
         ] \leq 4\min\{\frac{1}{\sigma_i},H\}\sqrt{2HK\log(\frac{3n}{\delta})}
    \end{equation}
\end{lemma}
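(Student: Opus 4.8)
The plan is to view the double sum as a single martingale difference sequence in the increments $M_{i,h}^k := \zeta_{i,h}^k + \xi_{i,h}^k$ and to obtain the stated bound from one application of the one-sided Azuma--Hoeffding inequality for the fixed player $i$. First I would order the index pairs $(k,h)$ lexicographically (episode index $k$ outer, step index $h$ inner) and introduce the natural filtration $\gF_h^k$ generated by all interaction data observed up to and including the state $s_h^k$, so that $\va_h^k$ and $s_{h+1}^k$ are still unrevealed. The key point enabling adaptedness is that $\QK[h]$, $\VK[h]$ are built only from episodes $1,\dots,k-1$ while $\RKQ[h]$, $\RKV[h]$ depend only on the policy $\pi^k$ (also fixed at the start of episode $k$), so all four quantities are $\gF_h^k$-measurable.

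Next I would verify $\E[M_{i,h}^k\mid \gF_h^k]=0$. The increment $\zeta_{i,h}^k$ is centered by construction over the action draw $\va_h^k\sim\pi_h^k(\cdot\mid s_h^k)$, giving $\E[\zeta_{i,h}^k\mid \gF_h^k]=0$ directly. The increment $\xi_{i,h}^k$ is centered over the transition draw $s_{h+1}^k\sim P_h^0(\cdot\mid s_h^k,\va_h^k)$ --- with $\delta_{i,h+1}^k=\VK[h+1](s_{h+1}^k)-\RKV[h+1](s_{h+1}^k)$ the realized next-state gap --- so $\E[\xi_{i,h}^k\mid \gF_h^k,\va_h^k]=0$, and the tower property over $\va_h^k$ then yields $\E[\xi_{i,h}^k\mid \gF_h^k]=0$. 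Since $M_{i,h}^k$ is moreover $\gF_{h+1}^k$-measurable, the family $\{M_{i,h}^k\}_{(k,h)}$ is a martingale difference sequence of length $HK$.

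For the almost-sure range I would invoke Lemma~\ref{lemma: Shrinkage of Robust Value Function} together with nonnegativity of the robust value functions to get $\RKV[h],\RKQ[h]\in[0,\min\{1/\sigma_i,H\}]$, and the clipping in Equation~(\ref{equation: alg: optimistic Q estimate}) together with $\VK[h]=\E_{\pi_h^k}[\QK[h]]$ to get $\QK[h],\VK[h]\in[0,\min\{1/\sigma_i,H\}]$. Writing $D=\min\{1/\sigma_i,H\}$, both pointwise gaps $\QK[h]-\RKQ[h]$ and $\VK[h+1]-\RKV[h+1]$ then lie in $[-D,D]$; as $\zeta_{i,h}^k$ and $\xi_{i,h}^k$ are each a difference between a conditional average of such a gap and a single realization of it, $|\zeta_{i,h}^k|\le 2D$ and $|\xi_{i,h}^k|\le 2D$, hence $|M_{i,h}^k|\le 4D$. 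Applying one-sided Azuma--Hoeffding to the $HK$ increments, each of magnitude at most $4D$, gives $\Pr(\sum_{k,h}M_{i,h}^k\ge t)\le\exp(-t^2/(32\,HK\,D^2))$; equating the right-hand side to $\delta/(3n)$ and solving for $t$ reproduces exactly $4\min\{1/\sigma_i,H\}\sqrt{2HK\log(3n/\delta)}$.

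The main obstacle is the martingale set-up in the second step: because $\zeta_{i,h}^k$ is centered with respect to the action draw while $\xi_{i,h}^k$ is centered with respect to the subsequent transition draw, fusing the two into one zero-mean increment is not immediate and relies on the nested conditioning and the tower property; everything else is a routine boundedness-plus-Azuma computation. I note that keeping $\zeta$ and $\xi$ as two separate martingale sequences of $2HK$ increments, each of range $2D$, would in fact sharpen the constant, whereas the fused form matches the stated bound directly.
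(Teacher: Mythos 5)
Your proposal is correct and follows essentially the same route as the paper: identify the increments as bounded martingale differences and apply Azuma--Hoeffding. The only (cosmetic) difference is that you fuse $\zeta_{i,h}^k+\xi_{i,h}^k$ into a single sequence of $HK$ increments via the tower property over the action and transition draws, whereas the paper bounds the two sums separately and union-bounds; your fused version in fact reproduces the stated constant $4\min\{1/\sigma_i,H\}\sqrt{2HK\log(3n/\delta)}$ exactly, and your measurability/centering discussion is sound.
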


\begin{lemma}(UCB)
    \label{lemma:UCB}
    Under the setting of Theorem 5.1 and Algorithm 1, we have upper bound on value function of the best response policy of $\pi^k$: $\forall (i,s,\va,h,k)\in [n]\times\gS\times\gA\times[H]\times[K]$:
    \begin{equation}
        \begin{aligned}
            & \QK[h](s,\va) \geq \BRKQ[h](s,\va)-2\epsilon(H-h+1) \\
            & \VK[h](s) \geq \BRKV[h](s)-2\epsilon(H-h+2)
        \end{aligned}
    \end{equation}
and 
    \begin{equation}
        \begin{aligned}
            & \QK[h](s,\va) \geq \RKQ[h](s,\va) \\
            & \VK[h](s) \geq \RKV[h](s)
        \end{aligned}
    \end{equation}
\end{lemma}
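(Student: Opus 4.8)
The plan is to establish all four inequalities by a single backward induction on $h$ from $H+1$ down to $1$, treating the optimism against $\pi^k$ and against the best response in parallel since they share the same $Q$-recursion. The base case $h=H+1$ is immediate because every quantity vanishes. The engine of the induction is a \emph{bonus-domination} step: under the concentration event of Lemma~\ref{lemma: Self-normal upper bound}, the algorithm's estimate dominates the exact robust continuation value built from its own $V_{i,h+1}^k$, namely
$$r_{i,h}(s,\va) + \langle \phi_{s\va}, \hat{w}_{i,h}^k\rangle + \Gamma_{h,k}^i(s,\va) \geq r_{i,h}(s,\va) + \inf_{P \in \mathcal{U}_{TV}^{\sigma_i}(P_{h,s,\va}^0)} P V_{i,h+1}^k,$$
where the right-hand side equals $r_{i,h}(s,\va)+\langle\phi_{s\va},w_{i,h}^k\rangle$ with $w_{i,h,j}^k=\max_\alpha\{\E_{\mu_{h,j}^0}[V_{i,h+1}^k]_\alpha-\sigma_i\alpha\}$ by strong duality together with Assumption~\ref{assumption:vanishing state}.

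To prove bonus domination I recall that $\hat{w}_{i,h,j}^k=\max_\alpha\{\hat{\nu}_{i,h,j}^k(\alpha)-\sigma_i\alpha\}$, so that $|\hat{w}_{i,h,j}^k-w_{i,h,j}^k|\leq \max_\alpha|\1_j^\top(\hat{\nu}_{i,h}^k(\alpha)-\nu_{i,h}^k(\alpha))|$ via $|\max_\alpha f-\max_\alpha g|\leq\max_\alpha|f-g|$, where $\nu_{i,h,j}^k(\alpha)=\E_{\mu_{h,j}^0}[V_{i,h+1}^k]_\alpha$ and, by Assumption~\ref{linear mdp assumption}, $P_h^0[V_{i,h+1}^k]_\alpha(s,\va)=\langle\phi_{s\va},\nu_{i,h}^k(\alpha)\rangle$. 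Expanding the ridge estimator splits this error into a noise term and a regularization term; Cauchy--Schwarz in the $(\Lambda_h^k)^{-1}$ norm turns the noise term into $\|\1_j\|_{(\Lambda_h^k)^{-1}}$ times the self-normalized quantity controlled by $\beta_i$ in Lemma~\ref{lemma: Self-normal upper bound}, while the regularization term is handled by $\lambda=1$ and boundedness of $\nu$. Summing over $j$ against $\phi_j(s,\va)\geq 0$ reproduces exactly $\Gamma_{h,k}^i(s,\va)=\beta_i\sum_j\phi_j(s,\va)\sqrt{\1_j^\top(\Lambda_h^k)^{-1}\1_j}$, which is the reason the bonus is built per coordinate. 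I also need Lemma~\ref{lemma: Self-normal upper bound} uniformly in $\alpha$, obtained by a one-dimensional $\epsilon$-cover of $[0,\min\{H,1/\sigma_i\}]$ absorbed into the log factor of $\beta_i$.

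Once bonus domination holds, the two $Q$-inequalities follow from pointwise monotonicity of $V\mapsto\inf_P PV$ applied to the inductive hypotheses $V_{i,h+1}^k\geq V_{i,h+1}^{\pi^k,\sigma}$ and $V_{i,h+1}^k\geq V_{i,h+1}^{\star,\pi_{-i}^k,\sigma}-2\epsilon(H-h+1)$; here I use that every $P$ in the uncertainty set is a genuine probability measure, so additive constants pass through $\inf_P P(\cdot)$ unchanged, and that the outer clipping at $\min\{H,1/\sigma_i\}$ is harmless because the target $Q$-functions are themselves bounded by $\min\{H,1/\sigma_i\}$ via Lemma~\ref{lemma: Shrinkage of Robust Value Function}. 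The $V$-inequality against $\pi^k$ is then immediate from the robust Bellman equation: $V_{i,h}^k(s)=\E_{\pi_h^k}[Q_{i,h}^k(s,\va)]\geq\E_{\pi_h^k}[Q_{i,h}^{\pi^k,\sigma}(s,\va)]=V_{i,h}^{\pi^k,\sigma}(s)$.

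The remaining $V$-inequality against the best response is where the multi-agent structure enters and is the main obstacle. Since $\pi_h^k$ is only a CCE of the $\epsilon$-perturbed game $\tilde{Q}$ returned by Find-CCE (Algorithm~\ref{alg:Find-Nash(CCE)}), I will combine the CCE no-deviation property on $\tilde{Q}$ with $\|\tilde{Q}_i-Q_{i,h}^k\|_\infty\leq\epsilon$ to obtain $V_{i,h}^k(s)=\E_{\pi_h^k}[Q_{i,h}^k(s,\va)]\geq\max_{\pi_i'}\E_{\pi_i'\times\pi_{-i,h}^k}[Q_{i,h}^k(s,\va)]-2\epsilon$, i.e. the algorithm's value cannot be beaten by more than $2\epsilon$ by any unilateral deviation of player $i$ against the \emph{marginal} opponent policy $\pi_{-i,h}^k$. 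Feeding in the $Q$-inequality just proved and the best-response Bellman equation $V_{i,h}^{\star,\pi_{-i}^k,\sigma}(s)=\max_{\pi_i'}\E_{\pi_i'\times\pi_{-i,h}^k}[Q_{i,h}^{\star,\pi_{-i}^k,\sigma}(s,\va)]$ yields $V_{i,h}^k(s)\geq V_{i,h}^{\star,\pi_{-i}^k,\sigma}(s)-2\epsilon(H-h+2)$, closing the induction with the stated offsets. The delicate points to verify are that the CCE deviation set certifies optimality precisely against the marginal $\pi_{-i,h}^k$, so that it lines up with the best-response operator, and that the $2\epsilon$ slack accumulates only additively across the $H$ steps rather than multiplicatively.
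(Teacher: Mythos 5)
Your plan reproduces the paper's argument essentially verbatim: the same backward induction alternating between the $Q$- and $V$-levels, the same bonus-domination step (the paper's Lemma~\ref{lemma Additional: UCB}, built on the self-normalized concentration of Lemma~\ref{lemma: Self-normal upper bound} with the covering argument over $\alpha$ and the value class), the same monotonicity-of-$\inf_P$ step with the clipping justified via Lemma~\ref{lemma: Shrinkage of Robust Value Function}, and the same $2\epsilon$-approximate-CCE argument (the paper's Lemma~\ref{lemma: Additional: 2-epsilon Nash}) for the best-response $V$-inequality, with identical $2\epsilon(H-h+1)$ and $2\epsilon(H-h+2)$ bookkeeping. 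I see no gaps; the ``delicate points'' you flag (deviations measured against the marginal $\pi_{-i,h}^k$, additive accumulation of the $2\epsilon$ slack) are exactly the ones the paper resolves in the same way.
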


\begin{proof}
Given the event defined above happens with high probability, starting from any initial state $s_1^k$ at each episode $k$, we have the following sub-optimality gap holds for any $i \in [n]$ with probability at least 1-$\delta$:
\begin{equation}
    \begin{aligned}
        & \sum_{k=1}^K[\BRKV[1](s_1^k) -\RKV[1](s_1^k)]\\ &  \overset{(i)}{\leq} \sum_{k=1}^K[\VK[1](s_1^k)-\RKV[1](s_1^k) + 2\epsilon(H+1)] \\
        & = 4 + \sum_{k=1}^K[\QK[1](s_1^k,\va_1^k)-\RKQ[1](s_1^k,\va_1^k) + \zeta_{i,1}^k]\\
        & \overset{(ii)}{\leq} 4 + \sum_{k=1}^K[\langle\phi_1^k,\hat{w}_{i,1}^k\rangle + \Gamma_{1,k}(s_1^k,\va_1^k) - \inf_{P \in \mathcal{U}_\rho^{\sigma_i}(P^0_{1,s,a})}P\RKV[2](s_1^k,\va_1^k) + \zeta_{i,1}^k] \\
     &= 4 + \sum_{k=1}^K[\langle\phi_1^k,\hat{w}_{i,1}^k\rangle - 
     \inf_{P \in \mathcal{U}_\rho^{\sigma_i}(P^0_{1,s,a})}P\VK[2](s_1^k,\va_1^k) + \\
     & \inf_{P \in \mathcal{U}_\rho^{\sigma_i}(P^0_{1,s,a})}P\VK[2](s_1^k,\va_1^k) - \inf_{P \in \mathcal{U}_\rho^{\sigma_i}(P^0_{1,s,a})}P\RKV[2](s_1^k,\va_1^k) + \Gamma^i_{1,k}(s_1^k,\va_1^k) + \zeta_{i,1}^k] \\
     & \overset{(iii)}{\leq} 2 \sum_{k=1}^K[\underbrace{\inf_{P \in \mathcal{U}_\rho^{\sigma_i}(P^0_{1,s,a})}P\VK[2](s_1^k,\va_1^k) - \inf_{P \in \mathcal{U}_\rho^{\sigma_i}(P^0_{1,s,a})}P\RKV[2](s_1^k,\va_1^k))}_{\alpha_1} + 2\Gamma^i_{1,k}(s_1^k,\va_1^k) + \zeta_{i,1}^k]
    \end{aligned}
    \label{Proof:Suboptimality}
\end{equation}
Where $(i)$ follows from Lemma \ref{lemma:UCB} together with our choice of $\epsilon=\frac{1}{KH}$, $(ii)$ utilize the definition of estimated Q value and Robust Bellman Equation. While $(iii)$ applies Lemma \ref{lemma Additional: UCB}. We continue to bound the term $\alpha_1$:
\begin{equation}
    \begin{aligned}
        & \inf_{P \in \mathcal{U}_\rho^{\sigma_i}(P^0_{1,s,a})}P\VK[2](s_1^k,\va_1^k) - \inf_{P \in \mathcal{U}_\rho^{\sigma_i}(P^0_{1,s,a})}P\RKV[2](s_1^k,\va_1^k)) \\
        &=\langle\phi_1^k,[\max_{\alpha}\{\E_{\mu_{1,j}^0}[[\VK[2]]_\alpha]-\sigma_i\alpha\}]_{j \in d}\rangle-
        \langle\phi_1^k,[\max_{\alpha}\{\E_{\mu_{1,j}^0}[[\RKV[2]]_\alpha]-\sigma_i\alpha\}]_{j \in d}\rangle \\
        & \leq \langle\phi_1^k, [\max_{\alpha}\{\E_{\mu_{1,j}^0}[[\VK[2]]_\alpha]-\E_{\mu_{1,j}^0}[[\RKV[2]]_\alpha]\}]_{j \in d}\rangle\\
        & \overset{(i)}{\leq} \langle\phi_1^k, \E_{\mu_{1}^0}[\VK[2]]-\E_{\mu_{1}^0}[\RKV[2]] \rangle \\
        & = P_1^0[\VK[2]-\RKV[2]](s_1^k,\va_1^k)
    \end{aligned}
\end{equation}
Where $(i)$ follows from Lemma \ref{lemma:UCB} and $\alpha \in [0,\min\{\frac{1}{\sigma_i},H\}]$. Then, by induction and definition in Lemma \ref{lemma: Maringales difference sequence}, we have:
    \begin{equation}
        \begin{aligned}
            \sum_{k=1}^K[\BRKV[1](s_1^k)& -\RKV[1](s_1^k)] \leq 
            2 \sum_{k=1}^K\sum_{h=1}^{H}[\zeta_{i,h}^k+ \xi_{i,h}^k
          + 2\Gamma^i_{h,k}] \\
          & \leq 8\sqrt{2H^3K\log(\frac{3n}{\delta})} + 4\sum_{k=1}^K\sum_{h=1}^{H} \Gamma^i_{h,k}
        \end{aligned}
    \end{equation}
Taking an union bound on the numbers of agent n, we finish the proof of Theorem 5.1.
\end{proof}

\subsection{Proof of Theorem 5.2}
\label{proof: inlearnbaility}
We begin by constructing an MDP with three states $\mathcal{S}=\{s_0,s_1,s_2\}$ and two actions $\mathcal{A}=\{a_1,a_2\}$, where the feature mapping is $\phi(s_0,a_1)=(\frac{1}{2},\frac{1}{2}), \phi(s_0,a_2)=(1,0)$, $\mu(s_1)=(\frac{1}{2},\frac{1}{3}), \mu(s_2)=(\frac{1}{2},\frac{2}{3})$. Since $P(s^\prime|s,a)=\langle \phi(s,a), \mu(s^\prime) \rangle$. We have:
\begin{equation*}
    \begin{aligned}
        & P(s_1|s_0,a_1)=\frac{5}{12}, \quad P(s_2|s_0,a_1)=\frac{7}{12} \\
        & P(s_1|s_0,a_2)=\frac{1}{2}, \quad P(s_2|s_0,a_2)=\frac{1}{2}
    \end{aligned}
\end{equation*}
We further let $r(s_1)=0, r(s_2)=1$ regardless of the actions we take. Then, for obtaining optimal policy, we always wants to reach the state $s_2$. Therefore, the action $a_1$ is preferred to $a_2$. And $\pi^\star(a_1|s_0)=1$ is the optimal policy. Under the optimal policy, we always collect samples $(s_0,a_1)$ in the training process. Then,
\begin{equation*}
    \Lambda^k = \mathbf{I} + \sum_{\tau=1}^{k-1}\phi^\tau(\phi^\tau)^T=\mathbf{I} + \frac{1}{4} \begin{pmatrix}
        k-1&k-1\\
        k-1&k-1
    \end{pmatrix}
\end{equation*}
and 
\begin{equation*}
    (\Lambda^k)^{-1} = \frac{1}{1+\frac{k-1}{2}} \begin{pmatrix}
       1+\frac{k-1}{4}&-\frac{k-1}{4}\\
        -\frac{k-1}{4}&1+\frac{k-1}{4}
    \end{pmatrix}
\end{equation*}
Thus, 
\begin{equation*} 
\begin{aligned}
\sum_{k=1}^K\sum_{j=1}^2\sqrt{\phi_j^k\mathbf{1}_j^T(\Lambda^k)^{-1}\mathbf{1}_j\phi_j^k} &= \sum_{k=1}^K\sum_{j=1}^2\sqrt{\frac{1+\frac{k-1}{4}}{4+2(k-1)}} \\
& \geq \sum_{k=1}^K \sqrt{\frac{1+K}{2_2K}}=\frac{K}{\sqrt{2}}
\end{aligned}
\end{equation*}

\subsection{Proof of Corollary 5.3}
\label{appendix: proof of corolary with structual assumption}
In this proof, we explore the relationship between the term $\sum_{k=1}^K\sum_{h=1}^{H}\sum_{j=1}^d
             \sqrt{\phi_{h,j}^k \mathbf{1}_j^T (\Lambda_h^k)^{-1}\mathbf{1}_j\phi_{h,j}^k}$ and the properties of feature mapping $\phi$. Specifically,
\begin{equation}
    \begin{aligned}
   (\star) :\overset{\Delta}{=} & \max_i\{\beta_i\} \sum_{k=1}^K\sum_{h=1}^{H}\sum_{j=1}^d
             \sqrt{\phi_{h,j}^k \mathbf{1}_j^T (\Lambda_h^k)^{-1}\mathbf{1}_j\phi_{h,j}^k} \\
             & \overset{(i)}{\leq} \max_i\{\beta_i\} \sum_{k=1}^K\sum_{h=1}^{H}\sum_{j=1}^d
             \phi_{h,j}^k\sqrt{\frac{1}{\lambda_{\min}(\Lambda_h^k)}} \\
             & = \max_i\{\beta_i\} \sum_{h=1}^{H}\sum_{k=1}^K \sqrt{\frac{1}{\lambda_{\min}(\Lambda_h^k)}} \\
            & \overset{(ii)}{\leq} \max_i\{\beta_i\} \sum_{h=1}^{H} \sqrt{K}
            \sqrt{\sum_{k=1}^K\frac{1}{\lambda_{\min}(\Lambda_h^k)}}
\end{aligned}
\end{equation}
where $(i)$ follows from the fact that $\frac{1}{\lambda_{\max}(\Lambda_h^k)} \cdot \mathbf{I} \leq (\Lambda_h^k)^{-1} \leq \frac{1}{\lambda_{\min}(\Lambda_h^k)}\cdot \mathbf{I}$, and the second inequality is due to Cauchy–Schwarz inequality. To proceed, we introduce the following Lemma:

\begin{lemma}
\label{lemma: independency with probability 1}
    Let $\Lambda_h^k=\mathbf{I}+\sum_{\tau=1}^{k-1}\phi(s_h^\tau,\va_h^\tau)\phi(s_h^\tau,\va_h^\tau)^T$, where $s_h^\tau \sim P_h^0(\cdot|s_{h-1}^\tau,\va_{h-1}^\tau)$, $\va_h^\tau\sim \pi_h^\tau(\cdot|s_h^\tau)$. When $k-1>d$, let $\{k_i\}_{i=1}^d$ be any subset of size $d$ from $\{1,2,\cdots,k\}$. Then, under the assumption in Corollary 5.3, the following condition holds for $d>2$:
    \begin{equation}
P\left(\sum_{i=1}^d\phi(s_h^{k_i},\va_h^{k_i})\phi(s_h^{k_i},\va_h^{k_i})^T>\mathbf{0}|\{s_{h-1}^{k_i},a_{h-1}^{k_i}\}_{i=1}^d\right)=1
    \end{equation}
\end{lemma}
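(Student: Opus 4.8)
The plan is to reduce positive definiteness of the $d\times d$ matrix to linear independence of the $d$ sampled feature vectors, and then to establish that independence holds almost surely through a sequential conditioning argument in which each newly revealed feature escapes the span of the previously revealed ones. First I would write $M:=\sum_{i=1}^d \phi(s_h^{k_i},\va_h^{k_i})\phi(s_h^{k_i},\va_h^{k_i})^T = \Phi\Phi^T$, where $\Phi=[\,\phi(s_h^{k_1},\va_h^{k_1})\mid\cdots\mid\phi(s_h^{k_d},\va_h^{k_d})\,]\in\R^{d\times d}$. Since there are exactly $d$ vectors in $\R^d$, we have $M>\mathbf{0}$ if and only if $\det\Phi\neq 0$, i.e.\ the vectors $v_i:=\phi(s_h^{k_i},\va_h^{k_i})$ span $\R^d$. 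Hence it suffices to prove that, conditioned on $\{s_{h-1}^{k_i},a_{h-1}^{k_i}\}_{i=1}^d$, the vectors $v_1,\dots,v_d$ are linearly independent with probability one.

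Next I would set up the induction on the revealed features. Because the episodes $k_1,\dots,k_d$ are distinct, given the conditioning information the step-$h$ states $s_h^{k_1},\dots,s_h^{k_d}$ are sampled independently, each from the law $P_h^0(\cdot\mid s_{h-1}^{k_i},a_{h-1}^{k_i})$, which is absolutely continuous w.r.t.\ Lebesgue measure by the third assumption of Corollary~\ref{corollary: upper bound with assumption}; the actions $\va_h^{k_i}\sim\pi_h^{k_i}(\cdot\mid s_h^{k_i})$ take values in the finite set $\gA$. Revealing $v_1,\dots,v_d$ one at a time, on the event that $v_1,\dots,v_j$ are already independent their span $W_j:=\mathrm{span}(v_1,\dots,v_j)$ is a proper subspace of $\R^d$ of dimension $j<d$. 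Since $W_j$ is a function only of the step-$h$ samples from episodes $k_1,\dots,k_j$, the conditional independence across episodes makes it independent of the fresh state $s_h^{k_{j+1}}$. It then remains to show $P(v_{j+1}\in W_j)=0$, which chained over $j=1,\dots,d-1$ gives full independence.

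The heart of the argument is the escape step. Fix any proper subspace $W\subsetneq\R^d$ and any action value $\va\in\gA$. The set $\{s:\phi(s,\va)\in W\}$ is the preimage under $\phi(\cdot,\va)$ of a subspace of dimension $d'<d$; if it had positive Lebesgue measure, then $\phi$ would map a full-dimensional subset of its domain into a $d'$-dimensional subspace, contradicting the non-degeneracy assumption (first bullet of Corollary~\ref{corollary: upper bound with assumption}). Hence this preimage is Lebesgue-null. Conditioning on the realized value $W_j=W$ and writing $\mu\ll\text{Lebesgue}$ for the law of $s_h^{k_{j+1}}$, I bound $P(\phi(s_h^{k_{j+1}},\va_h^{k_{j+1}})\in W)\le \sum_{\va\in\gA}\int_{\{s:\phi(s,\va)\in W\}}\pi_h^{k_{j+1}}(\va\mid s)\,d\mu(s)$; each integrand is supported on a Lebesgue-null set against an absolutely continuous measure, so every term vanishes. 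Integrating out the independent realization of $W_j$ gives $P(v_{j+1}\in W_j\mid v_1,\dots,v_j)=0$, completing the induction and hence $M>\mathbf{0}$ almost surely.

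The step I expect to be the main obstacle is making this escape argument rigorous despite the subspace $W_j$ being itself random and the actions being correlated with the states through the policy, so that one cannot invoke a single fixed-subspace statement directly. The clean route is exactly the one above: prove the null-set property for every fixed proper subspace, use distinct episodes to guarantee $W_j$ is independent of the freshly sampled $s_h^{k_{j+1}}$, and absorb the state-dependent policy weights $\pi_h^{k_{j+1}}(\va\mid s)$ as bounded factors inside an integral over a Lebesgue-null region—so they never affect the conclusion. A secondary technical point is pinning down the precise reading of ``non-degenerate'' for a fixed discrete action slice of $\gS\times\gA$, which I treat by noting that fixing a discrete action leaves a full-dimensional state slice and applying the assumption on that slice.
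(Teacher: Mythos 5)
Your proposal is correct and follows essentially the same route as the paper: reduce positive definiteness to linear independence of the $d$ feature vectors, reveal them sequentially, and use non-degeneracy of $\phi$ plus absolute continuity of $P_h^0$ to show each new feature escapes the span of the previous ones almost surely. You are in fact somewhat more careful than the paper's own argument on the points you flag (the explicit $M=\Phi\Phi^T$ reduction, the randomness of the span $W_j$, and the marginalization over the state-dependent policy), but the underlying idea is identical.
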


\begin{proof}
    We define the event $\mathcal{I}_j:\{\{\phi(s_h^{k_i},a_h^{k_i})\}_{i=1}^j\}$  are linearly independent. Then,
    \begin{equation}
        \begin{aligned}
            & P\left(\mathcal{I}_d|\{s_{h-1}^{k_i},\va_{h-1}^{k_i}\}_{i=1}^d\right) \\
            &=P\left(\mathcal{I}_1 \mid s_{h-1}^{k 1}, \va_{h-1}^{k 1}\right) \times \prod_{j=2}^d P\left(\mathcal{I}_j \mid\left\{\left(s_{h-1}^{k i}, \va_{h-1}^{k i}\right)\right\}_{i=1}^j, \mathcal{I}_{j-1}\right) \\
            & =1 \times \prod_{j=2}^d\left(1-P\left(\left(s_h^{k_j}, \va_h^{k_j}\right) \in \phi^{-1}\left(\operatorname{Span}\left(\left\{\phi\left(s_h^{k_i}, \va_h^{k_i}\right)\right\}_{i=1}^{j-1}\right)\right) \right.\right.  \\ & \mid\left\{\left(s_{h-1}^{k_i}, \left. \left. \va_{h-1}^{k_i}\right)\right\}_{i=1}^j, \mathcal{I}_{j-1}\right)\right)
        \end{aligned}
    \end{equation}
    For each $2 \leq j \leq d$, since $\phi$ is non-degenerated and $\operatorname{Span}\left(\left\{\phi\left(s_h^{k_i}, a_h^{k_i}\right)\right\}_{i=1}^{j-1}\right)$ is a low dimension hyperplane in $R^d$, the pre-image $\phi^{-1}\left(\operatorname{Span}\left(\left\{\phi\left(s_h^{k_i}, a_h^{k_i}\right)\right\}_{i=1}^{j-1}\right)\right)$ lies in low dimensional subspace in $R^m$. Which implies that $(s_h^{k_j},a_h^{k_j})$ lies in a subspace of $R^m$ with Lebesgue Measure 0 . Since $P_h^0$ is absolutely continuous,
    \begin{equation}
        \begin{aligned}
                    & P\left(\left(s_h^{k_j}, \va_h^{k_j}\right) \in \phi^{-1}\left(\operatorname{Span}\left(\left\{\phi\left(s_h^{k_i}, \va_h^{k_i}\right)\right\}_{i=1}^{j-1}\right)\right) \mid\left\{\left(s_{h-1}^{k_i}, \va_{h-1}^{k_i}\right)\right\}_{i=1}^j, \mathcal{I}_{j-1}\right)\\
                    & \leq P_h^0\left(s_h^{k_j} \in \mathcal{D}| \mathcal{D} \text{ is a Measure 0 subset in } R^m, (s_{h-1}^{k_j},\va_{h-1}^{k_j}) \right) \\
                    & =0
        \end{aligned}
    \end{equation}
Therefore, $P\left(\mathcal{I}_d|\{s_{h-1}^{k_i},\va_{h-1}^{k_i}\}_{i=1}^d\right)=1$.
\end{proof}

\begin{proof} We create a partition with same size $d$ for the index set $\{1,2, \cdots, k-1\}$ and denote them by $\left\{\mathcal{P}^i\right\}_{i \in} \frac{k-1}{d}$, without loss of generality, we assume $k-1$ is divisible by $d$. Then,
\begin{equation}
    \lambda_{\min }\left(\Lambda_h^k\right) \geq 1+\sum_{i=1}^{\frac{k-1}{d}} \lambda_{\min }\left(\sum_{\tau \in \mathcal{P}^i} \phi\left(s_h^\tau, a_h^\tau\right) \phi\left(s_h^\tau, a_h^\tau\right)^T\right)
\end{equation}

By Lemma \ref{lemma: independency with probability 1} above, $\sum_{\tau \in \mathcal{P}^i} \phi\left(s_h^\tau, a_h^\tau\right) \phi\left(s_h^\tau, a_h^\tau\right)^T>0$ with probability 1 . And by our assumption in Corollary 5.3, there exist a constant c, such that $\sum_{\tau \in \mathcal{P}^i} \phi\left(s_h^\tau, a_h^\tau\right) \phi\left(s_h^\tau, a_h^\tau\right)^T>c I$. Therefore, $\lambda_{\min }\left(\Lambda_h^k\right) \geq 1+\frac{c(k-1)}{d} \geq \frac{c k}{2 d}$ with probability 1. And 
\begin{equation}
\begin{aligned}
        (\star) & \leq \max_i\{\beta_i\} \sum_{h=1}^{H} \sqrt{K}
            \sqrt{\sum_{k=1}^K\frac{1}{\lambda_{\min}(\Lambda_h^k)}} \\
            & \leq \max_i\{\beta_i\} \sum_{h=1}^{H} \sqrt{K}
            \sqrt{\sum_{k=1}^K \frac{2d}{ck}} \\
            & \leq \max_i\{\beta_i\} \sqrt{\frac{2dH^2K\log K}{c}}
\end{aligned}
\end{equation}
\end{proof}

\section{Proof of Technical Lemmas}
\label{Section: Proof of technical lemmas}
\subsection{Proof of Lemma \ref{lemma: Shrinkage of Robust Value Function}}
\begin{proof}
Firstly, we find an upper bound for the maximum value of $\RV[h]$ for any $(i,h,\pi)\in [n]\times[H]\times\Pi$. Specifically,
\begin{equation}
\label{equation: Corollary, bound for value function}
    \begin{aligned}
        \sup_{s \in \gS} \RV[h](s) = & \sup_{s \in \gS}\E_{\va \sim \pi(\va|s)}[r_{i,h}(s,\va)+ \inf_{P \in \sunP}P\RV[h+1]] \\
        & \leq 1 + \sup_{s \in \gS, \va \in \gA}\langle\phi_{s\va},\inf_{\mu_h \in \mathcal{U}_{TV}^{\sigma_i}(\mu_h^0)}\E_{\mu_h}[\RV[h+1]]\rangle
    \end{aligned}
\end{equation}Let $s^\star=\arg\inf_{s\in\gS}\RV[h+1](s)$, we can construct a vector of measures $\mu_h^\prime=(\mu_{h,1}^\prime,\mu_{h,2}^\prime,\cdots,\mu_{h,d}^\prime)$, such that $0 \leq \mu_{h,j}^\prime(s) \leq \mu_{h,j}^0(s)$ for all $s \in \gS$. Additionally, the TV distance between them is bounded by:
\begin{equation}
    \frac{1}{2}\int_{s\in \gS}|\mu_{h,j}^\prime(s) - \mu_{h,j}^0(s)|ds = \frac{1}{2} \sigma_i
\end{equation}
for each $j \in [d]$. Let $\delta_{s^\star}$ be the Dirac Delta distribution with mass on $s^\star$, then, $\mu_{h,j}^\prime+\sigma_i \delta_{s^\star}$ is a probability distribution since 
\begin{equation}
    \int_{s \in \gS}\mu_{h,j}^\prime(s)+\sigma_i \delta_{s^\star}(s)ds=1-\sigma_i + \sigma_i=1
\end{equation}
Then,
\begin{equation}
            \frac{1}{2} \int_{s \in \gS}|\mu_{h,j}^\prime(s)+\sigma_i \delta_{s^\star}(s)-\mu_{h,j}^0(s)|ds \leq \frac{1}{2} \int_{s \in \gS}|\mu_{h,j}^\prime(s)-\mu_{h,j}^0(s)|ds + \frac{1}{2} \int_{s \in \gS}\sigma_i \delta_{s^\star}(s)ds \leq \sigma_i
\end{equation}
which implies that $\mu_{h}^\prime+\sigma_i \delta_{s^\star}\mathbf{1} \in \mathcal{U}_{TV}^{\sigma_i}(\mu_h^0)$. Equation (\ref{equation: Corollary, bound for value function}) can be further bounded by:
\begin{equation}
    \begin{aligned}
        \sup_{s \in \gS}\RV[h](s) & \leq 1 + \sup_{s\in\gS,\va \in \gA}\langle
        \phi_{s\va},\E_{\mu_h^\prime+\sigma_i\delta_{s^\star}\mathbf{1}}[\RV[h+1]]\rangle \\
        & \overset{(i)}{\leq} 1 + \sup_{s\in\gS,\va \in \gA}\langle
        \phi_{s\va}, \E_{\mu_h^\prime}[\sup_{s \in \gS}\RV[h+1](s)\mathbf{1}] \\
        & \leq 1 + (1-\sigma_i)\sup_{s \in \gS}\RV[h+1](s) \\
        & \leq 1 + (1-\sigma_i)(1+(1-\sigma_i)(1+(1-\sigma_i)\cdots)) \\
        & \leq \frac{1-(1-\sigma_i)^{H-h}}{\sigma_i}\leq \frac{1}{\sigma_i}
    \end{aligned}
\end{equation}
where the first inequality is due to Assumption 4.2. Since the value function $|\RV[h]|\leq H$, we have $|\RV[h]|\leq \min\{\frac{1}{\sigma_i},H\}$.
\end{proof}

\subsection{Proof of Lemma \ref{lemma: Self-normal upper bound}}
Before beginning our formal proof, we need additional analysis on the covering number of the estimated value function $V_{i,h}^k(s)$. Given any instance $V_i$ (we omit $k \in[K]$ and $h \in[H]$ for simplicity), where $ V_i(s)=E_{\va \in \pi}\left[Q_i(s, \va)\right]$ in Algorithm 1. By lemma \ref{additional: covering number analysis}, we can pick $\left(\hat{Q}_1, \hat{Q}_2, \ldots ,\hat{Q}_n\right)$ in the $\epsilon$-cover of $\mathcal{Q}$ such that:
$$
\sup _{i \in[n], s \in \gS, a \in \gA}\left|{Q}_i(s, \va)-\hat{Q}_i(s,\va)\right| \leq \epsilon
$$
and $\pi$ be the CCE for matrix game $\left(\hat{Q}_1, \hat{Q}_2, \ldots, \hat{Q}_n\right)$ for any $s \in S$. We further denote $\hat{V}_i(s)=\E_{\va \sim \pi}\left[\hat{Q}_i(s, \va)\right]$, we have:
\begin{equation}
\label{lemma equation: sup of V}
    \begin{aligned}
        \sup _{i \in[n], s \in[\gS]}\left|V_i(s)-\hat{V}_i(s)\right|& =\sup _{i \in[n], s \in S}|\E_{\va \sim \pi}\left[Q_i(s, \va)-\hat{Q}_i(s, \va)\right]|
        \\ &\leq \sup _{i \in[n], s \in \gS, \va \in \gA}\left|Q_i(s, \va)-\hat{Q}_{i}(s, \va)\right| \leq \epsilon 
    \end{aligned}
\end{equation}
Thus, the covering number of function class $\mathcal{V}$ is upper bounded by the covering number of the function class $\mathcal{Q}$. We denote the $\epsilon-covering$ number of function class $\mathcal{V}$ by $\mathcal{N}_\epsilon^V$, which can be obtained by Lemma \ref{additional:weight bound} and Lemma \ref{additional: covering number analysis}, and $\epsilon-covering$ number of constant $\alpha \in [0,H]$ by $\mathcal{N}_\epsilon^\alpha$(by Lemma \ref{Axuiliary: euclidean covering number}) in the following proof.
\begin{proof}
(We use V instead of $V_i$ and the upper bound $|V| \leq H$ instead of $\min\{\frac{1}{\sigma_i},H\}$ for simplicity here) With probability at least $1-\frac{\delta}{3\mathcal{N}_\epsilon^V\mathcal{N}_\epsilon^\alpha}$:\\
\begin{equation}
\begin{aligned}
& \left\|\sum_{\tau=1}^{k-1} \phi_h^\tau\left\{[V]_\alpha\left(s_{h+1}^\tau\right)-P_h^0[V]_\alpha\left(s_h^\tau, \va_h^\tau\right)\right\}\right\|_{(\Lambda_h^k)^{-1}}^2 \\
& \stackrel{(i)}{\leq} 2\left\|\sum_{\tau=1}^{k-1} \phi_h^\tau\left\{[\hat{V}]_\alpha\left(s_{h+1}^\tau\right)-P_h^0[\hat{V}]_\alpha\left(s_h^\tau, \va_h^\tau\right)\right\}\right\|_{(\Lambda_h^k)^{-1}}^2 \\
& +2\left\|\sum_{\tau=1}^{k-1} \phi_h^\tau\left\{[V]_\alpha\left(s_{h+1}^\tau\right)-[\hat{V}]_\alpha\left(s_{h+1}^\tau\right)-P_h^0\left\{[V]_\alpha-[\hat{V}]_\va\right\}\left(s_h^\tau, \va_h^\tau\right)\right\}\right\|_{(\Lambda_h^k)^{-1}}^2 \\
& \leqslant 4\left\|\sum_{\tau=1}^{k-1} \phi_h^\tau\left\{[\hat{V}]_{\hat{\alpha}}\left(s_{h+1}^\tau\right)-P_h^0[\hat{V}]_{\hat{\alpha}}\left(s_h^\tau, \va_h^\tau\right)\right\}\right\|_{(\Lambda_h^k)^{-1}}^2 \\
& \left.+4 \left\| \sum_{\tau=1}^{k-1} \phi_h^\tau\left\{[\hat{V}]_\alpha-[\hat{V}]_{\hat{\alpha}}\right\}\left(s_{h+1}^\tau\right)-P_h^0\left\{[\hat{V}]_\alpha-[\hat{V}]_{\hat{\alpha}}\right\}\left(s_h^\tau, \va_h^\tau\right)\right\} \right\|_{(\Lambda_h^k)^{-1}}^2 \\
& +2\left\|\sum_{\tau=1}^{k-1} \phi_h^\tau\left\{[V]_\alpha\left(s_{h+1}^\tau\right)-[\hat{V}]_\alpha\left(s_{h+1}^\tau\right)-P_h^0\left\{[V]_\alpha-[\hat{V}]_a\right\}\left(s_h^\tau, \va_h^\tau\right)\right\}\right\|_{(\Lambda_h^k)^{-1}}^2 
\end{aligned}
\end{equation}
Where $(i)$ is due to the fact that $(a+b)^TA(a+b) \leq 2a^TAa+2b^TAb$, for any vector $a,b \geq \mathbf{0}$ and p.s.d matrix $A \in R^{d\times d}$. We then continue by:
\begin{equation}
    \begin{aligned}
    & \left\|\sum_{\tau=1}^{k-1} \phi_h^\tau\left\{[V]_\alpha\left(s_{h+1}^\tau\right)-P_h^0[V]_\alpha\left(s_h^\tau, \va_h^\tau\right)\right\}\right\|_{(\Lambda_h^k)^{-1}}^2 \\
& \stackrel{(i)}{\leqslant} 8 H^2 \log \left[\frac{3 \operatorname{det}\left(\Lambda_h^k\right)^{\frac{1}{2}} \operatorname{det}\left(\Lambda_0\right)^{-\frac{1}{2}} \mathcal{N}_\epsilon^V \mathcal{N}_\epsilon^\alpha}{\delta}\right]+\frac{16 k^2 \epsilon^2}{\lambda}+\frac{8 k^2 \epsilon^2}{\lambda} \\
& \stackrel{\text { (ii) }}{\leqslant} 8 H^2\left\{\log \left[\frac{3(\lambda+k)^{\frac{d}{2}} \lambda^{-\frac{d}{2}}}{\delta}\right]+\log \mathcal{N}_\epsilon^V+\log \mathcal{N}_\epsilon^\alpha \right\}+\frac{24 k \epsilon^2}{\lambda} \\
& \leq 4 H^2 d \log \left[\frac{\lambda+k}{\lambda}\right]+8 H^2 \log \frac{3}{\delta}+8 n d H^2 \log \left(1+\frac{8 H}{\epsilon} \sqrt{\frac{d k}{\lambda}}\right) \\
& +8 d H^2 \log \left(1+\frac{8  d^{\frac{1}{2}} \beta^2}{\lambda \epsilon^2}\right)+8 H^2 \log \left(\frac{3 H}{\epsilon}\right)+\frac{24 k \epsilon^2}{\lambda}\\
& \leqslant 36 \text { nd } H^2 \log \frac{16 H k \sqrt{d} \beta^2}{\delta \lambda \epsilon^2}+\frac{24 k \epsilon^2}{\lambda}
\end{aligned}
\end{equation}
$(i)$ holds by lemma \ref{Axuiliary: Concentration of Self-Normalized Process} and equation (\ref{lemma equation: sup of V}). In $(ii)$, we leverage the condition that $\Lambda_h^k$ is p.d and its maximum eigen-value is upper bounded by $\lambda + k$.
In addition, by setting $\lambda=1$, $\epsilon=\frac{1}{kH}$, we continue to derive the upper bound as:
\begin{equation}
    \begin{aligned}
    & \left\|\sum_{\tau=1}^{k-1} \phi_h^\tau\left\{[V]_\alpha\left(s_{h+1}^\tau\right)-P_h^0[V]_\alpha\left(s_h^\tau, \va_h^\tau\right)\right\}\right\|_{(\Lambda_h^k)^{-1}}^2 \\
    & \leq 72ndH^2\log(\frac{16c_\beta nd^{\frac{3}{2}}H^5k^3\log(\frac{ndHK}{\delta})}{\delta}) \\
    & \overset{(i)}{\leq } 72ndH^2(\log(16)+\log c_\beta+2\log n+\frac{5}{2}\log d+6\log H+4\log k+\log(ndHK)+2\log(\frac{1}{\delta})) \\
    & \leq 432ndH^2(\log(\frac{ndHK}{\delta})+\log c_\beta) \\
    & \leq 432ndH^2\log(\frac{ndHK}{\delta})(1+\log c_\beta)
\end{aligned}
\end{equation}
We then choose a constant $c_\beta$ large enough such that:
\begin{equation}
    432(1+\log c_\beta) \leq c_\beta
\end{equation}
we complete the proof of Lemma \ref{lemma: Self-normal upper bound} by taking a union bound on the $\epsilon-covering$
set of $\mathcal{V}$ and $\alpha$, and replace the value $H$ by $\min\{\frac{1}{\sigma_i},H\}$.

\end{proof}
\subsection{Proof of Lemma \ref{lemma: Maringales difference sequence}}
\begin{proof}
    It can be shown that both $\xi_{h,i}^k$ and $\zeta_{h,i}^k$ are martingale difference sequence with bound $|\xi_{h,i}^k|\leq 4\min\{\frac{1}{\sigma_i},H\}$ and $|\zeta_{h,i}^k| \leq 4\min\{\frac{1}{\sigma_i},H\}$. Then, by Lemma \ref{Axuiliary lemma:Azuma}, 
    \begin{equation}
    \begin{aligned}
            P(\sum_{k=1}^K\sum_{h=1}^{H}\zeta_{i,h}^k \geq t) & \leq exp(\frac{-t^2}{8KH \min\{\frac{1}{\sigma_i},H\}^2}) \\
            P(\sum_{k=1}^K\sum_{h=1}^{H}\xi_{i,h}^k \geq t) & \leq exp(\frac{-t^2}{8KH \min\{\frac{1}{\sigma_i},H\}^2})
    \end{aligned}
    \end{equation}
    Holds for any $t \geq 0$. Let $t=\sqrt{8H^3K\log(\frac{6n}{\delta})}$, with probability at least 1-$\frac{\delta}{3n}$:
    \begin{equation}
        \sum_{k=1}^K\sum_{h=1}^{H}[\zeta_{i,h}^k+ \xi_{i,h}^k
         ] \leq 4\min\{\frac{1}{\sigma_i},H\}\sqrt{2HK\log(\frac{3n}{\delta})}
    \end{equation}
\end{proof}

\subsection{Proof of Lemma \ref{lemma:UCB}}
\begin{proof}
    We conduct our proof by induction, which is valid for any $i \in [n]$ and $k \in [K]$. On the step $h=H$, we have $\QK[H](s,\va)=\BRKQ[H](s,\va)=\RKQ[H](s,\va)=r_{i,H}(s,\va)$, which is valid for our lemma. When $\QK[h](s,\va) \geq \BRKQ[h](s,\va)-2\epsilon(H-h+1)$ and $\QK[h](s,\va) \geq \RKQ[h](s,\va)$ is valid for step $h$, we have:
 \begin{equation}
     \begin{aligned}
         \VK[h](s) &= \E_{\va \sim \pi^k}[\QK[h](s,\va)] \\
         & \overset{(i)}{\geq} \E_{\va \sim brq(\pi^k_{-i}),\pi_{-i}^k }[\QK[h](s,\va)]-2\epsilon \\
         & \overset{(ii)}{\geq} \E_{\va \sim br(\pi^k_{-i}),\pi_{-i}^k }[\QK[h](s,\va)]-2\epsilon \\
         & \geq \E_{\va \sim br(\pi^k_{-i}),\pi_{-i}^k }[\BRKQ[h](s,\va)]-2\epsilon(H-h+2) \\
         & = \BRKV[h](s) - 2\epsilon(H-h+2)
     \end{aligned}
 \end{equation}
 and 
  \begin{equation}
         \VK[h](s) = \E_{\va \sim \pi^k}[\QK[h](s,\va)] \geq \E_{\va \sim \pi^k}[\RKQ[h](s,\va)] = \RKV[h](s)
 \end{equation}
Where in $(i)$, $brq(\pi^k_{-i}),\pi_{-i}^k$ is the best response policy of player $i$ w.r.t policy $\pi_{-i}^k$ on the matrix game $(Q_{1,h}^k(s,\cdot),Q_{2,h}^k(s,\cdot),\cdots,Q_{n,h}^k(s,\cdot))$ and use the result in Lemma \ref{lemma: Additional: 2-epsilon Nash}. While $br(\pi^k_{-i}),\pi_{-i}^k$ is the best response policy of player i w.r.t policy $\pi_{-i}^k$ on the matrix game $(Q_{1,h}^{\star,\pi_{-i}^k,\sigma}(s,\cdot),Q_{2,h}^{\star,\pi_{-i}^k,\sigma}(s,\cdot),\cdots,Q_{n,h}^{\star,\pi_{-i}^k,\sigma}(s,\cdot))$.\\
Next, when $\VK[h+1](s) \geq \BRKV[h+1](s)-2\epsilon(H-h+1)$ and $\VK[h+1](s) \geq \RKV[h+1](s)$ is valid for step $h+1$, by Lemma \ref{lemma Additional: UCB}:
\begin{equation}
\begin{aligned}
        & r_{i,h}(s,\va) + \langle\phi_{s\va},\hat{w}_{i,h}^k\rangle + \Gamma_{h,k}(s,\va) \\ & \geq Q_{i,h}^{\pi,\sigma}(s,\va)+(\inf_{P_h(\cdot|s,\va) \in \sunP}P\VK[h+1] - \inf_{P_h(\cdot|s,\va) \in \sunP}PV_{i,h+1}^{\pi,\sigma})
\end{aligned}
\end{equation}
for any $\pi$, which implies 
$r_{i,h}(s,\va) + \langle\phi_{s\va},\hat{w}_{i,h}^k\rangle + \Gamma_{h,k}(s,\va) \geq \BRKQ[h](s,\va) - 2\epsilon(H-h+1)$. Then,
\begin{equation}
    \QK[h](s,\va) = \min\{r_{i,h}(s,\va) + \langle\phi_{s\va},\hat{w}_{i,h}^k\rangle + \Gamma_{h,k},H\} \geq \BRKQ[h](s,\va) - 2\epsilon(H-h+1)
\end{equation}
We also obtained $\QK[h](s,\va) \geq \RKQ[h](s,\va)$ with similar proof. And finish our proof by induction starting from step $H$.
\end{proof}

\section{Additional Lemmas}
\begin{lemma} In Algorithm 1, we can derive upper bound for the estimated weights $w:=[\hat{w}_{i,h,j}^k]_{j \in [d]}$, and $A:=Diag((\Lambda_h^k)^{-1})$ for any $i \in [n], k >0$:
    \begin{equation}
        \begin{aligned}
            ||A||_F & \leq \frac{\sqrt{d}}{\lambda} \\
            ||w||_2 & \leq 2\min\{\frac{1}{\sigma_i},H\}\sqrt{\frac{dk}{\lambda}}
        \end{aligned}
    \end{equation}
    \label{additional:weight bound}
\end{lemma}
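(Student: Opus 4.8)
The plan is to establish the two bounds separately, treating the matrix bound as a warm-up and concentrating the effort on the weight bound, where the per-coordinate maximization over $\alpha$ is the only genuine difficulty. The bound on $A=\mathrm{Diag}((\Lambda_h^k)^{-1})$ is immediate: since $\Lambda_h^k=\lambda I+\sum_{\tau=1}^{k-1}\phi_h^\tau(\phi_h^\tau)^\top\succeq\lambda I$, we have $(\Lambda_h^k)^{-1}\preceq\frac{1}{\lambda}I$, so every diagonal entry obeys $0\le e_j^\top(\Lambda_h^k)^{-1}e_j\le\frac1\lambda$. Because $A$ is diagonal, $\|A\|_F=\sqrt{\sum_{j=1}^d\big(e_j^\top(\Lambda_h^k)^{-1}e_j\big)^2}\le\sqrt{d/\lambda^2}=\sqrt d/\lambda$.

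For the weight bound, I would first reduce the coordinatewise maximum defining $\hat w_{i,h,j}^k=\max_{\alpha\in[0,H]}\{\hat\nu_{i,h,j}^k(\alpha)-\sigma_i\alpha\}$ to a bound on $\sup_\alpha|\hat\nu_{i,h,j}^k(\alpha)|$. Since $\sigma_i\alpha\ge0$, dropping that term gives $\hat w_{i,h,j}^k\le\max_\alpha\hat\nu_{i,h,j}^k(\alpha)$, while evaluating the max at $\alpha=0$ gives $\hat w_{i,h,j}^k\ge\hat\nu_{i,h,j}^k(0)$; together these yield $|\hat w_{i,h,j}^k|\le\sup_{\alpha\in[0,H]}|\hat\nu_{i,h,j}^k(\alpha)|$. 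The crucial point is that this supremum can be controlled uniformly in $\alpha$: writing $b_j=(\Lambda_h^k)^{-1}e_j$ for the $j$-th column of the inverse Gram matrix and using $0\le[V_{i,h+1}^k]_\alpha\le M_i$ with $M_i:=\min\{1/\sigma_i,H\}$ (from the truncation in Algorithm~\ref{alg:DRW-CCE-LSI} and Lemma~\ref{lemma: Shrinkage of Robust Value Function}), one gets $|\hat\nu_{i,h,j}^k(\alpha)|=\big|\sum_{\tau=1}^{k-1}(b_j^\top\phi_h^\tau)[V_{i,h+1}^k]_\alpha(s_{h+1}^\tau)\big|\le M_i\sum_{\tau=1}^{k-1}|b_j^\top\phi_h^\tau|$, and the right-hand side is now $\alpha$-free.

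It then remains to bound $\sum_\tau|b_j^\top\phi_h^\tau|$. Applying Cauchy--Schwarz across the $k-1$ samples together with $\sum_\tau\phi_h^\tau(\phi_h^\tau)^\top=\Lambda_h^k-\lambda I\preceq\Lambda_h^k$ gives $\sum_\tau|b_j^\top\phi_h^\tau|\le\sqrt{k}\,\sqrt{b_j^\top(\Lambda_h^k-\lambda I)b_j}\le\sqrt{k}\,\sqrt{e_j^\top(\Lambda_h^k)^{-1}e_j}\le\sqrt{k/\lambda}$, so $|\hat w_{i,h,j}^k|\le M_i\sqrt{k/\lambda}$ for each $j$. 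Summing squares over $j\in[d]$ yields $\|w\|_2\le M_i\sqrt{dk/\lambda}$, comfortably inside the claimed $2M_i\sqrt{dk/\lambda}$ (the factor $2$ being slack). The only delicate step is the reduction in the second paragraph: because the maximizing $\alpha$ can differ across coordinates, $w$ is \emph{not} $\hat\nu_{i,h}^k(\alpha^\star)$ for a single $\alpha^\star$, so a fixed-$\alpha$ vector norm bound cannot be invoked directly; the whole argument rests on factoring the uniform value bound $M_i$ out of each coordinate so that what remains is independent of $\alpha$ and can be handled coordinatewise.
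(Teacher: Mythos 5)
Your proof is correct, and for $\|A\|_F$ it coincides with the paper's argument; for $\|w\|_2$, however, you take a genuinely different route. The paper works with the dual characterization $\|w\|_2=\max_{\|v\|_2\le 1}|v^\top w|$, splits off the $\sigma_i\alpha$ term as an additive $\sqrt{d}\,\min\{1/\sigma_i,H\}\|v\|_2$ contribution (using $|\alpha|\le\min\{1/\sigma_i,H\}$ and $\sigma_i\le 1$), and controls the remaining estimator part by a Cauchy--Schwarz step that invokes the elliptical-potential bound $\sum_{\tau}(\phi_h^\tau)^\top(\Lambda_h^k)^{-1}\phi_h^\tau\le d$ (Lemma~\ref{Axuiliary lemma: sum of phi}); each of the two contributions is then at most $\min\{1/\sigma_i,H\}\sqrt{dk/\lambda}\,\|v\|_2$, whence the factor $2$. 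You instead argue coordinatewise: sandwiching $\max_\alpha\{\hat\nu_{i,h,j}^k(\alpha)-\sigma_i\alpha\}$ between $\hat\nu_{i,h,j}^k(0)$ and $\max_\alpha\hat\nu_{i,h,j}^k(\alpha)$ lets you discard the $\sigma_i\alpha$ term entirely rather than pay an additive $\sqrt{d}\,\min\{1/\sigma_i,H\}$ for it, and your Cauchy--Schwarz over $\tau$ needs only the elementary fact $e_j^\top(\Lambda_h^k)^{-1}e_j\le 1/\lambda$ rather than the elliptical potential. The payoff is a tighter constant ($1$ in place of $2$, comfortably inside the claim) and a more self-contained argument; the paper's dual-norm template is the more standard one and adapts more readily when the bonus involves non-diagonal quadratic forms, but for this particular lemma your version is at least as sharp. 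One cosmetic point: the bound $V_{i,h+1}^k\le\min\{1/\sigma_i,H\}$ you need is supplied directly by the truncation at $\min\{H,1/\sigma_i\}$ in the definition of $Q_{i,h}^k$ (and hence of $V_{i,h}^k$ as its expectation under $\pi_h^k$); the shrinkage lemma you also cite concerns the true robust value function $V_{i,h}^{\pi,\sigma}$ and is not actually needed here.
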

\begin{proof}
    \begin{equation}
        \begin{aligned}
            ||A||_F&=\sqrt{\sum_{j=1}^d a_{jj}^2} \\
                   & \leq \sqrt{d}\rho_{\max}(A) \\
                   & \overset{(i)}{\leq} \frac{\sqrt{d}}{\lambda}       
        \end{aligned}
    \end{equation}
where $a_{jj}$ is the $jth$ diagonal element of A, and in (i), the minimum eigenvalue of matrix $\Lambda_h^k$ is $\lambda$. For $w$, it can be bounded by:
\begin{equation}
    \begin{aligned}
        |v^Tw| & =|v^T[\max_{\alpha \in[0,\min\{\frac{1}{\sigma_i},H\}]}\{((\Lambda_h^k)^{-1}\sum_{\tau=1}^{k-1}\phi_h^\tau[V_{i,h+1}^k]_\alpha(s_{h+1}^\tau))_j-\sigma_i\alpha\}]_{j \in [d]}| \\
         & \overset{(i)}{\leq} |v^T\mathbf{1}|\min\{\frac{1}{\sigma_i},H\}+|v^T[((\Lambda_h^k)^{-1}\sum_{\tau=1}^{k-1}\phi_h^\tau[V_{i,h+1}^k]_{\alpha_j}(s_{h+1}^\tau))_j]_{j \in [d]}| \\
         & \overset{(ii)}{\leq} ||v||_2\sqrt{d}\min\{\frac{1}{\sigma_i},H\} + \min\{\frac{1}{\sigma_i},H\}\sqrt{(\sum_{\tau=1}^{k-1}v^T(\Lambda_h^k)^{-1}v)(\sum_{\tau=1}^{k-1}(\phi_h^\tau)^T(\Lambda_h^k)^{-1}\phi_h^\tau)} \\
         & \overset{(iii)}{\leq} ||v||_2\sqrt{d}\min\{\frac{1}{\sigma_i},H\}+\min\{\frac{1}{\sigma_i},H\}\sqrt{\frac{dk}{\lambda}}||v||_2 \\
         & \leq 2\min\{\frac{1}{\sigma_i},H\}\sqrt{\frac{dk}{\lambda}}||v||_2
    \end{aligned}
\end{equation}
In $(i)$, we use the fact that $|\alpha|\leq \min\{\frac{1}{\sigma_i},H\}$ and $|\sigma_i|\leq1$. In $(ii)$, we utilize Cauchy–Schwarz inequality and $|V_{i,h+1}^k|\leq \min\{\frac{1}{\sigma_i},H\}$. Inequality $(iii)$ is due to $\Lambda_h^k \geq \lambda I$ and lemma  \ref{Axuiliary lemma: sum of phi}. Finally, our proof is completed follows from the fact that $||w||_2=\max_{v:||v||_2\leq1}|v^Tw|$.
\end{proof}

\begin{lemma}(Covering Number of Function Class $\mathcal{Q}$)
    \label{additional: covering number analysis}
    Let $\mathcal{Q}$ be the vector function class parameterized by the form
        \begin{equation}
            \mathcal{Q}:=\{(Q_1,Q_2,\cdots,Q_n):\gS \times \gA \rightarrow R^n\}
        \end{equation}
    where 
    $$Q_i(s,\va)=\min\{r(s,\va)+\phi(s,\va)^Tw_i+\beta_i\sum_{j=1}^{d}\sqrt{\phi_j(s,\va)\mathbf{1}_j^T\Lambda^{-1}\mathbf{1}_j\phi_j(s,\va)},\min\{\frac{1}{\sigma_i},H\}\}$$
    be the estimated Q function by Algorithm 1 for player i at any time step $h \in [H]$, any iteration $k \in [K]$. The parameters to be control is the tuple $(w_i, \beta_i, \Lambda)$. Given $||w_i||\leq L, \beta_i \leq B$ and $\rho_{\min}(\Lambda) \geq \lambda$, when $||\phi(s,\va)||_2 \leq 1$ for all $(s,\va) \in \gS \times \gA$, the log $\epsilon$-covering number $\mathcal{N}_\epsilon$ of $\mathcal{Q}$ can be bounded by:
    \begin{equation}
        \label{addtional equation: Covering numbers upper bound}
        \log\mathcal{N}_\epsilon \leq nd\log(1+\frac{4L}{\epsilon})+d\log(1+\frac{8d^{\frac{1}{2}}B^2}{\lambda\epsilon^2})
    \end{equation}
    Noted that the distance we choose is $dist(\bar{Q}_i,\tilde{Q}_i) = \sup_{s,
    \va}|\bar{Q}_i(s,\va)-\tilde{Q}_i(s,\va)|$ and $\epsilon>0$ be any constant of our choice.
\end{lemma}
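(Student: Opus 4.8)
The plan is to build the cover by splitting each $Q_i$ into a linear piece and a bonus piece, covering the two pieces separately, and taking the product cover. First I would discard the outer truncation and the reward: since $r(s,\va)$ is fixed across the whole class and $\min\{\cdot,\min\{1/\sigma_i,H\}\}$ is $1$-Lipschitz, for any two parameter tuples $(w_i,\Lambda)$ and $(\tilde w_i,\tilde\Lambda)$ one has, for every $(s,\va)$,
\[
|Q_i(s,\va)-\tilde Q_i(s,\va)| \le |\phi_{s\va}^\top(w_i-\tilde w_i)| + |b_i(s,\va)-\tilde b_i(s,\va)|,
\]
where $b_i(s,\va)=\beta_i\sum_{j=1}^d\sqrt{\phi_j(s,\va)\,\mathbf{1}_j^\top\Lambda^{-1}\mathbf{1}_j\,\phi_j(s,\va)}$ is the bonus. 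So it suffices to control each term to precision $\epsilon/2$. Since $\beta_i$ is a fixed algorithmic constant, the only free parameters are $\{w_i\}_{i\in[n]}$ and $\Lambda$.

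For the linear term, $\|\phi_{s\va}\|_2\le 1$ gives $|\phi_{s\va}^\top(w_i-\tilde w_i)|\le\|w_i-\tilde w_i\|_2$, so a Euclidean $(\epsilon/2)$-net of each radius-$L$ ball $\{\|w_i\|\le L\}\subset\R^d$ suffices; the standard volumetric bound yields $d\log(1+4L/\epsilon)$ per player, hence $nd\log(1+4L/\epsilon)$ over all $n$ players.

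For the bonus term, the key is to reparametrize by the single diagonal matrix $A=\mathrm{Diag}(\Lambda^{-1})$, so that $\mathbf{1}_j^\top\Lambda^{-1}\mathbf{1}_j=A_{jj}$ and, using $\phi_j\ge 0$, $b_i=\beta_i\sum_j\phi_j\sqrt{A_{jj}}$. The obstacle here, and the reason this departs from the single-square-root bonus of the non-robust linear-MDP analysis, is that $b_i$ is a \emph{sum} of $d$ square roots, which naively costs extra $\sqrt d$ factors. This is resolved by the probability-simplex structure $\sum_j\phi_j=1$ from Assumption \ref{linear mdp assumption}: applying $|\sqrt a-\sqrt b|\le\sqrt{|a-b|}$ and then Cauchy--Schwarz with weights $\phi_j$,
\[
|b_i-\tilde b_i|\le \beta_i\sum_{j}\phi_j\sqrt{|A_{jj}-\tilde A_{jj}|}\le\beta_i\Big(\sum_j\phi_j|A_{jj}-\tilde A_{jj}|\Big)^{1/2}\le\beta_i\,\|A-\tilde A\|_F^{1/2},
\]
where the weighted average is bounded by the maximum entry and $\max_j|A_{jj}-\tilde A_{jj}|\le\|A-\tilde A\|_F$. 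By Lemma \ref{additional:weight bound}, $\|A\|_F\le\sqrt d/\lambda$, so $A$ ranges over a radius-$(\sqrt d/\lambda)$ set inside the $d$-dimensional space of diagonal matrices. Covering it in Frobenius norm to precision $\epsilon^2/(4B^2)$ forces $|b_i-\tilde b_i|\le B\sqrt{\epsilon^2/(4B^2)}=\epsilon/2$ simultaneously for every $i$ (using $\beta_i\le B$), at cost $d\log(1+8d^{1/2}B^2/(\lambda\epsilon^2))$; note the $B^2$ and $\epsilon^2$ appear precisely because the net lives on the squared parameter $A$.

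Finally I would take the product of the $w$-net and the $A$-net: any element of $\mathcal{Q}$ is within $\epsilon$ (in $\sup_{i,s,\va}$) of some center, so the total log-covering number is the sum of the two bounds, giving the claim. The one genuinely nontrivial step is the bonus estimate; everything else is standard ball-covering machinery. I would double-check that the collapse of the sum of square roots uses only $\sum_j\phi_j=1$ and $\phi_j\ge0$ (both from Assumption \ref{linear mdp assumption}) and that no off-diagonal entries of $\Lambda^{-1}$ enter, which is exactly why a $d$-dimensional rather than $d^2$-dimensional cover of $A$ suffices.
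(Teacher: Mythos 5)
Your proposal is correct and follows essentially the same route as the paper's proof: drop the $1$-Lipschitz truncation and the fixed reward, bound the linear part by $\|w_i-\tilde w_i\|_2$ via $\|\phi\|_2\le 1$ and a per-player Euclidean net, handle the bonus via $|\sqrt{x}-\sqrt{y}|\le\sqrt{|x-y|}$ reduced to the $d$-dimensional vector of diagonal entries of $\Lambda^{-1}$, and multiply the two nets. The one place you diverge is the collapse of the sum of square roots: you use $\sum_j\phi_j=1$ to get $\sum_j\phi_j\sqrt{l_j}\le(\sum_j\phi_j l_j)^{1/2}\le\|l\|_2^{1/2}$, whereas the paper passes through $\sqrt{\|l\|_1}\le(d^{1/2}\|l\|_2)^{1/2}$; your version is sharper by a $d^{1/4}$ factor and is in fact what makes the Frobenius precision $\epsilon^2/(4B^2)$ yield exactly an $\epsilon/2$ bonus error together with the stated radius ratio $8d^{1/2}B^2/(\lambda\epsilon^2)$.
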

\begin{proof}For any two finite strategy form game ($\bar{Q}_1(s,\va),\bar{Q}_2(s,\va),\cdots,\bar{Q}_n(s,\va)$) and ($\tilde{Q}_1(s,\va),\tilde{Q}_2(s,\va),\cdots,\tilde{Q}_n(s,\va)$), 
    Let $\vl=(\vl_1,\vl_2,\cdots,\vl_d)$ be the vector consisting the absolute value of the diagonal elements of $\bar{\Lambda}^{-1}-\tilde{\Lambda}^{-1}$, then
    \begin{equation}
        \begin{aligned}
            & \sup_{i \in[n], s\in\gS, \va \in \gA} |\bar{Q}_i(s,\va)-\tilde{Q}_i(s,\va)| \\ &
             \overset{(i)}{\leq} \sup_{i\in [n],s\in\gS,\va \in \gA}|\phi(s,\va)^T(\bar{w}_i-\tilde{w}_i)+\beta_i(\sum_{j=1}^d\sqrt{\phi_j(s,\va)\mathbf{1_j}^T\bar{\Lambda}^{-1}\mathbf{1_j}\phi_j(s,\va)} \\
            & - \sum_{j=1}^d\sqrt{\phi_j(s,\va)\mathbf{1_j}^T\tilde{\Lambda}^{-1}\mathbf{1_j}\phi_j(s,\va)})| \\
            & \overset{(ii)}{\leq}  \sup_{i \in  [n],||\phi||\leq1}|\phi ^T(\bar{w}_i-\tilde{w}_i)|+\sup_{s\in\gS,\va\in\gA}\beta_i\sum_{j=1}^d\sqrt{|\phi_j(s,\va)\mathbf{1}_j^T(\bar{\Lambda}^{-1}-\tilde{\Lambda}^{-1})\mathbf{1}_j\phi_j(s,\va)|} \\
            & = \sup_{i\in[n]}||\bar{w}_i-\tilde{w}_i||_2 + \sup_{s\in\gS,\va \in \gA}\beta_i\sum_{j=1}^d\sqrt{\phi_j^2(s,\va)l_j}
            \\
            & \overset{(iii)}{\leq}\sup_{i\in[n]}||\bar{w}_i-\tilde{w}_i||_2 + 
            \beta_i\sqrt{||l||_1} \\
            & \overset{(iv)}{\leq} \sup_{i\in[n]}||\bar{w}_i-\tilde{w}_i||_2 + \beta_i \sqrt{d^{\frac{1}{2}}||l||_2}
        \end{aligned}
    \end{equation}
where $(i)$ follows from the definition of the function class of $Q_i$. The inequality $(ii)$ is due to the fact that $|\sqrt{x}-\sqrt{y}|\leq\sqrt{|x-y|},\forall x,y \geq 0$. In $(iii)$, we use properties of $\phi$. Finally, $(iv)$ holds by relationship between $L_1$ and $L_2$ norm. 

Then, we can bound the $\epsilon$-covering number of each $V_i$ by the $\frac{\epsilon}{2}$-covering number
of $\{w \in R^d: ||w||_2 \leq L\}$ and $\frac{\epsilon^2}{4}$-covering number of $\{B^4dl \in R^d: ||l||_2 \leq \frac{\sqrt{d}}{\lambda}\}$ with respect to the $L_2$ norm. Specifically, let $\mathcal{C}_w$ and $\mathcal{C}_l$ be corresponding cover, by Lemma \ref{Axuiliary: euclidean covering number}, we have:
\begin{equation}
    |\mathcal{C}_w| \leq (1+\frac{4L}{\epsilon})^d, \quad |\mathcal{C}_l| \leq [1+\frac{8d^{\frac{1}{2}}B^2}{\lambda \epsilon^2}]^d.
\end{equation}
Then, the covering number of $\mathcal{Q}$ can be bounded by:
\begin{equation}
    \mathcal{N}_\epsilon \leq|\mathcal{C}_w|^n\cdot|\mathcal{C}_l|
\end{equation}
which we complete the proof by:
\begin{equation}
        \log\mathcal{N}_\epsilon \leq n\va \log|\mathcal{C}_w| + \log|\mathcal{C}_l| \leq nd\log(1+\frac{4L}{\epsilon})+d\log(1+\frac{8d^{\frac{1}{2}}B^2}{\lambda\epsilon^2})
\end{equation}
\end{proof}

\begin{lemma}($2-\epsilon$ CCE of policy $\pi^k$)
    \label{lemma: Additional: 2-epsilon Nash}
    Let $Q_{i,h}^k \in \mathcal{Q}$ be the estimated Q function in Algorithm 1, and $\hat{Q}_i$ be any point in the $\epsilon$-cover of function class $\mathcal{Q}$, such that:
    \begin{equation}
        \sup_{i \in [n], s \in \gS, \va \in \gA}|Q_{i,h}^k(s,\va)-\hat{Q}_i(s,\va)| \leq \epsilon
    \end{equation}
    Given any state $s$, If $\pi^k$ is the CCE of the matrix game $(\hat{Q}_1(s,\cdot),\hat{Q}_2(s,\cdot),\cdots,\hat{Q}_n(s,\cdot))$, then $\pi^k$ is $2-\epsilon$ CCE of the matrix game $(Q_{1,h}^k(s,\cdot),Q_{2,h}^k(s,\cdot),\cdots,Q_{n,h}^k(s,\cdot))$. The statement is valid for any $k \in [K]$ and $h \in [H]$.
\end{lemma}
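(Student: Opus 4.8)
The plan is to unfold the definition of a coarse correlated equilibrium and transfer the equilibrium property from the payoff tuple $(\hat{Q}_1,\dots,\hat{Q}_n)$ to $(Q_{1,h}^k(s,\cdot),\dots,Q_{n,h}^k(s,\cdot))$ by a simple sandwiching argument, exploiting that the two tuples are uniformly $\epsilon$-close in supremum norm. Recall that for an $n$-player matrix game with payoffs $(G_1,\dots,G_n)$ at a fixed state $s$, a joint distribution $\pi$ over $\gA$ is an (exact) CCE if, for every player $i\in[n]$ and every deviating action $a_i'\in\gA_i$,
\[
    \E_{\va\sim\pi}[G_i(s,\va)] \;\geq\; \E_{\va\sim\pi}[G_i(s,(a_i',\va_{-i}))],
\]
where $(a_i',\va_{-i})$ is the joint action obtained by overriding player $i$'s recommendation with $a_i'$ while keeping the correlated recommendations $\va_{-i}$ of the other players. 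The $2\epsilon$-approximate version (the additive relaxation that the lemma writes as $2$-$\epsilon$) simply weakens the right-hand side by $2\epsilon$.

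First I would record the two pointwise consequences of the covering bound $\sup_{i,s,\va}|Q_{i,h}^k(s,\va)-\hat{Q}_i(s,\va)|\le\epsilon$: namely $Q_{i,h}^k(s,\cdot)\ge\hat{Q}_i(s,\cdot)-\epsilon$ and $\hat{Q}_i(s,\cdot)\ge Q_{i,h}^k(s,\cdot)-\epsilon$ everywhere on $\gA$. Taking expectations under $\pi^k$ (and under the deviated profile $(a_i',\va_{-i})$, which is itself a joint action in $\gA$, so the bound still applies) gives
\[
    \E_{\va\sim\pi^k}[Q_{i,h}^k(s,\va)] \;\ge\; \E_{\va\sim\pi^k}[\hat{Q}_i(s,\va)] - \epsilon,
    \qquad
    \E_{\va\sim\pi^k}[\hat{Q}_i(s,(a_i',\va_{-i}))] \;\ge\; \E_{\va\sim\pi^k}[Q_{i,h}^k(s,(a_i',\va_{-i}))] - \epsilon.
\]

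Then I would chain these with the exact-CCE property of $\pi^k$ on $(\hat{Q}_1,\dots,\hat{Q}_n)$, which yields $\E_{\va\sim\pi^k}[\hat{Q}_i(s,\va)]\ge\E_{\va\sim\pi^k}[\hat{Q}_i(s,(a_i',\va_{-i}))]$ for all $i$ and $a_i'$. Combining the three displays,
\[
    \E_{\va\sim\pi^k}[Q_{i,h}^k(s,\va)]
    \;\ge\; \E_{\va\sim\pi^k}[\hat{Q}_i(s,\va)] - \epsilon
    \;\ge\; \E_{\va\sim\pi^k}[\hat{Q}_i(s,(a_i',\va_{-i}))] - \epsilon
    \;\ge\; \E_{\va\sim\pi^k}[Q_{i,h}^k(s,(a_i',\va_{-i}))] - 2\epsilon,
\]
which is exactly the $2\epsilon$-approximate CCE inequality for the game $(Q_{1,h}^k(s,\cdot),\dots,Q_{n,h}^k(s,\cdot))$. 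Since the covering bound is uniform in $i$, $s$, and $\va$, the conclusion holds for every player, every deviation, every $s$, and every $(k,h)$.

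There is no deep obstacle here; the only point requiring care is the bookkeeping around the deviation operator. One must apply the uniform $\epsilon$-closeness to the \emph{deviated} profile $(a_i',\va_{-i})$ on the correct side of each inequality, so that the two $\epsilon$ errors add rather than cancel, and note that the marginal of $\pi^k$ over $\va_{-i}$ entering the deviation term is common to both games because $\pi^k$ is a single fixed distribution. Keeping the direction of each inequality aligned with its pointwise bound is essentially the whole content of the proof.
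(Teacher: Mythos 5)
Your proof is correct and is essentially the same argument as the paper's: both use the uniform $\epsilon$-closeness to sandwich the expected payoffs, invoke the exact CCE property of $\pi^k$ for the covering game $(\hat{Q}_1,\dots,\hat{Q}_n)$ in the middle step, and accumulate the two $\epsilon$ errors to get the $2\epsilon$-approximate CCE guarantee. The only cosmetic difference is that you quantify over pure deviations $a_i'$ while the paper phrases the deviation as the best-response policy $brq(\pi^k_{-i})\times\pi^k_{-i}$; these formulations are equivalent.
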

\begin{proof} When other player play with policy $\pi^k_{-i}$, we let $brq(\pi^k_{-i}) \times \pi^k_{-i}$ be the best response policy of player i with respect to the matrix game $(Q_{1,h}^k(s,\cdot),Q_{2,h}^k(s,\cdot),\cdots,Q_{n,h}^k(s,\cdot))$. Then,
    \begin{equation}
        \begin{aligned}
            & \E_{\va \in \pi^k(\va|s)}[Q_{i,h}^k(s,\va)] \\ & = \E_{\va \in \pi^k(\va|s)}[\hat{Q}_i(s,\va)]+\E_{\va \in \pi^k(\va|s)}[Q_{i,h}^k(s,\va)-\hat{Q}_i(s,\va)] \\
            & \geq \E_{\va \in \pi^k(\va|s)}[\hat{Q}_i(s,\va)] - \epsilon \\
            & \overset{(i)}{\geq} \E_{\va \in brq(\pi^k_{-i}) \times \pi^k_{-i}}[\hat{Q}_i(s,\va)] - \epsilon \\
            &= \E_{\va \in brq(\pi^k_{-i}) \times \pi^k_{-i}}[Q_{i,h}^k(s,\va)] +  
            \E_{\va \in brq(\pi^k_{-i}) \times \pi^k_{-i}}[\hat{Q}_i(s,\va)-Q_{i,h}^k(s,\va)] - \epsilon \\
            & \geq \E_{\va \in brq(\pi^k_{-i}) \times \pi^k_{-i}}[Q_{i,h}^k(s,\va)] - 2\epsilon
        \end{aligned}
    \end{equation}
    where $(i)$ holds by the fact that $\pi^k$ is the CCE of the of matrix game $(\hat{Q}_1(s,\cdot),\hat{Q}_2(s,\cdot),\cdots,\hat{Q}_n(s,\cdot))$.
\end{proof}

\begin{lemma} 
    \label{lemma Additional: UCB}
    For any policy $\pi$, based on the setting of Theorem 5.1 and Algorithm 1 and assume event $\mathcal{E}$ holds according to Lemma \ref{lemma: Self-normal upper bound}, we have for all $(i,s,\va,h,k)\in [n]\times\gS\times\gA\times[H]\times[K]$:
      \begin{equation}
        \begin{aligned}
            &|r_{i,h}(s,\va) + \langle\phi_{s\va},\hat{w}_{i,h}^k\rangle-Q_{i,h}^{\pi,\sigma}(s,\va)-(\inf_{P \in \sunP}P\VK[h+1] - \inf_{P \in \sunP}PV_{i,h+1}^{\pi,\sigma})|\\ & \leq \Gamma^i_{h,k}(s,\va) 
        \end{aligned}
      \end{equation}
\end{lemma}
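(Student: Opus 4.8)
The plan is to show that, after using the robust Bellman equation, the left-hand side collapses to a pure estimation error of the plug-in robust one-step operator, which is then controlled coordinate-wise by the self-normalized bound of Lemma~\ref{lemma: Self-normal upper bound}. First I would substitute $Q_{i,h}^{\pi,\sigma}(s,\va)=r_{i,h}(s,\va)+\inf_{P\in\sunP}PV_{i,h+1}^{\pi,\sigma}$ into the expression. The reward term and the term $\inf_{P\in\sunP}PV_{i,h+1}^{\pi,\sigma}$ both cancel, so the quantity inside the absolute value reduces exactly to $\langle\phi_{s\va},\hat{w}_{i,h}^k\rangle-\inf_{P\in\sunP}P\VK[h+1]$. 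Hence it suffices to prove $\big|\langle\phi_{s\va},\hat{w}_{i,h}^k\rangle-\inf_{P\in\sunP}P\VK[h+1]\big|\le\Gamma^i_{h,k}(s,\va)$, a statement that no longer involves $\pi$.

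Next I would expose the coordinate structure. Using the dual representation established in the main text (strong duality with Assumption~\ref{assumption:vanishing state}), write $\inf_{P\in\sunP}P\VK[h+1]=\langle\phi_{s\va},w_{i,h}^k\rangle$ where $w_{i,h,j}^k=\max_\alpha\{\nu_{i,h,j}(\alpha)-\sigma_i\alpha\}$ and $\nu_{i,h,j}(\alpha)=\E_{\mu_{h,j}^0}[[\VK[h+1]]_\alpha]$, while $\hat{w}_{i,h,j}^k=\max_\alpha\{\hat{\nu}_{i,h,j}^k(\alpha)-\sigma_i\alpha\}$. Because $\phi_j(s,\va)\ge 0$ by Assumption~\ref{linear mdp assumption}, and because $|\max_\alpha f-\max_\alpha g|\le\max_\alpha|f-g|$ with the common $\sigma_i\alpha$ term cancelling, I obtain $\big|\langle\phi_{s\va},\hat{w}_{i,h}^k-w_{i,h}^k\rangle\big|\le\sum_{j=1}^d\phi_j(s,\va)\max_\alpha\big|\hat{\nu}_{i,h,j}^k(\alpha)-\nu_{i,h,j}(\alpha)\big|$, which already matches the per-coordinate form of $\Gamma^i_{h,k}$.

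It then remains to bound the ridge-regression error $\hat{\nu}_{i,h,j}^k(\alpha)-\nu_{i,h,j}(\alpha)$ for each fixed $\alpha$ and $j$. Since $P_h^0[\VK[h+1]]_\alpha(s,\va)=\langle\phi_{s\va},\nu_{i,h}(\alpha)\rangle$, I would use $\sum_\tau\phi_h^\tau P_h^0[\VK[h+1]]_\alpha(s_h^\tau,\va_h^\tau)=(\Lambda_h^k-\lambda I)\nu_{i,h}(\alpha)$ to split the error into a noise term $(\Lambda_h^k)^{-1}\sum_\tau\phi_h^\tau\{[\VK[h+1]]_\alpha(s_{h+1}^\tau)-P_h^0[\VK[h+1]]_\alpha(s_h^\tau,\va_h^\tau)\}$ and a regularization term $-\lambda(\Lambda_h^k)^{-1}\nu_{i,h}(\alpha)$. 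Extracting the $j$-th coordinate via $\mathbf{1}_j^\top$ and applying Cauchy--Schwarz in the $(\Lambda_h^k)^{-1}$ inner product, the noise term is at most $\sqrt{\mathbf{1}_j^\top(\Lambda_h^k)^{-1}\mathbf{1}_j}\cdot\big\|\sum_\tau\phi_h^\tau\{\cdots\}\big\|_{(\Lambda_h^k)^{-1}}\le\beta_i\sqrt{\mathbf{1}_j^\top(\Lambda_h^k)^{-1}\mathbf{1}_j}$ on the event $\mathcal{E}_i$, and the regularization term is at most $\sqrt{\lambda d}\,\min\{\tfrac{1}{\sigma_i},H\}\sqrt{\mathbf{1}_j^\top(\Lambda_h^k)^{-1}\mathbf{1}_j}$, using $\Lambda_h^k\ge\lambda I$ and $|\nu_{i,h,j}(\alpha)|\le\min\{\tfrac{1}{\sigma_i},H\}$ (since $[\VK[h+1]]_\alpha\le\alpha\le\min\{\tfrac{1}{\sigma_i},H\}$). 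Summing over $j$ with weights $\phi_j(s,\va)$ and absorbing the lower-order regularization contribution into $\beta_i$ by choosing $c_\beta$ large (as in the proof of Lemma~\ref{lemma: Self-normal upper bound}) gives exactly $\sum_j\phi_j(s,\va)\beta_i\sqrt{\mathbf{1}_j^\top(\Lambda_h^k)^{-1}\mathbf{1}_j}=\Gamma^i_{h,k}(s,\va)$.

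The main obstacle is the noise term: both $\VK[h+1]$ and the maximizing clipping level $\alpha$ are data-dependent, so the self-normalized martingale bound cannot be applied to a single fixed summand and a naive union bound over a grid fails. This is precisely why Lemma~\ref{lemma: Self-normal upper bound} inflates $\beta_i$ by the log-covering numbers of the value-function class $\mathcal{V}$ and of $\alpha\in[0,\min\{\tfrac{1}{\sigma_i},H\}]$; I would invoke it to obtain the bound $\big\|\sum_\tau\phi_h^\tau\{\cdots\}\big\|_{(\Lambda_h^k)^{-1}}\le\beta_i$ \emph{uniformly} over the realized $\VK[h+1]$ and all admissible $\alpha$, which is what legitimizes the coordinate-wise Cauchy--Schwarz step above.
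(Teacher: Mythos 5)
Your proposal is correct and follows essentially the same route as the paper's proof: cancel the policy-dependent terms via the robust Bellman equation, reduce to $|\langle\phi_{s\va},\hat{w}_{i,h}^k-w_{i,h}^k\rangle|$ using the dual/linear representation, control the max over $\alpha$ coordinate-wise, split the ridge error into a $\lambda$-regularization bias and a self-normalized noise term, and bound each by Cauchy--Schwarz plus Lemma~\ref{lemma: Self-normal upper bound}. If anything, you are slightly more explicit than the paper in absorbing the $\sqrt{\lambda d}\,\min\{\tfrac{1}{\sigma_i},H\}$ regularization contribution into $\beta_i$ via the choice of $c_\beta$, a step the paper leaves implicit when summing the two bounds into $\Gamma^i_{h,k}$.
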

\begin{proof}
Firstly, by Assumption 3.2 and strong duality, there exist a vector $w_{i,h}^k=[\max_{\alpha}\{\E_{\mu_{h,j}^0}[[\VK[h+1]]_\alpha]-\sigma_i\alpha\}]_{j\in[d]}$, such that 
\begin{equation}
\label{equation: linearity of inf V^k}
    \inf_{P \in \sunP}P\VK[h+1]=\langle\phi_{s\va},w_{i,h}^k\rangle
\end{equation}
Then,
\begin{equation}
\label{equation: weights difference decomposition: relationship between r+Q^k and Q^pi}
\begin{aligned}
    r_{i,h}(s,\va)+\langle\phi_{s\va},\hat{w}_{i,h}^k\rangle- & \RQ[h](s,\va) = \langle \phi_{s,\va}, \hat{w}_{i,h}^k-w_{i,h}^k\rangle + \\& (\inf_{P \in \sunP}P\VK[h+1] - \inf_{P \in \sunP}PV_{i,h+1}^{\pi,\sigma})
\end{aligned}
\end{equation}
where we utilize Equation (\ref{equation: linearity of inf V^k}) and Robust Bellman Equation of policy $\pi$. It then suffice to bound the term $|\langle \phi_{s,\va}, \hat{w}_{i,h}^k-w_{i,h}^k\rangle|$. Since:
\begin{equation}
    \label{equation: weights difference docomposition}
    \begin{aligned}
        \hat{w}_{i,h}^k-w_{i,h}^k & \overset{(i)}{=}  [\max_{\alpha}\{\hat{\nu}_{i,h,j}^k(\alpha)-\sigma_i\alpha\}]_{j \in [d]}-
        [\max_{\alpha}\{\nu_{i,h,j}^k(\alpha)-\sigma_i\alpha\}]_{j \in [d]}
        \\ & \overset{(ii)}{\leq} [\max_{\alpha}\{\hat{\nu}_{i,h,j}^k(\alpha)-\nu_{i,h,j}^k(\alpha)\}]_{j \in [d]} \\
        & \overset{(iii)}{=} [\hat{\nu}_{i,h,j}^k(\alpha_j^\star)-\nu_{i,h,j}^k(\alpha_j^\star)]_{j \in [d]} \\
        & = [\mathbf{1}_j^T(\Lambda_h^k)^{-1}\sum_{\tau=1}^{k-1}\phi_h^\tau[\VK[h+1]]_{\alpha_j^\star}(s_{h+1}^\tau)-\mathbf{1}_j^T\E_{\mu_h^0}[[\VK[h+1]]_{{\alpha_j^\star}}]_{j \in [d]}\\
        &= \underbrace{[\lambda \mathbf{1}_j^T(\Lambda_h^k)^{-1} E_{\mu_h^0}[V_{i, h+1}^k]_{\alpha_j^\star}]_{j \in [d]}}_{\beta_1}+ \\
& \underbrace{[\mathbf{1}_j^T(\Lambda_h^k)^{-1} \sum_{\tau=1}^{k-1} \phi_h^\tau[[V_{i, h+1}^k]_{\alpha_{j}^\star}(s_{h+1}^\tau)-P_h^0[V_{i,h+1}^k]_{\alpha_{j}^\star}(s_h^\tau, a_h^\tau))]]_{j \in [d]}}_{\beta_2}
    \end{aligned}
\end{equation}
where $(i)$ follows from the definition of $\hat{w}_{i,h}^k$ and $w_{i,h}^k$. $(ii)$ holds by the shrinkage properties of max operator. And we choose $\alpha_j^\star=\argmax_{\alpha \in [0,H]}\{\hat{\nu}_{i,h,j}^k(\alpha)-\nu_{i,h,j}^k(\alpha)\}$ in $(iii)$. Then 
\begin{equation}
\label{equation: weight decomposition together with phi}
    \langle\phi_{s\va},\hat{w}_{i,h}^k-w_{i,h}^k\rangle \leq |\langle\phi_{s\va},\beta_1\rangle|+|\langle\phi_{s\va},\beta_2\rangle|
\end{equation}
since the vector $\phi_{s\va}$ is non-negative for any $(s,\va)\in \gS\times\gA$, we continue to bound the first term on the RHS of Equation (\ref{equation: weight decomposition together with phi}).
\begin{equation}
    \label{equation: weighted difference decomposition: bias bound}
    \begin{aligned}
        |\langle\phi_{s\va},\beta_1\rangle|&=|\sum_{j=1}^d\phi_j(s,\va)\lambda \mathbf{1}_j^T(\Lambda_h^k)^{-1} E_{\mu_h^0}[V_{i, h+1}^k]_{\alpha_j^\star}|\\
        & \overset{(i)}{\leq}\lambda\sum_{j=1}^d\sqrt{\phi_j(s,\va)\mathbf{1}_j^T(\Lambda_h^k)^{-1}\mathbf{1}_j\phi_j(s,\va)}||\E_{\mu_h^0}[\VK[h+1]]_{\alpha_j^\star}||_{(\Lambda_h^k)^{-1}} \\
        & \overset{(ii)}{\leq} \sqrt{\lambda d}\min\{\frac{1}{\sigma_i},H\}\sum_{j=1}^d\sqrt{\phi_j(s,\va)\mathbf{1}_j^T(\Lambda_h^k)^{-1}\mathbf{1}_j\phi_j(s,\va)}
    \end{aligned}
\end{equation}
Notice that we apply Cauchy–Schwarz inequality in $(i)$ and the fact that $\Lambda_h^k \geq \lambda I$ as well as the value of $\VK[h+1]$ is upper bounded by $\min\{\frac{1}{\sigma_i},H\}$ in $(ii)$. For the second term on the RHS of Equation (\ref{equation: weight decomposition together with phi}):
\begin{equation}
    \label{equation: weighted difference decomposition: unbiased bound}
    \begin{aligned}
 &|\langle\phi_{s\va},\beta_2\rangle|\\&=|\sum_{j=1}^d\phi_j(s,\va)\mathbf{1}_j^T(\Lambda_h^k)^{-1} \sum_{\tau=1}^{k-1} \phi_h^\tau[[V_{i, h+1}^k]_{\alpha_{j}^\star}(s_{h+1}^\tau)-P_h^0[V_{i,h+1}^k]_{\alpha_{j}^\star}(s_h^\tau, a_h^\tau))]| \\
    & \leq \sum_{j=1}^d\sqrt{\phi_j(s,\va)\mathbf{1}_j^T(\Lambda_h^k)^{-1}\mathbf{1}_j\phi_j(s,\va)} || \cdot \\ & \sum_{\tau=1}^{k-1} \phi_h^\tau\{[\VK[h+1]]_{\alpha_j^\star}(s_{h+1}^\tau)-P_h^0[\VK[h+1]]_{\alpha_j^\star}(s_h^\tau, \va_h^\tau)\}||_{(\Lambda_h^k)^{-1}}\\
    & \leq \Gamma^i_{h,k}(s,\va)
    \end{aligned}
\end{equation}
Here, we apply Lemma \ref{lemma: Self-normal upper bound} one the second inequality. Together with Bound (\ref{equation: weighted difference decomposition: bias bound}) and (\ref{equation: weighted difference decomposition: unbiased bound}), we arrive at:
$ \langle\phi_{s\va},\hat{w}_{i,h}^k-w_{i,h}^k\rangle \leq \Gamma^i_{h,k}(s,\va)$. By proof in the similar way, we can also prove $ \langle\phi_{s\va},w_{i,h}^k-\hat{w}_{i,h}^k \rangle \leq \Gamma^i_{h,k}(s,\va)$, which implies $ |\langle\phi_{s\va},\hat{w}_{i,h}^k-w_{i,h}^k\rangle| \leq \Gamma^i_{h,k}(s,\va)$. We then finish our proof by plugging our result into Equation (\ref{equation: weights difference decomposition: relationship between r+Q^k and Q^pi}).
\end{proof}

\section{Auxiliary results} 
\begin{lemma}(\cite{jin2020online_linear}, Lemma D.1)
\label{Axuiliary lemma: sum of phi}
     Let $\Lambda_t=\lambda \boldsymbol{I}+\sum_{i=1}^t \boldsymbol{\phi}_i \boldsymbol{\phi}_i^{\top}$, where $\boldsymbol{\phi}_i \in \mathbb{R}^d$ and $\lambda>0$. Then:
$$
\sum_{i=1}^t \phi_i^{\top}\left(\Lambda_t\right)^{-1} \phi_i \leq d
$$
\end{lemma}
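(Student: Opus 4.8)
The plan is to reduce the sum of quadratic forms to a single trace and then exploit the definition $\Lambda_t = \lambda \boldsymbol{I} + \sum_{i=1}^t \phi_i \phi_i^\top$ directly. First I would use the elementary fact that each summand is a scalar, hence equal to its own trace, together with the cyclic property of the trace:
\begin{equation*}
    \phi_i^\top (\Lambda_t)^{-1} \phi_i = \Tr\!\big( \phi_i^\top (\Lambda_t)^{-1} \phi_i \big) = \Tr\!\big( (\Lambda_t)^{-1} \phi_i \phi_i^\top \big).
\end{equation*}
Summing over $i \in [t]$ and invoking linearity of the trace collapses the whole sum into $\Tr\!\big( (\Lambda_t)^{-1} \sum_{i=1}^t \phi_i \phi_i^\top \big)$, so the entire statement is reduced to evaluating a single trace.

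Next I would substitute the identity $\sum_{i=1}^t \phi_i \phi_i^\top = \Lambda_t - \lambda \boldsymbol{I}$, which is immediate from the definition of $\Lambda_t$. This gives
\begin{equation*}
    \sum_{i=1}^t \phi_i^\top (\Lambda_t)^{-1} \phi_i = \Tr\!\big( (\Lambda_t)^{-1}(\Lambda_t - \lambda \boldsymbol{I}) \big) = \Tr(\boldsymbol{I}) - \lambda\,\Tr\!\big( (\Lambda_t)^{-1} \big) = d - \lambda\,\Tr\!\big( (\Lambda_t)^{-1} \big).
\end{equation*}
The final step is to argue that the subtracted quantity is nonnegative: since $\lambda > 0$ and $\Lambda_t$ is symmetric positive definite (it is $\lambda \boldsymbol{I}$ plus a sum of rank-one positive-semidefinite matrices), its inverse $(\Lambda_t)^{-1}$ is also positive definite, so $\Tr\!\big( (\Lambda_t)^{-1} \big) \geq 0$. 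Therefore $\sum_{i=1}^t \phi_i^\top (\Lambda_t)^{-1} \phi_i = d - \lambda\,\Tr\!\big( (\Lambda_t)^{-1} \big) \leq d$, which is exactly the claim.

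I expect no genuine obstacle here, as this is a standard computation (the ``trace trick'' plus the defining substitution). The only point demanding a moment of care is verifying the positive definiteness of $(\Lambda_t)^{-1}$ so that the trace term is guaranteed nonnegative; once that is in place the inequality is tight up to the $\lambda\,\Tr((\Lambda_t)^{-1})$ slack and follows immediately.
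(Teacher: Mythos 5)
Your proof is correct and is essentially the standard argument for this result: the paper does not prove the lemma itself but cites Lemma D.1 of \cite{jin2020online_linear}, whose proof is the same trace computation $\sum_i \phi_i^\top \Lambda_t^{-1}\phi_i = \Tr\big(\Lambda_t^{-1}(\Lambda_t - \lambda I)\big) = d - \lambda \Tr(\Lambda_t^{-1}) \leq d$, using positive definiteness of $\Lambda_t$ exactly as you do.
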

\begin{lemma}(\cite{jin2020online_linear},Covering Number of Euclidean Ball) 
For any $\epsilon>0$, the $\epsilon$-covering number of the Euclidean ball in $\mathbb{R}^d$ with radius $R>0$ is upper bounded by $(1+2 R / \epsilon)^d$.
\label{Axuiliary: euclidean covering number}
\end{lemma}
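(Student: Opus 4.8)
The plan is to use the classical volumetric packing argument. First I would construct a \emph{maximal} $\epsilon$-separated subset $\mathcal{N} = \{x_1,\dots,x_N\}$ of the ball $B := \{x \in \mathbb{R}^d : \|x\|_2 \le R\}$, meaning $\|x_i - x_j\|_2 > \epsilon$ for all $i \ne j$ and $\mathcal{N}$ cannot be enlarged while preserving this separation. Such a set is finite because $B$ is bounded. The key observation is that maximality forces $\mathcal{N}$ to be an $\epsilon$-net: if some $y \in B$ satisfied $\|y - x_i\|_2 > \epsilon$ for every $i$, then $\mathcal{N} \cup \{y\}$ would still be $\epsilon$-separated, contradicting maximality. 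Hence every point of $B$ lies within distance $\epsilon$ of some $x_i$, so the $\epsilon$-covering number is at most $N = |\mathcal{N}|$.

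Next I would bound $N$ by comparing volumes. Since the points of $\mathcal{N}$ are pairwise more than $\epsilon$ apart, the balls $B(x_i,\epsilon/2)$ are pairwise disjoint. Moreover, each such ball is contained in the enlarged ball $B(0, R+\epsilon/2)$, because $\|x_i\|_2 \le R$ forces every point within $\epsilon/2$ of $x_i$ to have norm at most $R + \epsilon/2$. Writing $V_d$ for the Lebesgue volume of the unit ball in $\mathbb{R}^d$ and using that a ball of radius $r$ has volume $V_d r^d$, disjointness and containment together yield $N \cdot V_d (\epsilon/2)^d \le V_d (R+\epsilon/2)^d$.

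Finally I would cancel the common factor $V_d (\epsilon/2)^d$, obtaining $N \le \big((R+\epsilon/2)/(\epsilon/2)\big)^d = (1 + 2R/\epsilon)^d$, which is exactly the claimed bound. There is no genuinely difficult step: the two points that require care are the maximality argument that converts an $\epsilon$-packing into an $\epsilon$-cover, and the homogeneity of Lebesgue measure under dilation, which supplies the clean $r^d$ scaling so that the unknown constant $V_d$ cancels without ever needing its explicit value.
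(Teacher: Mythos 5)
Your proof is correct and is precisely the standard volumetric packing-to-covering argument; the paper itself gives no proof of this lemma, citing \cite{jin2020online_linear}, whose proof is the same maximal $\epsilon$-separated set plus volume comparison that you carry out. Both steps you flag as needing care (maximality turning a packing into a cover, and the $r^d$ scaling cancelling $V_d$) are handled correctly, so nothing is missing.
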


\begin{lemma}(Azuma-Hoeffding inequality) 
Let $\left(Z_t\right)_{t \in \mathbb{Z}_{+}}$be a martingale with respect to the filtration $\left(\mathcal{F}_t\right)_{t \in \mathbb{Z}_{+}}$. Assume that there are predictable processes $\left(A_t\right)$ and $\left(B_t\right)$ (i.e., $A_t, B_t \in \mathcal{F}_{t-1}$ ) and constants $0<c_t<+\infty$ such that: for all $t \geq 1$, almost surely,
$$
A_t \leq Z_t-Z_{t-1} \leq B_t \quad \text { and } \quad B_t-A_t \leq c_t
$$

Then for all $\beta>0$
$$
\mathbb{P}\left[Z_t-Z_0 \geq \beta\right] \leq \exp \left(-\frac{2 \beta^2}{\sum_{i \leq t} c_i^2}\right)
$$
\label{Axuiliary lemma:Azuma}
\end{lemma}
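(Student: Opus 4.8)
The plan is to prove this concentration bound via the standard exponential-moment (Chernoff) method combined with a conditional form of Hoeffding's lemma, applied iteratively along the filtration. First I would fix a free parameter $\lambda > 0$ and apply Markov's inequality to the nonnegative random variable $e^{\lambda(Z_t - Z_0)}$, giving
\[
\mathbb{P}[Z_t - Z_0 \geq \beta] \leq e^{-\lambda\beta}\,\E\!\left[e^{\lambda(Z_t - Z_0)}\right],
\]
so that the entire argument reduces to controlling the moment generating function of the increment $Z_t - Z_0 = \sum_{i=1}^t D_i$, where $D_i := Z_i - Z_{i-1}$ is the $i$-th martingale difference.

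Next I would peel off the final increment by conditioning on $\mathcal{F}_{t-1}$. Since $e^{\lambda \sum_{i=1}^{t-1} D_i}$ is $\mathcal{F}_{t-1}$-measurable, the tower property yields
\[
\E\!\left[e^{\lambda \sum_{i=1}^t D_i}\right] = \E\!\left[e^{\lambda \sum_{i=1}^{t-1} D_i}\,\E\!\left[e^{\lambda D_t}\mid \mathcal{F}_{t-1}\right]\right].
\]
The key step is then to bound the conditional MGF $\E[e^{\lambda D_t}\mid\mathcal{F}_{t-1}]$ pointwise. Because $A_t \leq D_t \leq B_t$ with $A_t, B_t \in \mathcal{F}_{t-1}$, $B_t - A_t \leq c_t$, and the martingale property forces $\E[D_t \mid \mathcal{F}_{t-1}] = 0$, Hoeffding's lemma gives $\E[e^{\lambda D_t}\mid\mathcal{F}_{t-1}] \leq \exp(\lambda^2 c_t^2 / 8)$ almost surely. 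Substituting this deterministic bound and iterating the conditioning downward from $t$ to $Z_0$ collapses the product of conditional MGFs into the closed form $\exp\!\big(\tfrac{\lambda^2}{8}\sum_{i\leq t} c_i^2\big)$.

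Finally I would optimize over $\lambda$. Combining the estimates gives $\mathbb{P}[Z_t - Z_0 \geq \beta] \leq \exp\!\big(-\lambda\beta + \tfrac{\lambda^2}{8}\sum_{i\leq t} c_i^2\big)$, and the choice $\lambda = 4\beta / \sum_{i\leq t} c_i^2$ minimizes the exponent and produces exactly the claimed tail $\exp\!\big(-2\beta^2 / \sum_{i\leq t} c_i^2\big)$. I expect the main obstacle to be the conditional version of Hoeffding's lemma: one must check that the classical MGF bound for a bounded, mean-zero random variable transfers verbatim when everything is conditioned on $\mathcal{F}_{t-1}$ and the endpoints $A_t, B_t$ are themselves random but predictable. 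This is handled by working at a fixed realization of $\mathcal{F}_{t-1}$, invoking convexity of $x \mapsto e^{\lambda x}$ on $[A_t, B_t]$ via $e^{\lambda x} \leq \frac{B_t - x}{B_t - A_t}e^{\lambda A_t} + \frac{x - A_t}{B_t - A_t}e^{\lambda B_t}$, taking conditional expectations to annihilate the linear term, and bounding the resulting function of $\lambda$ by a second-order Taylor estimate to extract the $\exp(\lambda^2 c_t^2/8)$ factor.
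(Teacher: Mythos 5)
Your proof is correct: the Chernoff bound, the tower-property peeling, the conditional Hoeffding's lemma giving $\E[e^{\lambda D_t}\mid\mathcal{F}_{t-1}]\leq \exp(\lambda^2 c_t^2/8)$, and the optimization $\lambda = 4\beta/\sum_{i\leq t}c_i^2$ all check out and yield exactly the stated tail. The paper states this as a classical auxiliary lemma without proof, so there is no in-paper argument to compare against; yours is the standard derivation and the constants are right.
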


\begin{lemma}(Concentration of Self-Normalized Processes)(\cite{abbasi2011improved}, Theorem 1)
Let $\left\{\epsilon_t\right\}_{t=1}^{\infty}$ be a real-valued stochastic process with corresponding filtration $\left\{\mathcal{F}_t\right\}_{t=0}^{\infty}$. Let $\epsilon_t \mid \mathcal{F}_{t-1}$ be mean-zero and $\sigma$ subGaussian; i.e. $\mathbb{E}\left[\epsilon_t \mid \mathcal{F}_{t-1}\right]=0$, and
$$
\forall \lambda \in \mathbb{R}, \quad \mathbb{E}\left[e^{\lambda \epsilon_t} \mid \mathcal{F}_{t-1}\right] \leq e^{\lambda^2 \sigma^2 / 2}
$$

Let $\left\{\boldsymbol{\phi}_t\right\}_{t=1}^{\infty}$ be an $\mathbb{R}^d$-valued stochastic process where $\phi_t$ is $\mathcal{F}_{t-1}$ measurable. Assume $\Lambda_0$ is a $d \times d$ positive definite matrix, and let $\Lambda_t=\Lambda_0+\sum_{s=1}^t \phi_s \phi_s^{\top}$. Then for any $\delta>0$, with probability at least $1-\delta$, we have for all $t \geq 0$ :
$$
\left\|\sum_{s=1}^t \phi_s \epsilon_s\right\|_{\Lambda_t^{-1}}^2 \leq 2 \sigma^2 \log \left[\frac{\operatorname{det}\left(\Lambda_t\right)^{1 / 2} \operatorname{det}\left(\Lambda_0\right)^{-1 / 2}}{\delta}\right]
$$
\label{Axuiliary: Concentration of Self-Normalized Process}
\end{lemma}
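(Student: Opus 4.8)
The plan is to establish the uniform high-probability bound by a covering-net argument, because the estimate $V_{i,h+1}^k$ is itself a measurable function of the data $\{(s_{h'}^\tau,\va_{h'}^\tau)\}_{\tau\le k-1}$; consequently the summands $[V_{i,h+1}^k]_\alpha(s_{h+1}^\tau)-P_h^0[V_{i,h+1}^k]_\alpha(s_h^\tau,\va_h^\tau)$ do \emph{not} form a martingale difference sequence adapted to the natural filtration, and the self-normalized concentration inequality cannot be applied directly. First I would fix $i$ and discretize twice: build an $\epsilon$-net $\mathcal{C}_V$ of the value-function class $\mathcal{V}$ and an $\epsilon$-net $\mathcal{C}_\alpha$ of the dual interval $[0,\min\{1/\sigma_i,H\}]$. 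The net $\mathcal{C}_V$ is inherited from the covering of the $Q$-class in Lemma \ref{additional: covering number analysis} (whose parameter bounds come from Lemma \ref{additional:weight bound}) together with the reduction, established immediately before the proof, that $\sup_s|V_i(s)-\hat V_i(s)|\le\epsilon$ whenever the underlying $Q$-functions are $\epsilon$-close; $\mathcal{C}_\alpha$ comes from Lemma \ref{Axuiliary: euclidean covering number}. Denote the two cardinalities by $\mathcal{N}_\epsilon^V$ and $\mathcal{N}_\epsilon^\alpha$.

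The second step is to apply the self-normalized concentration inequality (Lemma \ref{Axuiliary: Concentration of Self-Normalized Process}) at a fixed net point $(\hat V,\hat\alpha)$. For deterministic $\hat V,\hat\alpha$ the quantity $\epsilon_\tau=[\hat V]_{\hat\alpha}(s_{h+1}^\tau)-P_h^0[\hat V]_{\hat\alpha}(s_h^\tau,\va_h^\tau)$ is mean-zero conditioned on the filtration generated by the trajectory through $(s_h^\tau,\va_h^\tau)$ and is bounded in absolute value by $\min\{1/\sigma_i,H\}$ via the shrinkage bound (Lemma \ref{lemma: Shrinkage of Robust Value Function}), hence subGaussian. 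Invoking Lemma \ref{Axuiliary: Concentration of Self-Normalized Process} with confidence level $\delta/(3n\mathcal{N}_\epsilon^V\mathcal{N}_\epsilon^\alpha)$ and union-bounding over the two nets controls $\|\sum_\tau\phi_h^\tau\epsilon_\tau\|_{(\Lambda_h^k)^{-1}}^2$ uniformly over $\mathcal{C}_V\times\mathcal{C}_\alpha$ by $2\min\{1/\sigma_i,H\}^2\log(\det(\Lambda_h^k)^{1/2}\lambda^{-d/2}\cdot 3n\mathcal{N}_\epsilon^V\mathcal{N}_\epsilon^\alpha/\delta)$, where $\det(\Lambda_h^k)\le(\lambda+k)^d$ since the eigenvalues of $\Lambda_h^k$ lie in $[\lambda,\lambda+k]$.

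Next I would transfer this to the actual data-dependent pair $(V_{i,h+1}^k,\alpha)$ by the inequality $(a+b)^\top A(a+b)\le 2a^\top Aa+2b^\top Ab$ applied to the nearest net point: the leading piece is the net-point term just bounded, and the residual involves $[V_{i,h+1}^k]_\alpha-[\hat V]_{\hat\alpha}$, which is at most $2\epsilon$ in sup-norm, so its self-normalized norm is crudely bounded by $O(k^2\epsilon^2/\lambda)$ using $\Lambda_h^k\succeq\lambda I$. Substituting $\log\mathcal{N}_\epsilon^V\lesssim nd\log(1+\tfrac{L}{\epsilon})+d\log(1+\tfrac{\sqrt d\beta^2}{\lambda\epsilon^2})$ and $\log\mathcal{N}_\epsilon^\alpha\lesssim\log(H/\epsilon)$, and then choosing $\lambda=1$ and $\epsilon=1/(kH)$, makes the discretization residual $O(1)$ and collapses every logarithmic factor into $\log(ndHK/\delta)$, giving a bound of order $nd\min\{1/\sigma_i,H\}^2\log(ndHK/\delta)$ on the squared self-normalized norm for every $(k,h,\alpha)$. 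Choosing the constant $c_\beta$ in $\beta_i$ large enough — concretely so that $432(1+\log c_\beta)\le c_\beta$ — makes this upper bound exactly $\beta_i^2$, so $\mathcal{E}_i$ holds with probability at least $1-\delta/(3n)$; a final union bound over $i\in[n]$ yields $P(\cap_{i=1}^n\mathcal{E}_i)\ge 1-\delta/3$.

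The main obstacle is the dependency noted in the first paragraph: because $V_{i,h+1}^k$ is measurable with respect to the very samples indexed by $\tau\le k-1$, the entire argument rests on decoupling this dependence through the covering net while keeping the discretization error negligible, which is what forces the somewhat delicate choice $\epsilon=1/(kH)$ and the bookkeeping of covering numbers. A secondary subtlety is that the bound must hold uniformly over the dual variable $\alpha$, since the robust Bellman recursion maximizes over $\alpha\in[0,\min\{1/\sigma_i,H\}]$; this is precisely why the extra net $\mathcal{C}_\alpha$ is introduced and why the shrinkage bound (rather than the trivial bound $H$) is invoked, so that the final rate carries the sharper factor $\min\{1/\sigma_i,H\}$.
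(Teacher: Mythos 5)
Your proposal does not address the statement you were asked to prove. The statement is the self-normalized concentration inequality of \cite{abbasi2011improved} (their Theorem~1), a general result about a real-valued conditionally subGaussian martingale difference sequence $\{\epsilon_t\}$ and a predictable $\mathbb{R}^d$-valued sequence $\{\phi_t\}$. What you have written is instead a proof sketch for the paper's Lemma~\ref{lemma: Self-normal upper bound} (the ``Upper Bound for Self-Normalizing Process'' event $\mathcal{E}_i$, with the covering nets $\mathcal{C}_V$, $\mathcal{C}_\alpha$, the choice $\epsilon = 1/(kH)$, and the constant $c_\beta$). Worse, your argument explicitly \emph{invokes} Lemma~\ref{Axuiliary: Concentration of Self-Normalized Process} ``at a fixed net point'' as a black box, so even if one tried to read it as a proof of the target statement it would be circular. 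None of the machinery you deploy --- value-function classes, dual variables $\alpha$, the shrinkage bound, covering numbers of $\mathcal{Q}$ --- appears in, or is relevant to, the statement at hand, which makes no reference to the Markov game structure at all.

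A genuine proof of the stated lemma requires the method of mixtures (pseudo-maximization). Concretely: for each fixed $\lambda \in \mathbb{R}^d$, the conditional subGaussianity of $\epsilon_t$ and the $\mathcal{F}_{t-1}$-measurability of $\phi_t$ imply that
$M_t^\lambda = \exp\bigl(\tfrac{1}{\sigma^2}\langle \lambda, S_t\rangle - \tfrac{1}{2\sigma^2}\sum_{s\le t}\langle\lambda,\phi_s\rangle^2\bigr)$, with $S_t=\sum_{s\le t}\phi_s\epsilon_s$, is a nonnegative supermartingale with $\mathbb{E}[M_t^\lambda]\le 1$; integrating $M_t^\lambda$ against the Gaussian density $N(0,\sigma^2\Lambda_0^{-1})$ in $\lambda$ produces a mixture supermartingale whose closed form is $\det(\Lambda_0)^{1/2}\det(\Lambda_t)^{-1/2}\exp\bigl(\tfrac{1}{2\sigma^2}\|S_t\|_{\Lambda_t^{-1}}^2\bigr)$, and Ville's maximal inequality for nonnegative supermartingales then yields the claimed bound uniformly over all $t\ge 0$. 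The uniformity in $t$ from a single application --- with no union bound over time --- is exactly what the mixture construction buys, and it is absent from your sketch. If the intent was to prove Lemma~\ref{lemma: Self-normal upper bound}, your outline does track the paper's argument for that result closely, but that is a different statement from the one posed here.
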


\begin{lemma}(Elliptical Potential,Lemma 11 of \cite{abbasi2011improved})
\label{lemma: Auxiliary: elliptical potential lemma}
Let $\left\{X_t\right\}_{t=1}^{\infty}$ be a sequence in $\mathbb{R}^d, V$ a $d \times d$ positive definite matrix and define $\bar{V}_t=V+\sum_{s=1}^t X_s X_s^{\top}$. Then, we have that
\begin{equation}
 \log \left(\frac{\operatorname{det}\left(\bar{V}_n\right)}{\operatorname{det}(V)}\right) \leq \sum_{t=1}^n\left\|X_t\right\|_{\bar{V}_{t-1}^{-1}}^2   
\end{equation}

Further, if $\left\|X_t\right\|_2 \leq L$ for all $t$, then
\begin{equation}
\begin{aligned}
       & \sum_{t=1}^n \min \left\{1,\left\|X_t\right\|_{\bar{V}_{t-1}^{-1}}^2\right\} \\ & \leq 2\left(\log \operatorname{det}\left(\bar{V}_n\right)-\log \operatorname{det} V\right) \\ & \leq 2\left(d \log \left(\left(\operatorname{trace}(V)+n L^2\right) / d\right)-\log \operatorname{det} V\right)
\end{aligned}
\end{equation}

and finally, if $\lambda_{\min }(V) \geq \max \left(1, L^2\right)$ then
\begin{equation}
   \sum_{t=1}^n\left\|X_t\right\|_{\bar{V}_{t-1}^{-1}}^2 \leq 2 \log \frac{\operatorname{det}\left(\bar{V}_n\right)}{\operatorname{det}(V)} 
\end{equation}
\end{lemma}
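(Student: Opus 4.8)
The plan is to derive all three inequalities from a single telescoping identity for $\log\det$, built on the rank-one update $\bar{V}_t = \bar{V}_{t-1} + X_t X_t^\top$ (with $\bar{V}_0 = V$). Since $V \succ 0$ forces every $\bar{V}_t \succ 0$, the matrix determinant lemma applies at each step, giving $\det(\bar{V}_t) = \det(\bar{V}_{t-1})\,(1 + X_t^\top \bar{V}_{t-1}^{-1} X_t) = \det(\bar{V}_{t-1})\,(1 + \|X_t\|_{\bar{V}_{t-1}^{-1}}^2)$. Taking logarithms and telescoping over $t=1,\dots,n$ yields the master identity $\log(\det(\bar{V}_n)/\det(V)) = \sum_{t=1}^n \log(1 + \|X_t\|_{\bar{V}_{t-1}^{-1}}^2)$, from which every claim follows by an elementary scalar inequality.

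For the first inequality I would apply $\log(1+x) \le x$, valid for all $x \ge 0$, to each summand of the master identity; this immediately bounds the left-hand side by $\sum_{t=1}^n \|X_t\|_{\bar{V}_{t-1}^{-1}}^2$.

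For the second claim I would instead use the complementary bound $x \le 2\log(1+x)$ on $[0,1]$. Writing $a_t := \|X_t\|_{\bar{V}_{t-1}^{-1}}^2$, a two-case check gives $\min\{1, a_t\} \le 2\log(1 + a_t)$ for every $a_t \ge 0$: when $a_t \le 1$ this is the inequality above, and when $a_t > 1$ one has $2\log(1+a_t) > 2\log 2 > 1 = \min\{1,a_t\}$. Summing and inserting the master identity yields the first line $\sum_t \min\{1, a_t\} \le 2(\log\det\bar{V}_n - \log\det V)$. For the second line I bound $\det\bar{V}_n$ by AM--GM on its eigenvalues, $\det\bar{V}_n \le (\operatorname{trace}(\bar{V}_n)/d)^d$, together with $\operatorname{trace}(\bar{V}_n) = \operatorname{trace}(V) + \sum_t \|X_t\|_2^2 \le \operatorname{trace}(V) + nL^2$, so that $\log\det\bar{V}_n \le d\log((\operatorname{trace}(V)+nL^2)/d)$.

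For the final claim I would show that the truncation is inactive under $\lambda_{\min}(V) \ge \max(1, L^2)$. Since $\bar{V}_{t-1} \succeq V$, we get $\lambda_{\min}(\bar{V}_{t-1}) \ge \lambda_{\min}(V) \ge \max(1,L^2)$, hence $a_t = X_t^\top \bar{V}_{t-1}^{-1} X_t \le \|X_t\|_2^2/\lambda_{\min}(\bar{V}_{t-1}) \le L^2/\max(1,L^2) \le 1$. Thus $\min\{1,a_t\} = a_t$, and substituting into the first line of the second claim gives $\sum_t \|X_t\|_{\bar{V}_{t-1}^{-1}}^2 \le 2\log(\det\bar{V}_n/\det V)$. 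None of the steps is genuinely hard; the only points demanding care are verifying the two-sided scalar inequalities $\log(1+x)\le x$ and $x \le 2\log(1+x)$ (the latter only on $[0,1]$, which is exactly why the case split on $a_t$ is needed) and checking that positive-definiteness of $\bar{V}_{t-1}$ is preserved throughout, so that both the determinant lemma and the norm $\|\cdot\|_{\bar{V}_{t-1}^{-1}}$ remain well defined.
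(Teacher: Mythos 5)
Your proof is correct. The paper does not prove this lemma at all---it is quoted verbatim from the cited reference (Abbasi-Yadkori et al.\ 2011, Lemma 11)---but your argument via the rank-one determinant update $\det(\bar{V}_t)=\det(\bar{V}_{t-1})(1+\|X_t\|_{\bar{V}_{t-1}^{-1}}^2)$, the telescoped $\log\det$ identity, the scalar bounds $\log(1+x)\le x$ and $\min\{1,x\}\le 2\log(1+x)$, the AM--GM step for the trace bound, and the observation that $\lambda_{\min}(V)\ge\max(1,L^2)$ deactivates the truncation is exactly the standard proof from that reference, and every step checks out.
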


\begin{lemma}
(Equivalent expression of TV robust set with vanishing minimal value \cite{lu2024Dr_Interactive_Data_Collection}, Proposition 4.2). For any function $V: \mathcal{S} \mapsto[0, H]$ with $\min _{s \in \mathcal{S}} V(s)=0$, we have that
$$
\mathbb{E}_{\mathcal{P}_\rho\left(s, a ; P_h^{\star}\right)}[V]=\rho^{\prime} \cdot \mathbb{E}_{\mathcal{B}_{\rho^{\prime}}\left(s, a ; P_h^{\star}\right)}[V], \quad \text { with } \quad \rho^{\prime}=1-\frac{\rho}{2}>0
$$
where the total-variation robust set $\mathcal{P}_\rho\left(s, a ; P_h^{\star}\right)$ is defined as $\mathcal{P}_\rho(s, a ; P)=\left\{\widetilde{P}(\cdot) \in \Delta(\mathcal{S}): D_{\mathrm{TV}}(\widetilde{P}(\cdot) \| P(\cdot \mid s, a)) \leq \rho\right\}$ and the set $\mathcal{B}_{\rho^{\prime}}\left(s, a ; P_h^{\star}\right)$ is defined as 
$$
\mathcal{B}_{\rho^{\prime}}\left(s, a ; P_h^{\star}\right)=\left\{\widetilde{P}(\cdot) \in \Delta(\mathcal{S}): \sup _{s^{\prime} \in \mathcal{S}} \frac{\widetilde{P}\left(s^{\prime}\right)}{P_h^{\star}\left(s^{\prime} \mid s, a\right)} \leq \frac{1}{\rho^{\prime}}\right\}
$$
\label{lemma: Auxiliary: Equivalent expression of TV robust set}
\end{lemma}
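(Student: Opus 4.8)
The statement is a duality-type identity for the total-variation robust expectation; I read $\mathbb{E}_{\mathcal{P}_\rho(s,a;P)}[V]$ as the worst-case value $\inf_{\tilde P \in \mathcal{P}_\rho}\mathbb{E}_{\tilde P}[V]$, and likewise for $\mathcal{B}_{\rho'}$, so the goal is the scalar identity $\inf_{\tilde P\in\mathcal P_\rho}\mathbb E_{\tilde P}[V]=\rho'\inf_{\tilde Q\in\mathcal B_{\rho'}}\mathbb E_{\tilde Q}[V]$, where $P=P_h^\star(\cdot\mid s,a)$. The plan is to prove the two inequalities separately by explicit transport constructions, invoking the hypothesis in two places: that there is a state $s^\star$ with $V(s^\star)=\min_s V(s)=0$, and that $V\ge 0$ everywhere. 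Throughout I use the identity $\int\min(\tilde P,P)=1-\tfrac12\int|\tilde P-P|$ together with the convention (which is exactly what forces $\rho'=1-\rho/2$) that $D_{\mathrm{TV}}(\tilde P\|P)=\int|\tilde P-P|$.

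For the $\le$ direction I would lift any $\tilde Q\in\mathcal B_{\rho'}$ to a TV-feasible distribution. Set $\hat P:=\rho'\tilde Q+(1-\rho')\delta_{s^\star}$, which is a probability measure. Because $\tilde Q(s')\le \tfrac{1}{\rho'}P(s')$, we get $\hat P(s')=\rho'\tilde Q(s')\le P(s')$ off $s^\star$, and since both masses sum to one this forces $\hat P(s^\star)\ge P(s^\star)$; hence the only state where $\hat P$ exceeds $P$ is $s^\star$, so $D_{\mathrm{TV}}(\hat P\|P)$ collapses to twice the removed mass, and a short computation reduces the requirement $D_{\mathrm{TV}}(\hat P\|P)\le\rho$ exactly to the ratio bound $\rho'\tilde Q(s^\star)\le P(s^\star)$, which holds by assumption. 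Then $\mathbb E_{\hat P}[V]=\rho'\mathbb E_{\tilde Q}[V]+(1-\rho')V(s^\star)=\rho'\mathbb E_{\tilde Q}[V]$ since $V(s^\star)=0$, and taking infima gives $\inf_{\mathcal P_\rho}\mathbb E[V]\le\rho'\inf_{\mathcal B_{\rho'}}\mathbb E[V]$.

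For the $\ge$ direction I would project any $\tilde P\in\mathcal P_\rho$ into $\mathcal B_{\rho'}$. Define $\tilde Q_0:=\tfrac{1}{\rho'}\min(\tilde P,P)$ pointwise; it satisfies the ratio constraint by construction, and its total mass is $\tfrac{1}{\rho'}\int\min(\tilde P,P)=\tfrac{1}{\rho'}\big(1-\tfrac12 D_{\mathrm{TV}}(\tilde P\|P)\big)\ge\tfrac{1}{\rho'}\cdot\rho'=1$. I would then trim the excess mass $\int\tilde Q_0-1\ge 0$ off the highest-$V$ states to obtain a genuine $\tilde Q\in\mathcal B_{\rho'}$; since $V\ge 0$, removing mass only decreases the weighted average, so $\mathbb E_{\tilde Q}[V]\le\mathbb E_{\tilde Q_0}[V]$. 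Finally $\rho'\mathbb E_{\tilde Q_0}[V]=\int\min(\tilde P,P)\,V\le\int\tilde P\,V=\mathbb E_{\tilde P}[V]$ using $\min(\tilde P,P)\le\tilde P$ and $V\ge 0$. Chaining these yields $\rho'\inf_{\mathcal B_{\rho'}}\mathbb E[V]\le\mathbb E_{\tilde P}[V]$ for every $\tilde P\in\mathcal P_\rho$, hence the reverse inequality, and the two bounds combine to the claimed equality.

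The step I expect to be the main obstacle is the feasibility bookkeeping in the $\le$ direction: making the total-variation budget computation airtight, i.e. correctly accounting that all displaced mass is absorbed at the single state $s^\star$ so that the TV constraint degenerates precisely into the likelihood-ratio bound. A measure-theoretic caveat for continuous $\mathcal S$ is that the minimizer $s^\star$ must exist; if $V$ does not attain its infimum one replaces $\delta_{s^\star}$ by a sequence of near-minimizers supported on $\{V\le\epsilon\}$ and passes to the limit $\epsilon\downarrow 0$. An alternative route is to dualize both robust expectations --- the TV side gives the familiar $\max_{\alpha\in[0,H]}\{\mathbb E_P[\min(V,\alpha)]-\rho\alpha\}$ already used in the main text --- and match the two dual programs, but the primal transport argument above is more transparent and avoids re-deriving strong duality.
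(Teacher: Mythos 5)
Your reading of $\mathbb{E}_{\mathcal{P}_\rho}[V]$ as $\inf_{\widetilde P\in\mathcal{P}_\rho}\mathbb{E}_{\widetilde P}[V]$ is the intended one, and your two-sided transport argument is correct. Note, however, that the paper itself does not prove this lemma at all: it is imported verbatim from Lu et al.\ (2024, Proposition 4.2) and used as a black box in the proof of Proposition \ref{proposition: equivalence of minimal value assumptions}, so there is no internal proof to compare against. What you supply is therefore a self-contained derivation of the cited result. Both directions check out: the lift $\widetilde Q\mapsto\rho'\widetilde Q+(1-\rho')\delta_{s^\star}$ is TV-feasible exactly because the only excess over $P$ sits at $s^\star$ (the computation gives $D_{\mathrm{TV}}=2[\rho'\widetilde Q(s^\star)-P(s^\star)]+2(1-\rho')\le\rho$), and contributes no value since $V(s^\star)=0$; the projection $\widetilde Q_0=\tfrac{1}{\rho'}\min(\widetilde P,P)$ has mass at least one, satisfies the ratio bound, and trimming (or simply normalizing by $\|\widetilde Q_0\|_1\ge 1$, which also preserves the ratio constraint) only decreases $\int V\,d\widetilde Q$ because $V\ge 0$. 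The hypotheses are used in exactly the right places. Two caveats worth recording. First, the identity with $\rho'=1-\rho/2$ holds only under the un-halved convention $D_{\mathrm{TV}}(\widetilde P\|P)=\int|\widetilde P-P|$, which you correctly adopt; this is in tension with Definition \ref{D-rectangularity} of the paper, which includes the factor $\tfrac12$, and under that convention the correct scaling would be $\rho'=1-\rho$ (this discrepancy propagates into the paper's own statement of Proposition \ref{proposition: equivalence of minimal value assumptions}, where the factor appears as $\sigma_i$ rather than $1-\sigma_i$). Second, your remark about non-attainment of $\min_s V(s)$ on a continuous $\mathcal{S}$ is the right fix (approximate minimizers on $\{V\le\epsilon\}$ and let $\epsilon\downarrow 0$); the paper's Assumption \ref{assumption:vanishing state} asserts the minimum is exactly zero, so attainment is implicitly assumed there.
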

\end{document}